\documentclass[twoside]{article}

\usepackage[preprint]{aistats2027}

\usepackage[numbers]{natbib}

\usepackage[utf8]{inputenc} \usepackage[T1]{fontenc}    \usepackage{hyperref}       \usepackage{url}            \usepackage{booktabs}       \usepackage{amsfonts}       \usepackage{nicefrac}       \usepackage{microtype}      \usepackage{xcolor}         \usepackage{tikz}

\usepackage{amssymb,amsmath,amsfonts,amsthm,commath}
\usepackage[algo2e,ruled]{algorithm2e}
\usepackage{multirow}
\usepackage[capitalise]{cleveref}
\usepackage{enumitem}
\usepackage{wrapfig}
\usepackage{colortbl}

\theoremstyle{plain}
\newtheorem{theorem}{Theorem}
\newtheorem{proposition}{Proposition}
\newtheorem{lemma}{Lemma}
\newtheorem{corollary}{Corollary}
\theoremstyle{definition}
\newtheorem{definition}{Definition}

\theoremstyle{remark}
\newtheorem{remark}{Remark}

\usepackage{subcaption}

\newcommand{\op}[1]{\operatorname{#1}}
\newcommand{\C}[1]{{\mathcal{#1}}} \newcommand{\B}[1]{{\mathbb{#1}}}

\begin{document}

\twocolumn[

\aistatstitle{The Tilted Sampling Problem: Optimism, Posterior Sampling, and Logarithmic Regret}

\aistatsauthor{ Mohammad Pedramfar \And Siamak Ravanbakhsh }

\aistatsaddress{ Mila -- Qu\'ebec AI Institute, McGill University\\
\texttt{\{mohammad.pedramfar,siamak.ravanbakhsh\}@mila.quebec}
} ]

\begin{abstract}
Many modern learning systems are used not to select a single best action, but to sample diverse high-quality outputs. Examples include entropy-regularized reinforcement learning, RLHF, and reward-guided generative modeling. Motivated by these settings, we study \emph{tilted sampling}: an online learning problem in which the goal is to sample from the Gibbs distribution induced by an unknown reward function.
We define sampling regret as the temperature-normalized cumulative KL divergence between the learner's distribution and the Gibbs target, corresponding to the suboptimality gap in entropy-regularized reward maximization. Our framework generalizes recent sampling-based and KL-regularized formulations in bandits and reinforcement learning.
Our main contribution is a general regret framework that captures the different uncertainty structure of tilted sampling and classical regret minimization. While standard regret scales as $\widetilde{O}(\sqrt{T})$ due to first-order uncertainty, sampling regret is governed by squared confidence widths, yielding logarithmic-in-$T$ regret in standard settings at fixed temperature.
We develop sampling analogues of UCB and Thompson sampling (UCS and TS$^2$) and provide unified regret decompositions that explain and extend recent logarithmic guarantees, including the first Bayesian sampling-regret decomposition based on new bounds for posterior-random Gibbs distributions. Our results show that learning to sample from a soft-optimal distribution is statistically distinct from learning to identify a hard optimum.
\end{abstract}

\section{Introduction}

Classical bandit and reinforcement learning problems are formulated around \emph{optimization}: the goal is to identify and repeatedly select actions with maximal reward \citep{lai1985asymptotically, auer02_finit_analy_multiar_bandit_probl, sutton18_reinf_learn}. Given an unknown reward function $f_\theta$, performance is measured by regret relative to a maximizer $\arg\max_a f_\theta(a)$. This formulation is appropriate when the desired behavior is a single best decision.

Many modern learning systems, however, are not used in this way. Large language models generate text by sampling \citep{ouyang2022training}, diffusion models generate images by sampling \citep{ho2020denoising}, and stochastic policies in reinforcement learning sample actions \citep{haarnoja18_soft_actor_critic}. In these settings, the output of the system is inherently distributional. The goal is often not to collapse onto one maximizer, but to bias a base distribution toward high-reward outputs while preserving diversity, entropy, or proximity to a reference model.

\begin{figure}[t]
  \centering
  \begin{subfigure}[b]{0.48\linewidth}
    \centering
    \includegraphics[width=\linewidth]{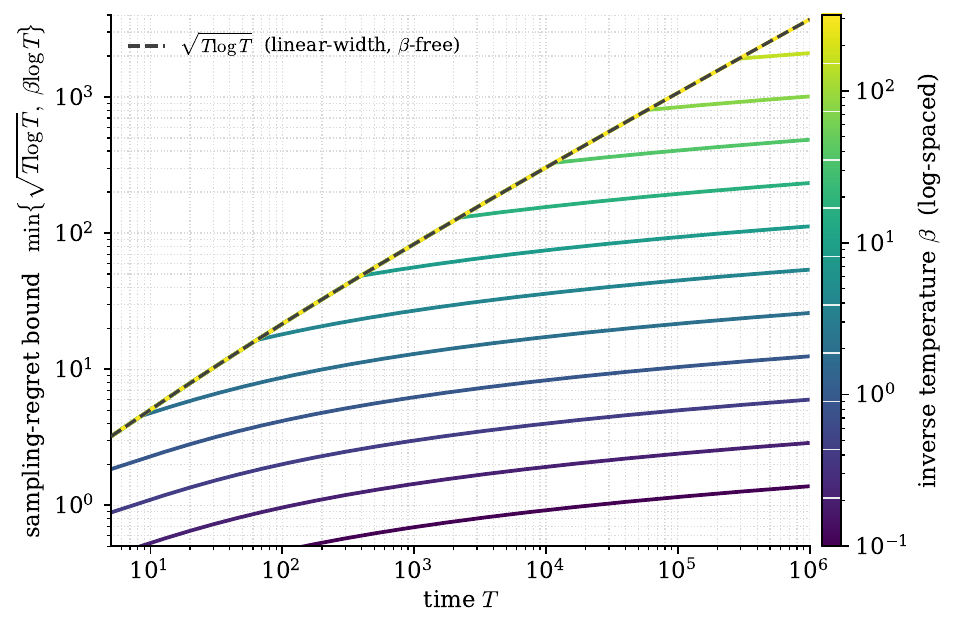}
    \caption{Bound vs.\ horizon $T$.}
    \label{fig:regret-vs-T}
  \end{subfigure}\hfill
  \begin{subfigure}[b]{0.48\linewidth}
    \centering
    \includegraphics[width=\linewidth]{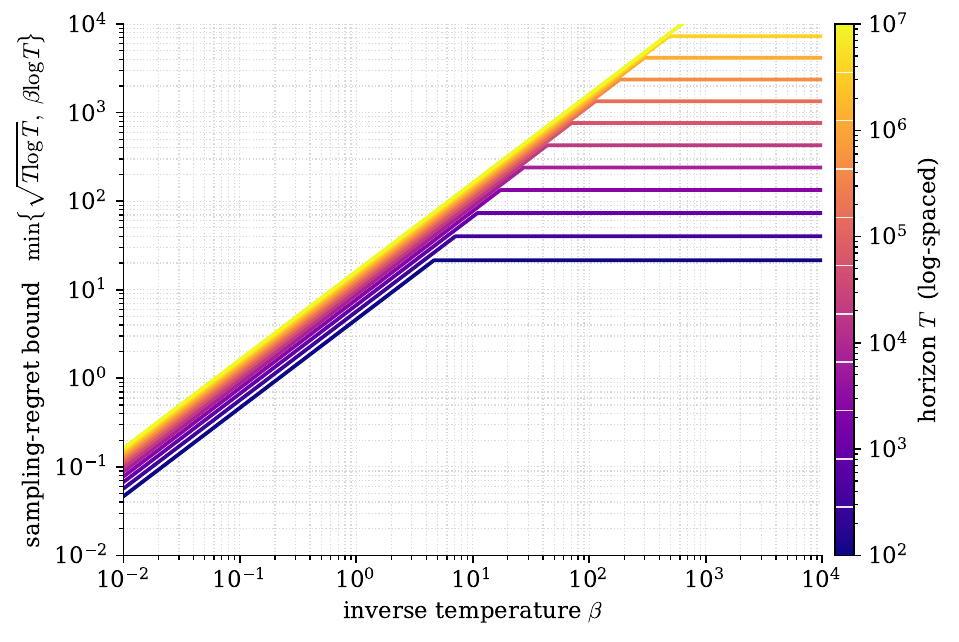}
    \caption{Bound vs.\ inverse temperature.}
    \label{fig:regret-vs-beta}
  \end{subfigure}\hfill
\caption{Schematic sampling-regret upper bound
    $R(T,\beta)=\min\!\bigl\{\sqrt{T\log T},\ \beta\log T\bigr\}$
    illustrating the complementary linear- and squared-width bounds for UCS
    and TS$^2$ in Theorems~\ref{thm:regret-decom-ucs}
    and~\ref{thm:regret-decom-ts2}.
    \textbf{(\subref{fig:regret-vs-T})}
    For fixed $\beta$ and $T>1$, the square-root branch is active when
    $T/\log T\leq\beta^2$, and the logarithmic branch when
    $T/\log T\geq\beta^2$.
    The large-horizon crossover satisfies
    $T^\star(\beta)=\beta^2\log T^\star(\beta)$, with
    $T^\star(\beta)\sim2\beta^2\log\beta$ as $\beta\to\infty$;
    beyond it the bound follows $\beta\log T$.
    \textbf{(\subref{fig:regret-vs-beta})}
    For each $T$ the bound is linear in $\beta$ up to the kink
    $\beta^{\star}(T)=\sqrt{T/\log T}$ and saturates at
    $\sqrt{T\log T}$ beyond it.
}
  \label{fig:regret-min-bounds}
\end{figure}

This structure appears explicitly in RLHF and preference-based fine-tuning, where the idealized KL-regularized policy has the form
\[
    \pi^*(a) \propto \pi_{\rm ref}(a)\exp(\beta r(a)).
\]
It also appears in entropy-regularized reinforcement learning and maximum-entropy control \citep{ziebart2008maximum, haarnoja2017reinforcement}, as well as reward-guided generative modeling \citep{dhariwal2021diffusion}. Abstractly, these settings lead to the Gibbs distribution
\[
    S_{\beta,t}(f_\theta)(da)
    =
    \frac{\exp(\beta f_\theta(a))}{Z_{\beta,t}(f_\theta)}\,\mu_t(da),
\]
where $\mu_t$ is a base measure on the available action set $\mathcal A_t$ and $\beta>0$ is an inverse-temperature parameter.

This paper studies the online learning problem induced by this target. The learner interacts with the environment and aims to sample from the unknown Gibbs distribution rather than identify a single optimal action. We call this problem \emph{tilted sampling}.

Our work builds directly on the multi-armed sampling framework of \citet{pedramfar26_multi_armed_sampl_probl_end_explor}, which introduced the finite-armed analogue of this problem and showed that sampling can require substantially less explicit exploration than optimization. We generalize this framework beyond finite arms to general action spaces, time-varying feasible sets, function classes, and Bayesian posterior sampling.

\paragraph{Sampling regret.}
We measure performance via the cumulative KL divergence between the learner's sampling distribution and the Gibbs target:
\[
    \mathrm{SRegret}(T)
    =
    \frac{1}{\beta}\sum_{t=1}^T
    D_{\rm KL}\!\left(
        S_{\beta,t}(g_t)
        \,\middle\|\,
        S_{\beta,t}(f_\theta)
    \right).
\]
This objective coincides with the suboptimality gap of entropy-regularized reward maximization \citep{ziebart2008maximum, haarnoja2017reinforcement}.

\paragraph{A different uncertainty structure.}
The main conceptual message is that learning to sample is statistically distinct from learning to optimize. Classical regret is driven by first-order uncertainty and scales as $\widetilde O(\sqrt{T})$ in standard models \citep{auer02_finit_analy_multiar_bandit_probl}.
In contrast, sampling regret can be controlled by second-order uncertainty through squared confidence widths. This leads to logarithmic-in-$T$ regret in many settings at fixed temperature. Such logarithmic guarantees have been established in KL-regularized bandits \citep{zhao2025logarithmic} and greedy RLHF analyses \citep{wu2025greedy}. Our contribution is a unified framework of confidence-width decompositions that explains and extends these results across sampling algorithms and function classes.

\paragraph{Bayesian regret.}
Beyond frequentist algorithms, we also introduce Bayesian sampling regret. This leads to TS$^2$, a Thompson-style sampling algorithm. Unlike classical Thompson sampling \citep{russo2018tutorial}, this requires controlling KL divergence between random Gibbs distributions. Its analysis requires new KL-divergence bounds for posterior-random Gibbs distributions and provides the first Bayesian regret decomposition for sampling objectives.

\paragraph{Algorithms and contributions.}
Our framework develops the tilted-sampling counterpart of the confidence-width approach to learning to optimize under bandit feedback \citep{russo14_learn_optim_via_poster_sampl}.

\emph{We provide the first general regret framework for learning soft-optimal (Gibbs) distributions under bandit feedback, unifying optimistic and greedy analyses and establishing a Bayesian sampling-regret decomposition.}

Our contributions are:
\begin{itemize}
    \item A general framework for tilted sampling beyond the finite-armed setting to time-varying action sets, function classes, and the Bayesian setting.
    \item We introduce Sampling With Confidence (SWC) framework, Upper Confidence Sampling (UCS), and Thompson Sampling for Sampling (TS$^2$).
    \item We show that the UCS and SWC regret decompositions generalize, simplify, and unify prior logarithmic-regret analyses for optimistic and greedy sampling methods through model-independent confidence-width bounds.
    \item We give a novel Bayesian sampling-regret decomposition for TS$^2$, based on new KL-divergence bounds for posterior-sampled Gibbs distributions.
    \item Our regret decompositions extend to both directions of KL, providing a unified analysis of forward and reverse KL-divergences. In particular, the UCS and TS$^2$ analyses yield simultaneous control of both quantities with identical bounds under the same confidence event and no exponential dependence on $\beta C$.
    \item We show that second-order uncertainty control yields logarithmic-in-time sampling regret in standard models for fixed temperature.
    \item We provide a black-box transfer of confidence-set and width-process analyses from UCB and Thompson sampling to tilted sampling under direct noisy reward feedback, yielding guarantees for the corresponding UCS and TS$^2$ procedures.
\end{itemize}

\section{Problem Setup and Regret Decomposition}

We study a sequential decision problem over $T$ rounds. 
Let $\mathcal{A}$ be the action space and let $\mathcal{F} = \{ f_\theta : \mathcal{A} \to \mathbb{R} \mid \theta \in \Omega \}$ be the parameterized objective class.
At each round $t$, the agent observes the available action set $\mathcal A_t \subseteq \mathcal A$ together with the history
\begin{align*}
H_t = (\mathcal A_1, A_1, Y_1, \dots, \mathcal A_{t-1}, A_{t-1}, Y_{t-1}),
\end{align*}
selects an action $A_t \in \mathcal A_t$, and receives
$Y_t = f_{\theta}(A_t) + \varepsilon_t$,
where $\theta \in \Omega$ is unknown, and $\mathbb E[\varepsilon_t \mid H_t,\mathcal A_t,A_t,\theta]=0$.
We assume $f(a)\in[0,C]$ for all $f \in \mathcal{F}$ and $a \in \mathcal A$.
In the Bayesian setting, we assume that $\theta$ is drawn from a prior distribution $\rho$ over the parameter space.

Let $A_t^\star \in \arg\max_{a\in\mathcal A_t} f_{\theta}(a)$ denote the optimal action.
Throughout, all argmax operations are assumed to use a fixed measurable tie-breaking rule.
The regret and Bayesian regret are
\begin{align*}
\mathrm{Regret}(T)
&= \textstyle \sum_{t=1}^T \big(f_\theta(A_t^\star)-f_\theta(A_t)\big), 
\\
\mathrm{BayesRegret}(T)
&= \mathbb E_{\theta \sim \rho}\!\left[\textstyle \sum_{t=1}^T \big(f_\theta(A_t^\star)-f_\theta(A_t)\big)\right],
\end{align*}
where the expectation in $\mathrm{BayesRegret}(T)$ is taken over the prior on $\theta$ as well as the randomness of the algorithm and observations.

\paragraph{Confidence sets.}
Let $\mathcal F_t = \mathcal F_t(H_t,\mathcal A_t) \subseteq \mathcal F$ be confidence sets constructed from the revealed information. Intuitively, $\mathcal F_t$ collects functions that are statistically plausible given the observed data and the sequence of available action sets. In many models, these arise from concentration inequalities or posterior credible regions (e.g., least-squares ellipsoids in linear models).

At each time step $t$, we define upper and lower confidence bounds only over the realized action set $\mathcal A_t$. 
More precisely, for $a \in \mathcal A_t$, let
$U_t(a) = \sup_{f\in\mathcal F_t} f(a)$
,
$L_t(a) = \inf_{f\in\mathcal F_t} f(a)$
, and
$w_{\mathcal F_t}(a) = U_t(a)-L_t(a).$

Confidence sets play a central role in both algorithm design and analysis. For UCB, the upper envelope $U_t$ induces optimism, encouraging exploration of actions with high uncertainty. For TS, while the algorithm itself does not explicitly use $\mathcal F_t$, these sets provide a convenient analytical tool to control regret. In particular, the width $w_{\mathcal F_t}(a)$ quantifies the uncertainty about the reward of action $a$, and bounding the cumulative width along the trajectory will be the key step in our regret analysis.

\paragraph{Algorithms.}
We consider two canonical approaches to balancing exploration and exploitation.
The \emph{Upper Confidence Bound (UCB)} algorithm selects actions optimistically according to the confidence sets:
\begin{align*}
A_t \in \arg\max_{a\in\mathcal A_t} U_t(a).
\end{align*}
This strategy favors actions with high upper confidence bounds, thereby directing exploration toward actions whose rewards are uncertain.
The \emph{Thompson Sampling (TS)} algorithm instead randomizes according to the posterior distribution. At each round, it samples $\tilde\theta_t \sim \rho(\cdot \mid H_t, \mathcal A_t)$ and selects
\begin{align*}
A_t \in \arg\max_{a\in\mathcal A_t} f_{\tilde\theta_t}(a).
\end{align*}
UCB explores via optimism, while TS explores via randomized models sampled from the current posterior.

\paragraph{Regret Decomposition}

Although these algorithms are operationally different—one deterministic and optimistic, the other randomized and Bayesian—their regret can be analyzed through a common lens. 
In particular, both admit bounds in terms of the confidence widths $w_{\mathcal F_t}(A_t)$, which capture the uncertainty encountered along the realized sequence of actions.
The following result is classical.~\cite{russo14_learn_optim_via_poster_sampl}
We include it here for completeness.

\begin{theorem}
\label{thm:ucb-ts}
Let $\mathcal E$ denote the event $\{ f_\theta \in \cap_{t=1}^T \mathcal F_t \}$, with the confidence sets and history generated by the algorithm under consideration.
For UCB with fixed unknown $\theta\in\Omega$, let $\mathbb E$ and $\mathbb P$ denote expectation and probability over observation noise and environmental or algorithmic randomness. Then
\[
\mathbb E[\mathrm{Regret}(T)]
\le
\mathbb E\!\left[\textstyle \sum_{t=1}^T w_{\mathcal F_t}(A_t)\right]
+
C T \mathbb P(\mathcal E^c).
\]
For Thompson sampling with $\theta\sim\rho$, let $\mathbb E$ and $\mathbb P$ denote expectation and probability under the joint law of the prior draw, observations, environment, and posterior-sampling algorithm. Then
\begin{align*}
\mathrm{BayesRegret}(T)
\le
\mathbb E\!\left[\textstyle \sum_{t=1}^T w_{\mathcal F_t}(A_t)\right]
+
2 C T \mathbb P(\mathcal E^c).
\end{align*}
\end{theorem}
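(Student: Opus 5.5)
The plan is the standard Russo--Van Roy decomposition: split each round according to whether the confidence event holds, and on that event bound the instantaneous gap by a width.

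\textbf{UCB.} Write $\Delta_t = f_\theta(A_t^\star)-f_\theta(A_t)\in[0,C]$, which uses $f\in[0,C]$. On $\mathcal E$ we have $f_\theta\in\mathcal F_t$ for every $t$, so $f_\theta(a)\le U_t(a)$ and $f_\theta(a)\ge L_t(a)$ for all $a\in\mathcal A_t$. The UCB rule $A_t\in\arg\max U_t$ then gives
\[
\Delta_t\le U_t(A_t^\star)-f_\theta(A_t)\le U_t(A_t)-f_\theta(A_t)\le U_t(A_t)-L_t(A_t)=w_{\mathcal F_t}(A_t).
\]
Hence
\[
\sum_t\Delta_t\le \mathbf 1_{\mathcal E}\sum_t w_{\mathcal F_t}(A_t)+\mathbf 1_{\mathcal E^c}CT.
\]
Since widths are nonnegative, dropping the indicator on the first term and taking expectations gives the first bound.

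\textbf{Thompson sampling.} Decompose
\[
\Delta_t=\bigl[f_\theta(A_t^\star)-U_t(A_t^\star)\bigr]+\bigl[U_t(A_t^\star)-U_t(A_t)\bigr]+\bigl[U_t(A_t)-f_\theta(A_t)\bigr].
\]
The key fact is posterior matching. Conditional on $(H_t,\mathcal A_t)$, the pair $(\theta,A_t^\star)$ has the same law as $(\tilde\theta_t,A_t)$, because the tie-breaking rule is fixed and measurable. Meanwhile $U_t$ and $\mathcal F_t$ are $(H_t,\mathcal A_t)$-measurable. Therefore $\mathbb E[U_t(A_t^\star)-U_t(A_t)\mid H_t,\mathcal A_t]=0$, and the middle term vanishes in expectation.

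For the first term, $f_\theta(A_t^\star)-U_t(A_t^\star)\le 0$ whenever $f_\theta\in\mathcal F_t$. It is at most $C$ in general, provided we assume or arrange $U_t\ge 0$; this holds since every $f\in\mathcal F$ is nonnegative and $\mathcal F_t$ is nonempty. So its expected sum is at most $C\sum_t\mathbb P(f_\theta\notin\mathcal F_t)$.

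For the third term, $U_t(A_t)-f_\theta(A_t)\le w_{\mathcal F_t}(A_t)$ when $f_\theta\in\mathcal F_t$, and it is at most $C$ otherwise. Bounding it by $w_{\mathcal F_t}(A_t)+C\,\mathbf 1\{f_\theta\notin\mathcal F_t\}$ contributes another $C\sum_t\mathbb P(f_\theta\notin\mathcal F_t)$.

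Finally, $\mathbb P(f_\theta\notin\mathcal F_t)\le\mathbb P(\mathcal E^c)$ for each $t$. Summing the two contributions gives $2CT\,\mathbb P(\mathcal E^c)$.

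\textbf{Main obstacle.} The delicate step is the posterior-matching identity. It requires $U_t$ to depend only on $(H_t,\mathcal A_t)$, that identity itself holds because $\mathcal F_t=\mathcal F_t(H_t,\mathcal A_t)$. It also requires $A_t^\star$ and $A_t$ to be produced by the same measurable argmax map applied to $\theta$ and to $\tilde\theta_t$ respectively. A further point is integrability of $U_t(A_t)$, which follows from boundedness in $[0,C]$.

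The remaining steps are routine case splits on whether $f_\theta\in\mathcal F_t$.
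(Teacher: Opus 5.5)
Your proposal is correct and follows the paper's proof essentially step for step. For UCB, both add and subtract $U_t(A_t^\star)$ and use the UCB rule. For Thompson sampling, both use the posterior-matching identity $\mathbb E[U_t(A_t)\mid H_t,\mathcal A_t]=\mathbb E[U_t(A_t^\star)\mid H_t,\mathcal A_t]$, then split each round on whether $f_\theta\in\mathcal F_t$, bounding each failing term by $C$. Your remark that $\mathcal F_t$ must be nonempty, so that $U_t\in[0,C]$, is a minor hypothesis the paper uses implicitly.
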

See Appendix~\ref{apx:ucb-ts} for the proof.

\section{Tilted Sampling}
\label{sec:tilted-sampling}

We now introduce a distributional analogue of bandit optimization: the learner aims to match the Gibbs distribution induced by the unknown reward function. Measuring regret by KL divergence leads to bounds in terms of squared confidence widths. When these widths accumulate logarithmically, the resulting sampling-regret guarantees are logarithmic in $T$ for fixed temperature.

\subsection{Sampling Objective}

Because the available action set may vary with time, the relevant Gibbs distribution is defined on the currently revealed set $\mathcal A_t$ rather than on the ambient action space $\mathcal A$.
Fix $\beta > 0$, and let $\mu$ be a base probability measure on $\mathcal A$. For each round $t$, let $\mu_t$ denote the conditional measure
\begin{align*}
\mu_t(B) = \mu(B \mid \mathcal A_t) = \frac{\mu(B \cap \mathcal A_t)}{\mu(\mathcal A_t)},
\end{align*}
whenever $\mu(\mathcal A_t) > 0$. For any measurable function $f$, define the Gibbs distribution on $\mathcal A_t$
\begin{align*}
S_{\beta,t}(f)(da)
= \frac{1}{Z_{\beta,t}(f)} \exp(\beta f(a)) \, \mu_t(da),
\end{align*}
where
$Z_{\beta,t}(f) = \int_{\mathcal A_t} \exp(\beta f(a)) \, \mu_t(da)$.
Equivalently, $S_{\beta,t}(f)$ is the Gibbs tilt of the conditioned base measure $\mu_t$.

The target distribution which we wish to learn is $S_{\beta,t}(f_\theta)$, which concentrates mass on high-reward actions while preserving exploration through temperature $\beta^{-1}$.

\begin{remark}
To ensure that $\mu_t = \mu(\cdot \mid \mathcal A_t)$ is well defined, it suffices that each $\mathcal A_t$ be measurable and satisfy $\mu(\mathcal A_t) > 0$ almost surely. We assume throughout that these and any additional regularity conditions needed to define the Gibbs measures $S_{\beta,t}(f)$ hold. If $\mathcal A$ is finite or countable and $\mu$ has full support, these conditions are automatic.
\end{remark}

\subsection{Interaction Model}

At each round $t$, the learner selects a function $g_t : \mathcal{A}_t \to \mathbb{R}$ based on the history $H_t$ and the revealed action set $\mathcal{A}_t$, and samples
\begin{align*}
A_t \sim S_{\beta, t}(g_t).
\end{align*}
This contrasts with the standard bandit setting, where a single action is selected deterministically or via TS. Here, the learner controls a full distribution over actions.

Note that, while the goal is to match the target distribution induced by $f_\theta$, the learner may choose to act based on some $g_t \notin \C{F}$. 
However, as we will see in Upper Confidence Sampling (UCS) algorithm below, even if $g_t$ is not in $\C{F}$, it is beneficial for it to be close to functions in the confidence set $\mathcal{F}_t$ so that the action may provide more valuable information.

\subsection{Sampling Regret}

We measure performance via the divergence between the induced sampling distribution and the target distribution.
We define the (cumulative) sampling regret as
\begin{align*}
\mathrm{SRegret}(T)
=
\tfrac{1}{\beta} \textstyle \sum_{t=1}^T
D_{\mathrm{KL}}\big(S_{\beta,t}(g_t) \,\|\, S_{\beta,t}(f_\theta)\big),
\end{align*}
and the Bayesian sampling regret as
\begin{align*}
\mathrm{BSRegret}(T)
=
\tfrac{1}{\beta} \mathbb E\!\left[ \textstyle \sum_{t=1}^T
D_{\mathrm{KL}}\big(S_{\beta,t}(g_t) \,\|\, S_{\beta,t}(f_\theta)\big)
\right],
\end{align*}
where the expectation is taken over $\theta \sim \rho$ and all randomness in the learner and observations.
These notions of regret capture how well the learner approximates the optimal sampling distribution, rather than how well it identifies optimal actions.

Table~\ref{tbl:comparison} summarizes the key structural difference between bandit optimization and tilted sampling.

\begin{table}[t]
\centering
\small
\begin{tabular}{@{}ll@{}}
\toprule
\textbf{Bandit Optimization} & \textbf{Tilted Sampling} \\
\midrule
$\max f(a)$ & $\pi \propto \exp(\beta f)$ \\
Non-smooth & Smooth (KL) \\
First-order & Second-order \\
$\sum w_t \to \sqrt{T}$ & $\sum w_t^2 \to \log T$ \\
UCB / TS & UCS / TS$^2$ \\
\bottomrule
\end{tabular}
\caption{\textbf{Optimization vs.\ sampling.}
KL geometry smooths the objective, converting first-order uncertainty into second-order regret.}
\label{tbl:comparison}
\end{table}

\paragraph{Regularized-value interpretation.}
Sampling regret is the suboptimality gap of KL-regularized reward maximization. For a distribution $\pi$ on $\mathcal A_t$ with finite $D_{\mathrm{KL}}(\pi\|\mu_t)$, define
\[
J_{\beta,t,f}(\pi)
= \mathbb E_{A\sim\pi}[f(A)]-\tfrac{1}{\beta}D_{\mathrm{KL}}(\pi\|\mu_t).
\]
The Gibbs distribution $S_{\beta,t}(f)$ maximizes this objective, and its density gives the exact identity
\[
J_{\beta,t,f}\big(S_{\beta,t}(f)\big)-J_{\beta,t,f}(\pi)
=\tfrac{1}{\beta}D_{\mathrm{KL}}\big(\pi\|S_{\beta,t}(f)\big).
\]
With $f=f_\theta$ and $\pi=S_{\beta,t}(g_t)$, this identity expresses each term in sampling regret as the learner's regularized-value gap. The factor $1/\beta$ gives this gap the units and scale of reward regret, linking distribution matching to the zero-temperature optimization limit.

\paragraph{Bandit optimization as a limit of tilted sampling.}

The Gibbs sampling operator $S_{\beta,t}$ interpolates between diffuse sampling and hard maximization. Under mild regularity conditions, as $\beta\to\infty$, the measure $S_{\beta,t}(f)$ concentrates on the maximizers of $f$ over $\mathcal A_t$. Thus UCS and TS$^2$ recover UCB and Thompson sampling in the zero-temperature limit. A precise statement is given in Appendix~\ref{app:rl-limit-sampling}.

\paragraph{Alternative sampling regrets.}

The reverse-KL regret above is our primary notion of sampling regret because it has an exact entropy-regularized value interpretation and recovers classical regret in the zero-temperature limit. Following the multi-armed sampling literature \citep{pedramfar26_multi_armed_sampl_probl_end_explor}, it is also natural to consider two other discrepancies between the learner's sampling distribution and the target Gibbs distribution.

First, one may define the total-variation sampling regret
\[
    \mathrm{TV\text{-}SRegret}(T)
    =
    \tfrac{1}{\sqrt{\beta}}
    \textstyle \sum_{t=1}^T
    d_{\rm TV}\!\left(
        S_{\beta,t}(g_t),
        S_{\beta,t}(f_\theta)
    \right).
\]
Unlike reverse-KL regret, total-variation regret does not coincide with entropy-regularized value suboptimality and does not recover the usual notion of regret as $\beta\to\infty$.

Second, one may consider the forward-KL sampling regret
\[
    \mathrm{FKL\text{-}SRegret}(T)
    =
    \tfrac{1}{\beta}
    \textstyle \sum_{t=1}^T
    D_{\rm KL}\!\left(
        S_{\beta,t}(f_\theta)
        \,\middle\|\,
        S_{\beta,t}(g_t)
    \right).
\]
Forward KL is a natural complementary measure: while reverse KL penalizes the learner for placing mass where the target places little mass, forward KL penalizes the learner for failing to cover regions where the target places mass.

Our main results focus on reverse-KL regret because of its variational interpretation. Nevertheless, our decompositions also control these alternative regret notions. In particular, SWC and UCS obtain matching reverse- and forward-KL bounds from Lemma~\ref{lem:frequentist-kl-bound}, while TS$^2$ obtains them through the same Bayesian KL-divergence argument. Total-variation regret can be related to KL regret via Pinsker's inequality,
$d_{\rm TV}(\pi,\pi^*) \le \sqrt{\tfrac{1}{2}D_{\rm KL}(\pi\|\pi^*)}$.
Specifically, we see that
\begin{align*}
  \mathrm{TV\text{-}SRegret}(T)
  &=
  \frac{1}{\sqrt{\beta}}
  \textstyle \sum_{t=1}^T
  d_{\rm TV}\!\left(
    S_{\beta,t}(g_t),
    S_{\beta,t}(f_\theta)
  \right) \\
  &\leq 
  \tfrac{1}{\sqrt{\beta}}
  \textstyle \sum_{t=1}^T
  \sqrt{\tfrac{1}{2} D_{\rm KL}\left(
    S_{\beta,t}(g_t),
    S_{\beta,t}(f_\theta)
  \right) } \\
  &\leq 
  \sqrt{
    \tfrac{T}{2 \beta} \textstyle \sum_{t=1}^T
    D_{\rm KL}\left(
    S_{\beta,t}(g_t),
    S_{\beta,t}(f_\theta)
  \right) } \\
  &=
  \sqrt{ \tfrac{T}{2} \mathrm{SRegret}(T) }.
\end{align*}
Because of the symmetry, the same bound also holds if we replace $\mathrm{SRegret}$ with $\mathrm{FKL\text{-}SRegret}$.
Thus, if either of the KL regrets is logarithmic, then the total variation regret is $\tilde{\mathcal O}(\sqrt{T})$.
In the following, we analyze reverse-KL and forward-KL sampling regrets.

\subsection{Algorithms}

As before, we define confidence sets $\mathcal F_t$.
We consider two natural analogues of UCB and TS.
In \emph{Upper Confidence Sampling (UCS)}, we choose $g_t = U_t$. In other words, we sample according to the upper confidence function
$A_t \sim S_{\beta,t}(U_t).$

On the other hand, in \emph{Thompson Sampling for Sampling (TS$^2$)}, we sample $\tilde\theta_t \sim \rho(\cdot \mid H_t,\mathcal A_t)$ and select $g_t = f_{\tilde\theta_t}$. Thus, we have
$A_t \sim S_{\beta,t}(f_{\tilde\theta_t}).$

Note that if we replace $S_{\beta}$ with $\arg\max$, then these algorithms reduce to classical UCB and TS. 
More precisely, under mild regularity assumptions, $S_\beta(f)$ concentrates on the maximizers of $f$ as $\beta\to\infty$; when the maximizer is unique, this convergence is to the corresponding Dirac measure.
Therefore, UCS and TS$^2$ can be viewed as smooth generalizations of UCB and Thompson sampling, with the Gibbs operator interpolating between sampling and maximization.

We say an algorithm follows the \emph{Sampling With Confidence (SWC)} framework, if we have
\begin{align*}
L_t(a) \leq g_t(a) \leq U_t(a),
\end{align*}
for all $a \in \mathcal{A}_t$.
Note that this does not require $g_t$ to belong to $\mathcal{F}_t$ or even $\mathcal{F}$. Rather, it controls the range of possible values of $g_t(a)$ with the uncertainty induced by the confidence sets.
Clearly UCS follows the SWC framework, since it chooses $g_t = U_t$.

\subsection{Sampling Regret Decomposition}
\label{subsec:decomp}

The regret decompositions connect KL divergence between Gibbs distributions to uncertainty in their reward functions. Global Gibbs inequalities control the discrepancy between $S_{\beta,t}(g_t)$ and $S_{\beta,t}(f_\theta)$ through function differences evaluated at sampled actions. Confidence bounds then control these differences by $w_{\mathcal F_t}$, yielding linear- and squared-width decompositions. SWC, UCS, and TS$^2$ use this common structure in different ways.

Throughout this section, the SWC and UCS bounds use a fixed unknown $\theta$, with $\mathbb E$ and $\mathbb P$ taken over observation noise and environmental or algorithmic randomness. The TS$^2$ bounds use the joint Bayesian law with $\theta\sim\rho$, including the prior draw and posterior-sampling randomness. Confidence events and width processes are evaluated under the corresponding algorithm's law.

\begin{theorem}[Sampling Regret for SWC framework]
\label{thm:regret-decom-swc}
Let $\mathcal E$ denote the event $\{ f_\theta \in \cap_{t=1}^T \mathcal F_t \}$.
If the agent follows SWC framework, we have
\begin{align*}
\mathbb E[\mathrm{SRegret}(T)]
&\le
\tfrac{1}{2} \beta \exp(\beta C) \B{E}\left[ \textstyle \sum_{t=1}^T (w_{\mathcal F_t}(A_t))^2 \right] \\
&\qquad+ 2 C T \mathbb P(\mathcal E^c).
\end{align*}
Moreover, the same bound holds with
$\mathbb E[\mathrm{SRegret}(T)]$ replaced by
$\mathbb E[\mathrm{FKL\text{-}SRegret}(T)]$.
\end{theorem}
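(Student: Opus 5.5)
The plan is to work round by round, conditioning on $(H_t,\mathcal A_t)$, and to split each round according to the per-round event $\mathcal E_t=\{f_\theta\in\mathcal F_t\}$. This event is measurable with respect to the information available before $A_t$ is drawn, and it satisfies $\mathcal E\subseteq\mathcal E_t$. So
\[
\mathbb E\bigl[D_{\mathrm{KL}}(S_{\beta,t}(g_t)\,\|\,S_{\beta,t}(f_\theta))\bigr]
=\mathbb E\bigl[D_{\mathrm{KL}}(\cdot\|\cdot)\,\mathbf 1_{\mathcal E_t}\bigr]+\mathbb E\bigl[D_{\mathrm{KL}}(\cdot\|\cdot)\,\mathbf 1_{\mathcal E_t^c}\bigr],
\]
and $\mathbb P(\mathcal E_t^c)\le\mathbb P(\mathcal E^c)$. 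Using $\mathcal E_t$ rather than $\mathcal E$ matters: $\mathbf 1_{\mathcal E}$ depends on the future, whereas $\mathbf 1_{\mathcal E_t}$ can be pulled out of the conditional expectation over $A_t\sim S_{\beta,t}(g_t)$.

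\emph{Bad event.} Since every $f\in\mathcal F$ takes values in $[0,C]$, we have $L_t,U_t\in[0,C]$, and SWC gives $g_t\in[0,C]$ as well. The log density ratio is
\[
\log\frac{dS_{\beta,t}(g_t)}{dS_{\beta,t}(f_\theta)}=\beta(g_t-f_\theta)+\log\frac{Z_{\beta,t}(f_\theta)}{Z_{\beta,t}(g_t)} .
\]
Each of the two terms is at most $\beta C$. Hence both KL directions are at most $2\beta C$, and after dividing by $\beta$ the bad event contributes at most $2CT\,\mathbb P(\mathcal E^c)$.

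\emph{Good event.} Put $\Delta=g_t-f_\theta$, $h_s=f_\theta+s\Delta$, and $\psi(s)=\log Z_{\beta,t}(h_s)$. Then
\[
\psi'(s)=\beta\,\mathbb E_{S_{\beta,t}(h_s)}[\Delta],\qquad \psi''(s)=\beta^2\operatorname{Var}_{S_{\beta,t}(h_s)}(\Delta).
\]
Taylor's formula with integral remainder gives the reverse KL exactly as $\psi(0)-\psi(1)+\psi'(1)=\int_0^1 s\,\psi''(s)\,ds$. The forward KL is $\psi(1)-\psi(0)-\psi'(0)=\int_0^1(1-s)\,\psi''(s)\,ds$. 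In both cases the KL is at most $\tfrac{\beta^2}{2}\sup_{s}\mathbb E_{S_{\beta,t}(h_s)}[\Delta^2]$, which explains why the two directions get identical bounds.

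On $\mathcal E_t$, both $f_\theta$ and $g_t$ lie in $[L_t,U_t]$, so $|\Delta(a)|\le w_{\mathcal F_t}(a)$ pointwise. It remains to change measure from $S_{\beta,t}(h_s)$ to the sampling law $S_{\beta,t}(g_t)$. This requires a bound
\[
\mathbb E_{S_{\beta,t}(h_s)}[\Delta^2]\le e^{\beta C}\,\mathbb E_{A\sim S_{\beta,t}(g_t)}\bigl[w_{\mathcal F_t}(A)^2\bigr].
\]
The right-hand side is exactly $e^{\beta C}\,\mathbb E[w_{\mathcal F_t}(A_t)^2\mid H_t,\mathcal A_t]$. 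Multiplying by $\tfrac{\beta^2}{2}$, dividing by $\beta$, summing over $t$ and applying the tower rule then yields the first term of the theorem.

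The main obstacle is obtaining the constant $e^{\beta C}$ in the change of measure, rather than the $e^{2\beta C}$ that a naive ratio bound gives. Write $h_s=g_t+u$ with $u=(1-s)(f_\theta-g_t)$. Then
\[
\frac{dS_{\beta,t}(h_s)}{dS_{\beta,t}(g_t)}=\frac{e^{\beta u}}{\mathbb E_{S_{\beta,t}(g_t)}[e^{\beta u}]},
\]
and $u$ has oscillation up to $2C$. My first attempt would exploit that on $\mathcal E_t$ we have $u\in[-(1-s)w,(1-s)w]$ with $w\le C$. I would also use that the numerator is only needed against the weight $\Delta^2$, and that the Taylor weights $s$ or $(1-s)$ damp the regime where $h_s$ is far from $g_t$. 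If that fails, I would compare both measures to $\mu_t$, using the fact that the densities of $h_s$ and $g_t$ lie between $e^{-\beta C}$ and $e^{\beta C}$. I would then track carefully which factor can be absorbed, for instance by using the variance in place of the second moment and centering. In the worst case the argument yields the bound with $e^{2\beta C}$, and I would revisit this step.
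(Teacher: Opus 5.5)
The skeleton is right: the per-round split on $\mathcal E_t$, the $2\beta C$ bound on the log-density ratio off the good event, the remainder identities $\int_0^1 s\,\psi''(s)\,ds$ and $\int_0^1 (1-s)\,\psi''(s)\,ds$ for the reverse and forward KL, and $|\Delta|\le w_{\mathcal F_t}$ on $\mathcal E_t$ are all correct. There is, however, a genuine gap at the step you leave open, namely the change of measure $\mathbb E_{S_{\beta,t}(h_s)}[\Delta^2]\le e^{\beta C}\,\mathbb E_{S_{\beta,t}(g_t)}[\Delta^2]$. The routes you sketch cannot give it. Any argument through a pointwise bound on $dS_{\beta,t}(h_s)/dS_{\beta,t}(g_t)$, whether via the oscillation of $u$ or via comparison with $\mu_t$, yields only $e^{2\beta(1-s)C}$ in general. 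The Taylor weights do not repair this: $\int_0^1 s\,e^{2\beta C(1-s)}\,ds$ and $\int_0^1 (1-s)\,e^{2\beta C(1-s)}\,ds$ both grow like $e^{2\beta C}$ divided by a polynomial in $\beta C$, far above $\tfrac12 e^{\beta C}$ when $\beta C$ is large. Pointwise ratios are the wrong tool here. Take $|\Delta|\equiv C$ with a tiny $\pi$-mass where $\Delta=-C$: the ratio at $s=0$ approaches $e^{2\beta C}$, yet $\mathbb E[\Delta^2]$ does not change at all. As written, you would prove the theorem only with a worse temperature factor.

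The missing idea is to use only the one-sided bound $\Delta\ge -C$ (from $g_t\ge L_t\ge 0$ and $f_\theta\le C$), together with the correlation between the weight $\Delta^2$ and the tilt. Put $\pi=S_{\beta,t}(g_t)$ and $\nu_r\propto e^{-r\beta\Delta}\pi$, so that $S_{\beta,t}(h_s)=\nu_{1-s}$, and let $q(r)=\mathbb E_{\nu_r}[\Delta^2]$. Differentiating gives $q'(r)=-\beta\operatorname{Cov}_{\nu_r}(\Delta^2,\Delta)$. For any random variable $F\ge -M$ one has $\operatorname{Cov}(F^2,F)\ge -M\,\mathbb E[F^2]$, i.e.\ $\mathbb E[F^2(F+M)]\ge \mathbb E[F]\,\mathbb E[F^2]$. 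If $\mathbb E F\le 0$ this is immediate from $F^2(F+M)\ge 0$. If $\mathbb E F>0$ it follows from Cauchy--Schwarz, $\mathbb E[F^2(F+M)]\ge \mathbb E[F(F+M)]^2/\mathbb E[F+M]$, together with $\mathbb E F^2\ge(\mathbb E F)^2$. Hence $q'\le\beta C\,q$, and Gr\"onwall gives $q(1-s)\le e^{(1-s)\beta C}q(0)\le e^{\beta C}\,\mathbb E_\pi[\Delta^2]$. This is precisely your missing inequality. Inserted into either remainder formula it yields $\tfrac12\beta^2 e^{\beta C}\,\mathbb E_\pi[w_{\mathcal F_t}^2]$ for both directions, and the rest of your argument goes through.

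This is how the paper handles the forward direction (Lemma~\ref{lem:reverse-jensen-gap}, with the same integral representation). For the reverse direction the paper needs no change of measure at all. It writes $D_{\mathrm{KL}}(\pi\,\|\,S_{\beta,t}(f_\theta))=\beta\,\mathbb E_\pi[\Delta]+\log\mathbb E_\pi[e^{-\beta\Delta}]$ and applies the pointwise bound $e^{-x}\le 1-x+\tfrac12 e^{\beta C}x^2$ for $x\ge-\beta C$, together with $\log y\le y-1$ (Lemma~\ref{lem:jensen-gap}).
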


The idea of the proof is to use Lemma~\ref{lem:frequentist-kl-bound} to bound $D_{\mathrm{KL}}$ with the squared gap between $g_t$ and $f_\theta$.
See Appendix~\ref{apx:regret-decom-swc} for the detailed proof.

\paragraph{From prediction error to sampling regret.}
The squared-width structure begins with prediction error. For an SWC procedure with $\pi_t=S_{\beta,t}(g_t)$, the confidence event $f_\theta\in\mathcal F_t$ gives
\[
\mathbb E_{A\sim\pi_t}\!\left[(g_t(A)-f_\theta(A))^2\right]
\leq
\mathbb E_{A\sim\pi_t}\!\left[w_{\mathcal F_t}(A)^2\right].
\]
This bound has unit constant and weights reward errors according to the learner's sampling distribution, giving little weight to errors in rarely sampled regions. The global Gibbs/KL inequalities in Lemmas~\ref{lem:frequentist-kl-bound} and~\ref{lem:ts-kl-bound-2} convert such prediction-error control into bounds on the discrepancy from the target distribution. They extend the local quadratic geometry of KL to finite-time guarantees throughout learning. Accumulating the squared widths determines the time dependence, while the conversion to KL determines the temperature dependence. For generic SWC, this conversion gives the exponential factor in Theorem~\ref{thm:regret-decom-swc}. Optimism and posterior averaging lead to the sharper bounds below.

\paragraph{Optimism and exploration.}
Greedy sampling based on a reward estimate $g_t$ is an SWC procedure whenever the estimate lies within the confidence envelope. At finite $\beta$, Gibbs sampling can explore beyond the current estimated maximizers. As $\beta$ increases, this exploration diminishes as the distribution concentrates on those maximizers. In the hard-maximization limit, a greedy rule can repeatedly select an apparently superior, well-estimated action while overlooking an underestimated alternative. Optimism directs exploration toward uncertain actions to avoid this failure. For UCS, it also removes the exponential factor from the squared-width bound and provides a temperature-independent linear-width bound.

\begin{theorem}[Sampling Regret of UCS]
\label{thm:regret-decom-ucs}
Let $\mathcal E$ denote the event $\{ f_\theta \in \cap_{t=1}^T \mathcal F_t \}$.
For UCS, we have
\begin{align*}
\mathbb E[\mathrm{SRegret}(T)]
&\le
\mathbb E\!\left[\textstyle \sum_{t=1}^T w_{\mathcal F_t}(A_t) \right]
+ 2 C T \mathbb P(\mathcal E^c),\\
\mathbb E[\mathrm{SRegret}(T)]
&\le
\tfrac{1}{2} \beta \mathbb E\!\left[\textstyle \sum_{t=1}^T (w_{\mathcal F_t}(A_t))^2 \right]
+ 2 C T \mathbb P(\mathcal E^c).
\end{align*}
Moreover, both inequalities remain valid with
$\mathbb E[\mathrm{SRegret}(T)]$ replaced by
$\mathbb E[\mathrm{FKL\text{-}SRegret}(T)]$.
\end{theorem}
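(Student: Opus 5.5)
Throughout, write $g_t=U_t$, $f=f_\theta$, $\pi_t=S_{\beta,t}(U_t)$ and $\pi^*_t=S_{\beta,t}(f)$. I would split each round on the confidence event. On $\mathcal E^c$ I bound the per-round term $\frac1\beta D_{\mathrm{KL}}$ crudely. Both functions take values in $[0,C]$, so the log density ratio $\beta(U_t-f)-\log Z_{\beta,t}(U_t)+\log Z_{\beta,t}(f)$ has absolute value at most $2\beta C$. Hence each KL term, in either direction, is at most $2\beta C$. Summing over $T$ rounds and multiplying by the indicator of $\mathcal E^c$ gives the $2CT\,\mathbb P(\mathcal E^c)$ term. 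All remaining work is on $\mathcal E$, where $L_t\le f\le U_t$ and so $0\le U_t-f\le w_{\mathcal F_t}$ pointwise on $\mathcal A_t$.

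For the reverse KL, I would use the exact identity
\[
\tfrac1\beta D_{\mathrm{KL}}(\pi_t\|\pi^*_t)=\mathbb E_{\pi_t}[U_t-f]-\tfrac1\beta\log\frac{Z_{\beta,t}(U_t)}{Z_{\beta,t}(f)}.
\]
Write $\Delta=U_t-f\ge0$. The function $\phi(s)=\log Z_{\beta,t}(f+s\Delta)$ is convex. It has $\phi'(1)=\beta\,\mathbb E_{\pi_t}[\Delta]$ and $\phi''(s)=\beta^2\mathrm{Var}_{S(f+s\Delta)}(\Delta)$.

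The linear bound follows because $\phi(1)-\phi(0)\ge0$, which holds by monotonicity since $\Delta\ge0$; this is exactly where optimism enters. Dropping that term gives $\frac1\beta D_{\mathrm{KL}}\le \mathbb E_{\pi_t}[\Delta]\le \mathbb E_{\pi_t}[w_{\mathcal F_t}(A)]$.

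For the squared bound, I would write
\[
\phi(0)=\phi(1)-\phi'(1)+\int_0^1 s\,\phi''(s)\,ds,
\]
so that $\frac1\beta D_{\mathrm{KL}}=\beta\int_0^1 s\,\mathrm{Var}_{s}(\Delta)\,ds$. The key step is then to bound $\mathrm{Var}_{s}(\Delta)\le \mathbb E_{S(f+s\Delta)}[\Delta^2]$ by $\mathbb E_{\pi_t}[\Delta^2]$, with no $e^{\beta C}$ factor. This should hold because tilting by $e^{\beta(1-s)\Delta}$ with $\Delta\ge0$ moves mass toward larger $\Delta$. Precisely, $\Delta^2$ and $e^{\beta(1-s)\Delta}$ are comonotone functions of $\Delta$, so Chebyshev's association inequality gives $\mathbb E_{s}[\Delta^2]\le\mathbb E_{1}[\Delta^2]$. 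Integrating $s$ over $[0,1]$ then yields $\frac1\beta D_{\mathrm{KL}}\le\frac\beta2\mathbb E_{\pi_t}[w^2]$.

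I presume this is the content of Lemma~\ref{lem:frequentist-kl-bound}. If it is not stated there in this form, the comonotonicity step is what I would prove, and I expect it to be the main obstacle. It is also where positivity of $U_t-f$ is essential: for generic SWC the shift $\Delta$ can change sign, which forces the density-ratio factor $e^{\beta C}$.

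The forward KL needs the same two bounds. Here $\frac1\beta D_{\mathrm{KL}}(\pi^*_t\|\pi_t)=\frac1\beta(\phi(1)-\phi(0))-\mathbb E_{\pi^*_t}[\Delta]$. Convexity gives $\phi(1)-\phi(0)\le\phi'(1)$, so this is at most $\mathbb E_{\pi_t}[\Delta]-\mathbb E_{\pi^*_t}[\Delta]\le\mathbb E_{\pi_t}[\Delta]$, which is the linear bound. For the squared bound, the Taylor expansion around $0$ gives $\beta\int_0^1(1-s)\mathrm{Var}_s(\Delta)\,ds$, and the same monotone-variance bound gives $\frac\beta2\mathbb E_{\pi_t}[\Delta^2]$.

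Finally, I would convert the $\pi_t$-expectations into the sampled widths by the tower property. Since $A_t\sim\pi_t$ conditionally on $(H_t,\mathcal A_t)$, and $w_{\mathcal F_t}$ and $\pi_t$ are measurable with respect to that information, we get $\mathbb E[\mathbb E_{\pi_t}[w(A)]\mathbf 1]=\mathbb E[w_{\mathcal F_t}(A_t)\mathbf 1]$. The indicator I use here is that of the per-round event $\{f\in\mathcal F_t\}\supseteq\mathcal E$, which is also determined by $(H_t,\mathcal A_t)$ for fixed $\theta$. The bad-round term then uses $\mathbb P(f\notin\mathcal F_t)\le\mathbb P(\mathcal E^c)$. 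Dropping the indicators, which is valid because the widths are nonnegative, and summing over $t$ gives all four inequalities.
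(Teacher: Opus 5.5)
Your proposal is correct and follows the paper's proof. The paper likewise splits each round on the $(H_t,\mathcal A_t)$-measurable event $\{f_\theta\in\mathcal F_t\}$, uses optimism to get $0\le U_t-f_\theta\le w_{\mathcal F_t}$, applies Lemma~\ref{lem:frequentist-kl-bound}(i) and (ii) with $M=0$ in both KL directions, and uses the $2\beta C$ log-density-ratio bound of part~(iii) off the event. The one cosmetic difference is that the paper gets the reverse-KL squared bound from the pointwise inequality $e^{-x}\le 1-x+x^2/2$ for $x\ge 0$ together with $\log u\le u-1$, while your comonotonicity step is the $M=0$ case of the covariance argument in Lemma~\ref{lem:reverse-jensen-gap}(ii), which the paper uses for the forward direction.
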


See Appendix~\ref{apx:regret-decom-ucs} for the proof.

\begin{theorem}[Bayesian Sampling Regret of TS$^2$]
\label{thm:regret-decom-ts2}
Let $\mathcal E$ denote the event $\{ f_\theta \in \cap_{t=1}^T \mathcal F_t \}$.
For TS$^2$, we have
\begin{align*}
\mathrm{BSRegret}(T)
&\le
\mathbb E\!\left[\textstyle \sum_{t=1}^T w_{\mathcal F_t}(A_t)\right]
+ 4 C T \mathbb P(\mathcal E^c),\\
\mathrm{BSRegret}(T)
&\le
\beta \mathbb E\!\left[\textstyle \sum_{t=1}^T (w_{\mathcal F_t}(A_t))^2 \right]
+ 4 C T \mathbb P(\mathcal E^c).
\end{align*}
Moreover, both inequalities remain valid with
$\mathrm{BSRegret}(T)$ replaced by
$\mathbb E[\mathrm{FKL\text{-}BSRegret}(T)]$.
\end{theorem}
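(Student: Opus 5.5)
The plan is to mirror the posterior-sampling argument behind Theorem~\ref{thm:ucb-ts}, with the Gibbs log-partition playing the role of the maximum. The starting point is an exact formula. For any $g$, with $\pi_g = S_{\beta,t}(g)$ and $\Phi_t(f) = \tfrac1\beta \log Z_{\beta,t}(f)$, we have
\[
\tfrac1\beta D_{\rm KL}\bigl(S_{\beta,t}(g)\,\|\,S_{\beta,t}(f_\theta)\bigr)
= \Phi_t(f_\theta) - \Phi_t(g) - \mathbb E_{A\sim\pi_g}\bigl[f_\theta(A)-g(A)\bigr].
\]
The forward KL is given by the same formula with the roles of $g$ and $f_\theta$ exchanged.

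The key Bayesian step uses $g_t = f_{\tilde\theta_t}$. Conditionally on $(H_t,\mathcal A_t)$, the pair $(\tilde\theta_t,\theta)$ is exchangeable. Since $\Phi_t$ is a measurable function of $(H_t,\mathcal A_t)$ and the parameter, we get $\mathbb E[\Phi_t(f_\theta)-\Phi_t(f_{\tilde\theta_t})]=0$. The per-round Bayesian reverse-KL term therefore reduces to $\mathbb E[(g_t - f_\theta)(A_t)]$, because $A_t\sim\pi_{g_t}$. For the forward KL the same cancellation leaves $\mathbb E_{A\sim S(f_\theta)}[(f_\theta-g_t)(A)]$. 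Exchangeability turns this into $\mathbb E[(g_t-f_\theta)(A_t)]$ as well, so both directions are handled together.

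\textbf{Linear-width bound.} On the event $\{f_\theta\in\mathcal F_t,\ f_{\tilde\theta_t}\in\mathcal F_t\}$ we have $|g_t-f_\theta|\le w_{\mathcal F_t}$ pointwise, so the term is at most $\mathbb E[w_{\mathcal F_t}(A_t)]$. The complement is handled as follows.
\begin{itemize}
\item Since $\mathcal F_t$ is $(H_t,\mathcal A_t)$-measurable, the posterior law gives $\mathbb P(f_{\tilde\theta_t}\notin\mathcal F_t)=\mathbb P(f_\theta\notin\mathcal F_t)$.
\item The integrand $(g_t-f_\theta)(A_t)$ is at most $C$ in absolute value.
\item The zero-mean term $\Phi_t(f_\theta)-\Phi_t(g_t)$ does not need to be localized to the good event. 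Cancellation holds for its full expectation, and I only split the remaining term.
\end{itemize}
Care is needed because I split $\mathbb E[(g_t-f_\theta)(A_t)]$ according to the event. The bad-event part is bounded by $C(\mathbb P(f_\theta\notin\mathcal F_t)+\mathbb P(f_{\tilde\theta_t}\notin\mathcal F_t))\le 2C\,\mathbb P(\mathcal E^c)$, and summing gives at most $4CT\,\mathbb P(\mathcal E^c)$. The constant $4$ leaves slack for a cruder split, for instance bounding the differences by $2C$ if needed.

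\textbf{Squared-width bound.} This is the main obstacle, since pointwise linear control only gives the first inequality. I will use the fact that $\Phi_t$ is convex with gradient $\pi_f$ and Hessian $\beta\,\mathrm{Var}_{\pi_f}$, together with the identity that $\tfrac1\beta D_{\rm KL}$ is the Bregman divergence of $\Phi_t$. A symmetrized version is
\[
\tfrac1\beta\bigl[D(\pi_g\|\pi_f)+D(\pi_f\|\pi_g)\bigr]
=\mathbb E_{\pi_g}[g-f]-\mathbb E_{\pi_f}[g-f].
\]
By exchangeability, the Bayesian reverse KL equals half of the expected symmetrized KL. I would then bound the symmetrized KL by $\beta\int_0^1 \mathrm{Var}_{\pi_s}(g-f)\,ds$, taken along the interpolating tilts $\pi_s$. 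This is presumably the content of Lemma~\ref{lem:ts-kl-bound-2}.

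The difficulty is to convert this into $\mathbb E_{\pi_g}[(g-f)^2]$ without paying $e^{\beta C}$. Exchangeability lets me average the integral between $s$ and $1-s$, that is, between $\pi_g$-weighting and $\pi_f$-weighting. I would use the log-convexity $\pi_s\le(1-s)\pi_f+s\pi_g$ up to partition-function factors, with those factors cancelling in expectation by convexity of $\Phi_t$. Combined with $\mathrm{Var}\le$ second moment, this yields $\tfrac{\beta}{2}\cdot 2\,\mathbb E[\mathbb E_{\pi_{g_t}}(g_t-f_\theta)^2]\le \beta\,\mathbb E[w_{\mathcal F_t}(A_t)^2]$ on the good event. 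The bad event is treated as above. If the partition-factor step fails, I would instead invoke Lemma~\ref{lem:ts-kl-bound-2} directly as a black box.
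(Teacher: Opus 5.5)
Your reduction of both KL directions to $\mathbb E[(g_t-f_\theta)(A_t)]$ is sound, and so is the linear-width bound. Cancelling $\Phi_t(f_\theta)-\Phi_t(g_t)$ globally and splitting only the linear term on $\mathcal G_t=\{f_\theta\in\mathcal F_t,\ g_t\in\mathcal F_t\}$ is a leaner alternative to the paper's conditioning on the restricted i.i.d.\ law $\rho_t^2$, and it even gives $2CT\,\mathbb P(\mathcal E^c)$.

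The gap is in the squared-width bound, which is the substance of the theorem. Your identity $\tfrac1\beta[D_{\mathrm{KL}}(\pi_g\|\pi_f)+D_{\mathrm{KL}}(\pi_f\|\pi_g)]=\beta\int_0^1\operatorname{Var}_{\pi_s}(g-f)\,ds$ is correct. However, dominating $\pi_s$ by $\frac{(1-s)Z_f}{Z_s}\pi_f+\frac{sZ_g}{Z_s}\pi_g$ cannot give an exponent-free bound. The ratios multiply nonnegative second moments, so nothing cancels in expectation. Convexity of $\Phi_t$ only gives tangent bounds such as $Z_g/Z_s\le\exp\bigl(\beta(1-s)\mathbb E_{\pi_g}[g-f]\bigr)$, which can be of order $e^{\beta C}$. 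For example, take two actions, $\mu$ uniform, $f\equiv0$, $g=C\mathbf 1_{\{a_2\}}$ and $\beta C\ge1$. The $\pi_g$-part of your bound on $\int_0^1\mathbb E_{\pi_s}[(g-f)^2]\,ds$ is $\int_0^1 sC^2e^{\beta C}/(1+e^{s\beta C})\,ds\ge e^{\beta C}/(2(1+e)\beta^2)$, whereas $\int_0^1\operatorname{Var}_{\pi_s}(g-f)\,ds\le C/\beta$. Exchanging $f$ and $g$ only moves the blow-up to the other endpoint, so averaging over the exchangeable pair does not remove it. Your guess that Lemma~\ref{lem:ts-kl-bound-2} is this integral representation is also off, so the fallback, as you describe it, does not address the difficulty you identified.

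The missing idea is a deterministic symmetric bound that never passes through intermediate tilts. Set $h=f_2-f_1$, $a=\tfrac1\beta\log\mathbb E_{\pi_1}[e^{\beta h}]$ and $u=\beta(h-a)=\log\frac{d\pi_2}{d\pi_1}$. Then
\begin{itemize}
\item $D_{\mathrm{KL}}(\pi_1\|\pi_2)+D_{\mathrm{KL}}(\pi_2\|\pi_1)=\int(e^u-1)u\,d\pi_1$.
\item The elementary inequality $(e^u-1)u\le\frac{e^u+1}{2}u^2$, together with $e^u\,d\pi_1=d\pi_2$, bounds this by $\tfrac{\beta^2}{2}\bigl(\int(h-a)^2d\pi_1+\int(h-a)^2d\pi_2\bigr)$. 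The only exponential is absorbed as the density between the two endpoint measures.
\item Convexity of $t\mapsto\log\int e^{th}\,d\pi_1$ places $a$ between $\mathbb E_{\pi_1}h$ and $\mathbb E_{\pi_2}h$. Centering therefore costs at most a factor $2$, giving $\beta^2\bigl(\mathbb E_{\pi_1}[h^2]+\mathbb E_{\pi_2}[h^2]\bigr)$.
\end{itemize}

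With this inequality your framework closes with the stated constants. On $\mathcal G_t$, which is symmetric in $(\theta,\tilde\theta_t)$, exchangeability lets you replace the reverse KL by half the symmetrized KL. Bound $h^2\le w_{\mathcal F_t}^2$, then use exchangeability again to convert the $\pi_{f_\theta}$-term into the $\pi_{g_t}$-term; this gives $\beta\,\mathbb E[w_{\mathcal F_t}(A_t)^2\mathbf 1_{\mathcal G_t}]$. On $\mathcal G_t^c$, Lemma~\ref{lem:frequentist-kl-bound}(iii) gives $2C\,\mathbb P(\mathcal G_t^c)\le4C\,\mathbb P(\mathcal E^c)$.
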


Posterior exchangeability gives TS$^2$ its two-sided KL guarantee: conditional on the history and action set, the true parameter and posterior draw are i.i.d., and restricting to the confidence event preserves this symmetry. Swapping them equates the expected directed-KL terms; Lemma~\ref{lem:ts-kl-bound-2} yields the squared-width bound, while Lemma~\ref{lem:ts-kl-bound-1} gives the linear-width bound. For UCS, pointwise optimism controls both KL directions with identical exponential-free bounds; generic SWC incurs the factor $\exp(\beta C)$. See Appendices~\ref{apx:regret-decom-ucs}, \ref{apx:regret-decom-swc}, and \ref{apx:regret-decom-ts2} for details.

\paragraph{Role of $\beta$ and geometric regimes.}
UCS and TS$^2$ admit temperature-independent linear-width bounds and temperature-dependent squared-width bounds. Write $D=(1+\gamma_T)d_E$ in Theorem~\ref{thm:main}; up to logarithmic factors, the resulting guarantee is
$\min\!\left\{\sqrt{DT},\,\beta D\right\},$
so the squared-width bound is smaller when $\beta\leq\sqrt{T/D}$. Figure~\ref{fig:regret-min-bounds} illustrates the schematic case $D\asymp\log T$, with crossover $\beta^\star(T)=\sqrt{T/\log T}$. Thus, at fixed $\beta$, the logarithmic branch is eventually smaller as $T$ grows; at fixed $T$, sufficiently large $\beta$ favors the square-root branch. The crossover depends on statistical complexity and reward normalization.

By contrast, writing $W_2(T)=\mathbb E[\sum_{t=1}^T w_{\mathcal F_t}(A_t)^2]$, generic SWC's reverse- and forward-KL bounds are sublinear when $\beta\exp(\beta C)W_2(T)=o(T)$, assuming sublinear confidence-failure terms. No temperature-independent sublinear guarantee holds uniformly over SWC, which includes greedy procedures whose hard-maximization limits can incur linear regret. This contrast highlights the additional guarantees of UCS and TS$^2$.

\section{A Black-Box Transfer of Confidence-Width Analyses to Sampling Regret}

Theorems~\ref{thm:regret-decom-ucs} and~\ref{thm:regret-decom-ts2} separate the analysis of tilted sampling into two components: a Gibbs/KL decomposition and a statistical bound on the accumulated confidence widths. This separation provides a black-box transfer principle for analyses in the UCB and Thompson-sampling framework of~\citet{russo14_learn_optim_via_poster_sampl}. Replacing maximization with Gibbs sampling gives UCS and TS$^2$; supplying bounds on their width processes gives sampling-regret guarantees for every fixed $\beta>0$. Linear-width bounds give temperature-independent guarantees, and squared-width bounds give the complementary temperature-dependent guarantees.

The underlying model has direct noisy reward feedback and no unknown transition dynamics. The statistical inputs are confidence sets valid under the sampling procedure and bounds on the widths along its action sequence. These inputs can come from standard confidence-set and Eluder-dimension arguments or from other model-specific constructions. Each construction can be combined with the same Gibbs/KL decomposition.

Self-normalized concentration and Eluder-dimension arguments used in bandit optimization often control squared widths as well as linear widths. The following subsections develop this route using the framework of~\citet{russo14_learn_optim_via_poster_sampl,russo13_eluder_dimen_sampl_compl_optim_explor}: we construct confidence sets, bound their squared-width process, and combine these bounds with the sampling-regret decompositions.

\subsection{Confidence Sets}
\label{subsec:confidence-sets}

We now discuss a general construction of confidence sets based on least-squares estimation.
The key point is that this construction, together with standard concentration results and Eluder-dimension arguments, is classical and independent of the sampling objective itself.

Assume throughout that
the observation noise is conditionally $\sigma$-sub-Gaussian.
Define the cumulative squared loss
$L_{2,t}(f) = \sum_{s<t}(f(A_s)-Y_s)^2$,
and let
$\hat f_t^{\mathrm{LS}}
\in
\arg\min_{f\in\mathcal F} L_{2,t}(f)$
denote a least-squares estimator.
For a function $h:\mathcal A\to\mathbb R$, define the empirical norm
$\|h\|_{2,E_t}^2 := \sum_{s<t} h(A_s)^2$.
Following~\cite{russo14_learn_optim_via_poster_sampl}, we consider confidence sets of the form
\[
\mathcal F_t
=
\left\{
f\in\mathcal F:
\|f-\hat f_t^{\mathrm{LS}}\|_{2,E_t}
\le
\sqrt{\gamma_t}
\right\},
\]
where the increasing sequence $\gamma_t$ is chosen so that the confidence sets contain $f_\theta$ uniformly over time with high probability.
In the setting considered in~\cite{russo14_learn_optim_via_poster_sampl}, noise is assumed to be sub-Gaussian and $\gamma_t$ depends on the sub-Gaussian noise parameter, the confidence level, and the metric entropy of $\mathcal F$.
More precisely, if $\op{dim}_K(\mathcal F)$ denotes the Kolmogorov dimension of $\mathcal F$, then Propositions~6 and~7 of~\cite{russo14_learn_optim_via_poster_sampl} show that, for a fixed horizon $T$, one may choose an increasing sequence $\gamma_t$ such that
$\max_{1\le t\le T}\gamma_t = \widetilde O\!\left(1+\op{dim}_K(\mathcal F)\right)$
and the confidence event
$\mathcal E
=
\left\{
f_\theta \in \cap_{t=1}^T \mathcal F_t
\right\}$
satisfies
$\mathbb P(\mathcal E^c)\le 1/T$.
In particular, for standard finite-dimensional parametric classes, one may choose
$\max_{1\le t\le T}\gamma_t = \widetilde O(d)$,
where $d$ is the effective dimension of the model class.
If $\mathcal F$ is finite, one may choose, for every $t\le T$, we have
$\gamma_t = 8\sigma^2 \log\!\left(2|\mathcal F|T\right)$,
which follows directly from the finite-class confidence bound in~\cite{russo14_learn_optim_via_poster_sampl}.

\begin{table*}[!t]
\centering
\small
\setlength{\tabcolsep}{4pt}
\begin{tabular}{@{}lcccc@{}}
\toprule
Problem & $\gamma_T$ & $d_E$ & $(1+\gamma_T)d_E$ & Prior special case \\
\midrule

$K$-armed bandit
& $\widetilde O(\log K)$
& $\widetilde O(K)$
& $\widetilde O(K\log K)$
& SWC--RKL/FKL \cite{pedramfar26_multi_armed_sampl_probl_end_explor} \\
\midrule

Finite class $\mathcal F$
& $\widetilde O(\log|\mathcal F|)$
& $d_E$
& $\widetilde O(d_E\log|\mathcal F|)$
& \shortstack{UCS--RKL~\cite{zhao2025logarithmic}\\Greedy--RKL--pref.~\cite{wu2025greedy}} \\
\midrule

Linear bandit ($d$-dim.)
& $\widetilde O(d)$
& $\widetilde O(d)$
& $\widetilde O(d^2)$
& --- \\
\midrule

Sparse linear ($s$-sparse)
& $\widetilde O(s\log d)$
& $\widetilde O(d)$
& $\widetilde O(s d\log d)$
& --- \\
\midrule

GLM ($d$-dim.)
& $\widetilde O(d)$
& $\widetilde O(d)$
& $\widetilde O(d^2)$
& --- \\

\bottomrule
\end{tabular}
\vspace{1mm}
\caption{Complexity instantiations for Theorem~\ref{thm:main}. Here $d_E=\operatorname{EC}^{\operatorname{sq}}(1,\mathcal F,T)$, and rates suppress logarithmic factors in $T$ and problem-dependent constants. Substituting the fourth column into Theorem~\ref{thm:main} gives two-sided UCS, TS$^2$, and SWC regret bounds. The final column records the limited prior special cases:~\citet{pedramfar26_multi_armed_sampl_probl_end_explor} treat finite-armed SWC for both KL directions, with bounded noise assumed for forward KL;~\citet{zhao2025logarithmic} establish the finite-class result only for UCS and reverse KL; and~\citet{wu2025greedy} analyze greedy sampling under preference feedback, with reverse-KL-equivalent regret in the Bradley--Terry specialization. Our SWC result instead uses direct reward feedback. For finite $\mathcal F$, cardinality controls $\gamma_T$, while $d_E$ remains a separate, model-dependent exploration complexity. The sparse-linear row uses the generic ambient-dimensional bound on $d_E$; sharper rates require additional structure. The generalized-linear row assumes the usual quantitative regularity conditions on the link function.
}
\label{tbl:examples}
\end{table*}

\subsection{Eluder Dimension and the Squared Width Process}

We now introduce the complexity measure controlling the squared width process.
We begin by recalling the standard notion of Eluder dimension introduced in~\cite{russo14_learn_optim_via_poster_sampl}.

\begin{definition}
Let $\mathcal F$ be a class of functions $f:\mathcal A \to \mathbb{R}$ and let $\varepsilon > 0$. An action $a \in \mathcal A$ is said to be \emph{$\varepsilon$-dependent} on $\{a_1,\dots,a_n\}$ if for all $f,g \in \mathcal F$,
$\sum_{i=1}^n (f(a_i) - g(a_i))^2 \le \varepsilon^2$
implies that
$|f(a) - g(a)| \le \varepsilon$.
Otherwise, $a$ is $\varepsilon$-independent.
The \emph{$\varepsilon$-eluder dimension} $\dim_E(\mathcal F,\varepsilon)$ is the length of the longest sequence $(a_1,\dots,a_d)$ such that each $a_t$ is $\varepsilon$-independent of $\{a_1,\dots,a_{t-1}\}$.
\end{definition}

Eluder dimension controls how long uncertainty can persist along a trajectory of actions. To quantify accumulated squared uncertainty, we use a squared-ratio variant of the eluder coefficient in~\citet[Definition~17.17]{zhang2023mathematical}. The historical discussion in Section~17.6 of that book traces the eluder-coefficient concept to~\citet{dann2021provably}, where it is called a decoupling coefficient. The definition below uses an untruncated sum of squared ratios, whereas~\citet[Definition~3.3]{zhao2025logarithmic} clips each squared term at one.

\begin{definition}
Let $\mathcal F$ be a class of functions $f:\mathcal A\to\mathbb R$, and fix $\epsilon>0$.
Given a history of actions $A_1,\dots,A_{t-1}$, define the local uncertainty at action $a\in\mathcal A$ by
\[
\sigma(a)
:=
\sup_{f,g\in\mathcal F}
\frac{
    |f(a)-g(a)|
}{
    \sqrt{\epsilon+\sum_{s<t}(f(A_s)-g(A_s))^2}
}.
\]
Note that the function $\sigma$ depends on $a$, $\epsilon$, $\mathcal F$, and the sequence of actions $\{A_i\}_{i=1}^{t-1}$.
When there is no ambiguity, we use $\sigma_t(a)$ to denote $\sigma\left(a; \epsilon, \mathcal F, \{A_i\}_{i=1}^{t-1}\right)$.
The squared eluder coefficient $\operatorname{EC}^{\operatorname{sq}}(\epsilon,\mathcal F,T)$ is the smallest number $d$ such that, for every action sequence $a_1,\dots,a_T$ and every sequence of function pairs $(f_t,g_t)_{t=1}^T$ with $f_t,g_t\in\mathcal F$,
\[
    \sum_{t=1}^T
    \frac{
        (f_t(a_t)-g_t(a_t))^2
    }{
        \epsilon+\sum_{s<t}(f_t(a_s)-g_t(a_s))^2
    }
    \le d.
\]
\end{definition}

The importance of this quantity comes from the following elementary observation. Once confidence sets are defined through empirical least-squares constraints, the definition itself immediately yields bounds for both the width process and the squared width process.
See Appendix~\ref{apx:width-eluder} for the proof of the following proposition.

\begin{proposition}
\label{prop:width-eluder}
For every $a\in\mathcal A$ and every $\epsilon>0$,
we have
$w_{\mathcal F_t}(a)
\le
\sqrt{\epsilon+4\gamma_t} \sigma_t(a)$.
Thus
\begin{align*}
\sum_{t=1}^T w_{\mathcal F_t}^2(A_t)
&\le
(\epsilon+4\gamma_T) 
\operatorname{EC}^{\operatorname{sq}}(\epsilon,\mathcal F,T),\\
\sum_{t=1}^T w_{\mathcal F_t}(A_t)
&\le
\sqrt{(\epsilon+4\gamma_T) 
\operatorname{EC}^{\operatorname{sq}}(\epsilon,\mathcal F,T) T}.
\end{align*}
\end{proposition}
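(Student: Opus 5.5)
The plan is to prove the pointwise width bound first, directly from the definition of $\mathcal F_t$, and then sum it using the definition of $\operatorname{EC}^{\operatorname{sq}}$.

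\emph{Pointwise bound.} Fix $a\in\mathcal A_t$ and take any $f,g\in\mathcal F_t$. Both functions lie within empirical distance $\sqrt{\gamma_t}$ of $\hat f_t^{\mathrm{LS}}$. The triangle inequality for the seminorm $\|\cdot\|_{2,E_t}$ then gives
\[
\|f-g\|_{2,E_t}^2\le 4\gamma_t .
\]
By the definition of $\sigma_t(a)$,
\[
|f(a)-g(a)|\le \sigma_t(a)\sqrt{\epsilon+\|f-g\|_{2,E_t}^2}\le \sigma_t(a)\sqrt{\epsilon+4\gamma_t}.
\]
Since $w_{\mathcal F_t}(a)=\sup_{f,g\in\mathcal F_t}(f(a)-g(a))$, taking the supremum over $f,g$ yields the first claim. (If $\mathcal F_t$ is empty, the width is $-\infty$ or taken as zero, and there is nothing to prove.)

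\emph{Squared-width sum.} Square the pointwise bound at $a=A_t$ and use that $\gamma_t\le\gamma_T$ because the sequence is increasing:
\[
\sum_t w_{\mathcal F_t}^2(A_t)\le(\epsilon+4\gamma_T)\sum_t\sigma_t(A_t)^2 .
\]
It remains to show $\sum_t\sigma_t(A_t)^2\le \operatorname{EC}^{\operatorname{sq}}(\epsilon,\mathcal F,T)$. For each $t$ and any $\eta>0$, choose a near-maximizing pair $(f_t,g_t)\in\mathcal F^2$ such that
\[
\frac{(f_t(A_t)-g_t(A_t))^2}{\epsilon+\sum_{s<t}(f_t(A_s)-g_t(A_s))^2}\ge \sigma_t(A_t)^2-\eta/T .
\]
If $\sigma_t(A_t)=\infty$, the ratio can be made arbitrarily large. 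That contradicts the finiteness of $\operatorname{EC}^{\operatorname{sq}}$, which the bound implicitly assumes; if it is infinite, the bound is trivial. Apply the definition of $\operatorname{EC}^{\operatorname{sq}}$ to the action sequence $A_1,\dots,A_T$ and these pairs. This bounds the sum of ratios by $\operatorname{EC}^{\operatorname{sq}}$, so $\sum_t\sigma_t(A_t)^2\le \operatorname{EC}^{\operatorname{sq}}+\eta$. Letting $\eta\to0$ gives the squared-width bound.

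\emph{Linear-width sum.} This follows from Cauchy--Schwarz:
\[
\sum_t w\le\sqrt{T\sum_t w^2}.
\]

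The main subtlety is that the squared eluder coefficient is defined with one pair per time step, whereas $\sigma_t$ is a supremum. The near-maximizer selection handles this, and since the definition quantifies over every action sequence and every pair sequence, adaptively chosen $A_t$ cause no issue. Beyond that, the argument only needs care with the monotonicity of $\gamma_t$ and with the degenerate cases.
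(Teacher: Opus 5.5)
Your proof is correct and follows the paper's argument: the triangle inequality in $\|\cdot\|_{2,E_t}$ gives the $4\gamma_t$ bound, you compare with $\sigma_t$ over the larger class $\mathcal F$, then use monotonicity of $\gamma_t$, the definition of $\operatorname{EC}^{\operatorname{sq}}$, and Cauchy--Schwarz. Your near-maximizer selection makes explicit the step $\sum_t\sigma_t(A_t)^2\le\operatorname{EC}^{\operatorname{sq}}(\epsilon,\mathcal F,T)$, which the paper asserts directly from the definition; note also that $\sigma_t\le C/\sqrt{\epsilon}$, so the infinite case you guard against never arises.
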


Thus the same second-order quantity simultaneously controls both the sampling regret bounds of Section~\ref{sec:tilted-sampling} and the classical width process appearing in RL regret analyses.
The next result shows that the squared Eluder coefficient remains closely connected to the classical Eluder dimension.

\begin{proposition}
\label{prop:ecsq-vs-eluder}
Assume that for some constants $a,b\ge 0$ and a non-negative function $c$, we have
$\dim_E(\mathcal F,\varepsilon) \leq c(T) \varepsilon^{-a} \log(1/\varepsilon)^b$, 
for all $\varepsilon > 0$.
Then
\[
    \operatorname{EC}^{\operatorname{sq}}(\epsilon,\mathcal F,T)
    =
    \tilde{O}\left(
    \left(
    \epsilon^{-1}
    +
    c(T)(1+\epsilon^{-1})
    \right)
    T^{a/(a+2)}
    \right).
\]
\end{proposition}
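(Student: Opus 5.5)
The strategy is to fix an arbitrary action sequence $a_1,\dots,a_T$ and function pairs $(f_t,g_t)$, and to bound the sum of ratios
\[
r_t=\frac{(f_t(a_t)-g_t(a_t))^2}{\epsilon+\sum_{s<t}(f_t(a_s)-g_t(a_s))^2}
\]
using the classical Eluder-dimension counting argument of \citet{russo14_learn_optim_via_poster_sampl}. The sum is split by a threshold on the numerator. Write $\Delta_t=|f_t(a_t)-g_t(a_t)|$ and $N_t=\sum_{s<t}(f_t(a_s)-g_t(a_s))^2$, and fix a level $\alpha>0$ to be optimized later.

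\emph{Small gaps.} For rounds with $\Delta_t\le\alpha$ we use $r_t\le\Delta_t^2/\epsilon\le\alpha^2/\epsilon$. These rounds contribute at most $T\alpha^2/\epsilon$ in total.

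\emph{Large gaps.} For rounds with $\Delta_t>\alpha$ we use a dyadic peeling over scales $\varepsilon_k=2^k\alpha$. The only a priori control on the range of $\Delta_t$ is boundedness of the class. The statement has no $C$ dependence, but a finite range (e.g.\ $[0,C]$ as assumed in the setup) gives $O(\log(C/\alpha))$ levels, which is absorbed in the $\tilde O$.

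Within level $k$, i.e.\ rounds with $\Delta_t\in(\varepsilon_k,2\varepsilon_k]$, we need the Russo--Van Roy lemma. If $\Delta_t>\varepsilon$, then $a_t$ is $\varepsilon$-dependent on at most $\dim_E(\mathcal F,\varepsilon)$ disjoint subsequences of previous actions. Since $\Delta_t>\varepsilon$ and $f_t,g_t$ witness any dependence failure, each such subsequence on which $a_t$ is $\varepsilon$-dependent must carry squared difference exceeding $\varepsilon^2$. Hence, if the $j$-th round at this level (in time order) is considered, we have $N_t\gtrsim \varepsilon_k^2\,\lfloor j/(d_k+1)\rfloor$ with $d_k=\dim_E(\mathcal F,\varepsilon_k)$. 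Therefore
\[
r_t\le \frac{4\varepsilon_k^2}{\epsilon+\varepsilon_k^2\lfloor j/(d_k+1)\rfloor}.
\]
Summing over $j\le T$ gives a level-$k$ contribution of at most
\[
(d_k+1)\Bigl(4\varepsilon_k^2/\epsilon+4\sum_{m\ge1}\min\{1/m,\dots\}\Bigr)=O\bigl((d_k+1)(1+\varepsilon_k^2/\epsilon)\log T\bigr).
\]
The term $\varepsilon_k^2/\epsilon$ for the first block is where the factor $(1+\epsilon^{-1})$ comes from, using $\varepsilon_k\le C$. Plugging in the hypothesis $d_k\le c(T)\varepsilon_k^{-a}\log(1/\varepsilon_k)^b$ and summing the geometric series over $k$ makes the smallest scale dominate. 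The large-gap rounds thus contribute $\tilde O\bigl((1+c(T)\alpha^{-a})(1+\epsilon^{-1})\bigr)$.

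\emph{Optimization.} The total is bounded by $T\alpha^2/\epsilon+\tilde O\bigl((1+c(T)\alpha^{-a})(1+\epsilon^{-1})\bigr)$. Choosing $\alpha=T^{-1/(a+2)}$ balances $T\alpha^2$ against $\alpha^{-a}$, both becoming $T^{a/(a+2)}$. This yields
\[
\tilde O\bigl((\epsilon^{-1}+c(T)(1+\epsilon^{-1}))T^{a/(a+2)}\bigr),
\]
as claimed; note that the additive constant is itself dominated by this expression.

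\emph{Main obstacle.} The delicate step is the dependence-counting lemma in the present setting. The pairs $(f_t,g_t)$ change with $t$, and the denominator uses the full history rather than a confidence-set constraint. We must therefore redo the Russo--Van Roy pigeonhole argument directly with the witness pair $(f_t,g_t)$: greedily place prior same-level actions into $d_k+1$ buckets, each $\varepsilon_k$-independent within itself. Then $a_t$ must be $\varepsilon_k$-dependent on some bucket of size $\sim j/(d_k+1)$, and with the witness $(f_t,g_t)$ this forces $N_t>\varepsilon_k^2\cdot(\text{number of such buckets})$. Care with floors and the $j<d_k+1$ regime (where only the $\epsilon$ in the denominator helps) produces the $\epsilon^{-1}$ terms.
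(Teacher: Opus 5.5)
Your overall architecture matches the paper's: threshold the numerator at $\alpha\asymp T^{-1/(a+2)}$, peel the large gaps dyadically, count rounds at each scale with the eluder dimension, and let the finest scale dominate the geometric sum. However, the counting step at the heart of it is false as stated.

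The claim that the $j$-th level-$k$ round \emph{in time order} has $N_t\gtrsim\varepsilon_k^2\lfloor j/(d_k+1)\rfloor$ fails. Take $\mathcal A=\{1,2\}$ and $\mathcal F=\{0,\,C\mathbf 1_{\{1\}},\,C\mathbf 1_{\{2\}}\}$, for which $\dim_E(\mathcal F,\varepsilon)=2$ whenever $\varepsilon<C$. Play $a_s=1$ with the pair $(C\mathbf 1_{\{1\}},0)$ for $s<j$, and then $a_j=2$ with the pair $(C\mathbf 1_{\{2\}},0)$. Every round has gap $C$, yet $N_j=0$ for arbitrarily large $j$.

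The Russo--Van Roy lemma only asserts that in any sequence of length $\tau$, \emph{some} element is $\varepsilon$-dependent on roughly $\tau/d$ disjoint subsequences of its predecessors. It yields a count only when combined with a uniform upper bound on past disagreement (their confidence radius), and that bound is exactly what is missing here. Your proposed repair does not supply it either. Buckets whose elements are $\varepsilon_k$-independent of their bucket predecessors have length at most $d_k$, so no bucket can have size of order $j/(d_k+1)$. Moreover, dependence on a single bucket only yields $N_t>\varepsilon_k^2$.

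The argument must run in the contrapositive direction: for every integer $K\ge1$, at most $Kd_k$ level-$k$ rounds satisfy $N_t<K\varepsilon_k^2$. To see this, process those rounds in time order and place each into one of $K$ blocks on which the witness pair $(f_t,g_t)$ has past squared disagreement at most $\varepsilon_k^2$. Such a block exists because the block sums add up to at most $N_t<K\varepsilon_k^2$. Since $\Delta_t>\varepsilon_k$, each block is then an $\varepsilon_k$-independent sequence and has length at most $d_k$. Sorting the level-$k$ rounds by $N_t$ rather than by time then recovers your bound $O\bigl(d_k(\varepsilon_k^2/\epsilon+\log T)\bigr)$ for the level.

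The paper achieves the same with a second dyadic peeling on $\epsilon+P_t$. It uses a low-denominator set that is itself $\alpha_{k+1}$-independent, together with sets $\mathcal S_{k,\ell}$ satisfying $|\mathcal S_{k,\ell}|\le(2^{\ell+1}+1)\dim_E(\mathcal F,\alpha_{k+1})$, proved by precisely this greedy-block argument. With this replacement, the remainder of your proof, including the choice of $\alpha$, goes through.
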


\subsection{General Regret Guarantees}

We now combine the regret decompositions of Section~\ref{sec:tilted-sampling} with the confidence-set and Eluder-type analysis developed above. 
Specifically, using Theorems~\ref{thm:regret-decom-swc},~\ref{thm:regret-decom-ucs},~\ref{thm:regret-decom-ts2}, Proposition~\ref{prop:width-eluder} and the discussion in Subsection~\ref{subsec:confidence-sets}, we immediately obtain the following theorem.

\begin{theorem}
\label{thm:main}
For UCS and SWC, use expected sampling regret with fixed unknown $\theta$; for TS$^2$, use Bayesian sampling regret with $\theta\sim\rho$. Expectations and confidence-event probabilities follow the corresponding algorithm's law, as in Section~\ref{subsec:decomp}.
If confidence sets are chosen as discussed in Subsection~\ref{subsec:confidence-sets} with $\mathbb P(\mathcal E^c) = O(1/T)$, then reverse- and forward-KL regret of UCS and TS$^2$ are bounded by
\begin{align*}
\tilde{O}\left(\min\left\{ 
    \sqrt{(1+\gamma_T) d_E T}, \beta (1+\gamma_T) d_E
\right\}\right),
\end{align*}
where $d_E := \operatorname{EC}^{\operatorname{sq}}(1,\mathcal F,T)$.
Moreover, for any algorithm that follows SWC framework with respect to such a choice of confidence sets, both reverse- and forward-KL regrets are bounded by
\[\tilde{O}(\beta \exp(\beta C) (1+\gamma_T) d_E).\]
In particular, the choice $\gamma_T=\tilde O(1+\op{dim}_K(\mathcal F))$ recovers the corresponding bounds in terms of Kolmogorov dimension.
\end{theorem}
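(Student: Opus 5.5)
The proof is a direct combination of the regret decompositions (Theorems~\ref{thm:regret-decom-swc}, \ref{thm:regret-decom-ucs}, \ref{thm:regret-decom-ts2}) with the deterministic width bounds of Proposition~\ref{prop:width-eluder} at $\epsilon=1$.

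\textbf{Step 1: pathwise width bounds.} Fix the confidence sets of Subsection~\ref{subsec:confidence-sets}, so that $\gamma_t$ is increasing and $\mathbb P(\mathcal E^c)=O(1/T)$. Proposition~\ref{prop:width-eluder} bounds the width sums for every realization of the action sequence, because $\operatorname{EC}^{\operatorname{sq}}$ is a supremum over all action sequences and function pairs. With $\epsilon=1$ and $d_E=\operatorname{EC}^{\operatorname{sq}}(1,\mathcal F,T)$, this gives, almost surely and regardless of the algorithm generating the actions,
\[
\textstyle\sum_{t=1}^T w_{\mathcal F_t}(A_t)^2 \le (1+4\gamma_T)d_E,
\qquad
\textstyle\sum_{t=1}^T w_{\mathcal F_t}(A_t) \le \sqrt{(1+4\gamma_T)d_E T}.
\]
Taking expectations under the relevant law preserves both bounds. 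For UCS and SWC this is the frequentist law with $\theta$ fixed; for TS$^2$ it is the joint Bayesian law. This is the one point needing care: $\gamma_T$ must be deterministic, or bounded by a deterministic $\tilde O(1+\op{dim}_K(\mathcal F))$ quantity, as Subsection~\ref{subsec:confidence-sets} provides.

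\textbf{Step 2: UCS and TS$^2$.} Substitute the Step~1 bounds into both inequalities of Theorem~\ref{thm:regret-decom-ucs} and of Theorem~\ref{thm:regret-decom-ts2}. The confidence-failure terms become $2CT\,\mathbb P(\mathcal E^c)=O(C)$ and $4CT\,\mathbb P(\mathcal E^c)=O(C)$, which are absorbed into $\tilde O(\cdot)$. Since each theorem yields two valid upper bounds, the regret is at most their minimum:
\[
O\!\left(\min\left\{\sqrt{(1+\gamma_T)d_E T},\ \beta(1+\gamma_T)d_E\right\}\right).
\]
Both theorems also hold for forward KL, so the same bound applies to FKL regret.

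\textbf{Step 3: generic SWC.} Apply Theorem~\ref{thm:regret-decom-swc} with the squared-width bound from Step~1. This gives
\[
\tfrac12\beta e^{\beta C}(1+4\gamma_T)d_E+O(C)=\tilde O\!\left(\beta e^{\beta C}(1+\gamma_T)d_E\right),
\]
and again the bound holds in both KL directions.

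\textbf{Step 4: Kolmogorov dimension.} Plug in $\gamma_T=\tilde O(1+\op{dim}_K(\mathcal F))$ from Propositions~6--7 of \citet{russo14_learn_optim_via_poster_sampl} to obtain the stated bounds in terms of Kolmogorov dimension.
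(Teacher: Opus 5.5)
Your proposal is correct and is the paper's own argument: the paper states that Theorem~\ref{thm:main} follows ``immediately'' by plugging Proposition~\ref{prop:width-eluder} at $\epsilon=1$ into Theorems~\ref{thm:regret-decom-swc}, \ref{thm:regret-decom-ucs}, and~\ref{thm:regret-decom-ts2}, with the $O(CT\cdot T^{-1})$ failure terms absorbed into $\tilde O(\cdot)$, exactly as you do. Your extra remarks only make explicit what the paper leaves implicit: the width bounds hold pathwise under any algorithm's law, $\gamma_T$ must be deterministic, and $1+4\gamma_T\le 4(1+\gamma_T)$.
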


Theorem~\ref{thm:main} instantiates the sampling-regret decompositions with confidence-radius and squared-width bounds. This gives a common route from model-specific complexity bounds to guarantees for UCS, TS$^2$, and SWC.
The squared-width UCS bound recovers the optimistic decomposition of~\citet{zhao2025logarithmic} in their bandit setting when paired with their confidence sets and width bound. Finite-armed SWC procedures connect to the sampling framework of~\citet{pedramfar26_multi_armed_sampl_probl_end_explor}. The greedy analysis of~\citet{wu2025greedy} uses pairwise preference feedback; adapting its proof strategy to direct reward feedback gives the corresponding SWC-type reverse-KL guarantee. Table~\ref{tbl:examples} lists model-specific instantiations, and Appendix~\ref{app:related-work} details these connections.

\section{Limitations and Future Directions}
\label{sec:limitations}
The logarithmic-in-$T$ guarantees hold at fixed $\beta$. UCS and TS$^2$ also admit temperature-independent linear-width bounds, while generic SWC incurs the factor $\exp(\beta C)$ for both KL directions. A matching lower bound for the joint dependence on $T$, $\beta$, and function-class complexity remains open. The analysis assumes bounded rewards, direct noisy reward feedback, and exact Gibbs sampling. Accounting for approximate-sampling error and computational cost, together with empirical evaluation, are natural directions for future work.

\subsubsection*{Acknowledgements}
This research is in part supported by CIFAR AI Chairs and the NSERC Discovery program.

\bibliographystyle{apalike}
\bibliography{references}

\clearpage
\onecolumn
\appendix

\section{Related Works}
\label{app:related-work}

Our work builds on the multi-armed sampling framework of \cite{pedramfar26_multi_armed_sampl_probl_end_explor}, extending it beyond the finite-armed setting to general action spaces and learning models.

\paragraph{Entropy-regularized reinforcement learning and maximum-entropy control.}
Entropy-regularized reinforcement learning provides a principled framework for trading off reward maximization with stochasticity, robustness, and exploration. Early formulations appear in maximum entropy inverse reinforcement learning \citep{ziebart2008maximum}, where optimal policies take a Gibbs-Boltzmann form. Closely related ideas arise in linearly-solvable Markov decision processes and KL control \citep{todorov2007linearly}, as well as path-integral control formulations \citep{kappen2005path, theodorou2010generalized}, which derive stochastic optimal policies through exponential transformations of value functions.

These viewpoints were later unified under the framework of reinforcement learning as probabilistic inference \citep{levine2018reinforcement}, building on earlier connections between stochastic control and approximate inference \citep{rawlik2013stochastic}.
This connection also underlies mirror-descent and policy optimization methods with entropy regularization \citep{neu2017unified,schulman2017proximal}. It has further been shown that such entropy-regularized formulations can lead to strong optimization guarantees, including fast convergence of natural policy gradient methods under suitable conditions \citep{cen2021fast}. Building on this foundation, a sequence of modern deep RL algorithms explicitly optimize entropy-regularized objectives, including soft Q-learning \citep{haarnoja2017reinforcement} and Soft Actor-Critic \citep{haarnoja18_soft_actor_critic}, which solve problems of the form
\[
    \max_\pi \;
    \mathbb{E}_{a\sim\pi}[f_\theta(a)]
    - \frac{1}{\beta} D_{\rm KL}(\pi \| \mu).
\]
In all these formulations, the optimal policy is given by a Gibbs distribution, establishing a direct link between entropy-regularized control and Boltzmann sampling.

Despite the central role of Gibbs policies in this literature, most prior work assumes that the reward or value function is known (or can be optimized via function approximation) and focuses on computing or approximating the optimal policy. In contrast, tilted sampling studies the \emph{statistical} problem of learning the Gibbs-optimal distribution from bandit feedback when the reward function is initially unknown. This shift—from solving a regularized control problem to learning a distributional target—leads to a different regret notion and a distinct uncertainty structure governed by second-order effects.

\paragraph{RLHF, preference-based learning, and fine-tuning.}
Reinforcement learning from human feedback (RLHF) and preference-based learning provide a practical instantiation of KL-regularized objectives for aligning generative models. Early work introduced learning from pairwise preferences using reward models and reinforcement learning \citep{christiano2017deep}, while large-scale systems such as InstructGPT combine preference modeling with KL-constrained policy optimization \citep{ouyang2022training}, typically implemented using proximal policy optimization \citep{schulman2017proximal}. These approaches optimize KL-regularized RL objectives leading to Gibbs-form solutions.

More recent methods such as direct preference optimization (DPO) \citep{rafailov2023direct} and ranking-based approaches \citep{yuan2023rrhf} optimize similar objectives without explicit reinforcement learning. Variants and extensions further explore alternative regularization schemes and divergences, including formulations beyond reverse KL \citep{azar2023dpo, wang24_beyon_rever_kl}.

These objectives implicitly define a target sampling distribution, but the statistical problem of learning this distribution under feedback has received comparatively little attention, with only recent work beginning to study RLHF from an online learning perspective \citep{zhao2025logarithmic}.

\paragraph{Theoretical RLHF and preference-based online learning.}
A growing body of work studies preference-based learning and RLHF from a theoretical perspective. Early work on dueling bandits and preference learning analyzes regret under pairwise feedback \citep{yue12_k, zoghi14_relat_upper_confid_bound_k, dudik15_contex_duelin_bandit}. More recent work considers RLHF-style settings with KL-regularized objectives and online feedback, establishing regret guarantees for optimistic and greedy methods \citep{xiong24_iterat_prefer_learn_human_feedb,zhao25_sharp_analy_kl_regul_contex_bandit_rlhf,zhao2025logarithmic, wu2025greedy}. These works highlight that KL-regularized objectives can admit improved regret behavior compared to classical bandits.

Our work is complementary to this line of research. Rather than focusing on specific feedback models or algorithmic variants, we study the general problem of learning to sample from Gibbs distributions under bandit feedback. This perspective isolates the role of the sampling objective itself, leading to a unified regret framework governed by second-order uncertainty.

\paragraph{Connections to MCMC and adaptive sampling.}
Sampling from a Gibbs or Boltzmann distribution has long been studied in
computational statistics, primarily through Markov Chain Monte Carlo
(MCMC) — including Metropolis--Hastings, Langevin, and Hamiltonian Monte
Carlo
\citep{metropolis1953equation, hastings1970monte, neal2011mcmc, roberts1996exponential}.
Adaptive MCMC \citep{andrieu2008tutorial, roberts2009examples} and
Sequential Monte Carlo samplers \citep{delmoral2006smc} extend these
schemes by tuning proposal distributions or particle systems from past
samples.
In all of these settings, however, the target density — equivalently,
the reward $f_\theta$ — is assumed \emph{known and queryable pointwise};
the difficulty is purely \emph{computational}, and the associated
complexity measures are mixing-time quantities such as spectral gaps and
log-Sobolev constants
\citep{bakry2014analysis, vempala2019rapid}.
Tilted sampling instead targets the \emph{statistical} problem
of approximating a Gibbs distribution whose generating reward is
unknown and accessed only through noisy bandit feedback, with regret
controlled by a cumulative confidence-width process rather than a
mixing time.

\paragraph{GFlowNets and amortized samplers.}
GFlowNets \citep{bengio2021flow, bengio2023gflownet} learn an amortized
policy that generates compositional objects with probability
proportional to a given reward $R(x)$, with a growing line of work on
training objectives — flow matching, detailed, trajectory, and
sub-trajectory balance \citep{malkin2022trajectory, madan2023learning}
— and continuous, energy-based, or diffusion variants
\citep{lahlou2023theory, zhang2023diffusion, sendera2024diffusion}.
Because GFlowNets are trained off-policy, exploration is itself an
active subject: tempered or augmented forward policies and prioritized
replay \citep{pan2023generative, shen2023towards}, local-search
refinement of trajectories \citep{kim2024local}, adaptive teacher
proposals \citep{kim2025adaptive}, Thompson-sampling-style heuristics
for GFlowNet rollouts \citep{rectorbrooks2023thompson}, and
GFlowNet-driven active learning with epistemic uncertainty estimates
\citep{jain2022biological}.
These works treat exploration empirically and almost always assume
that $R$ is queryable; analyses focus on training-loss consistency
rather than regret under bandit feedback.
Tilted sampling complements this literature by providing a
regret-theoretic counterpart: UCS and TS$^2$ realize optimism and
posterior-sampling exploration directly in the Gibbs-sampling regime,
and our confidence-width decompositions quantify the sampling regret
of these exploration strategies across general reward-function classes.

\paragraph{Neural samplers for unnormalized targets.}
A recent body of work develops neural samplers for unnormalized
densities, including diffusion-based samplers such as Path Integral
Samplers \citep{zhang2022path}, Denoising Diffusion Samplers
\citep{vargas2023denoising}, and iterated denoising-energy matching
\citep{akhound2024idem}; controlled-diffusion and stochastic-optimal
control approaches \citep{vargas2024transport, berner2024optimal};
and normalizing-flow Boltzmann generators \citep{noe2019boltzmann}.
Mode coverage is a central practical concern in this literature, and
explicit exploration mechanisms have been proposed to address it:
Flow-Annealed Importance Sampling Bootstrap \citep{midgley2023flow}
couples a flow proposal with annealed importance sampling to escape
mode collapse, and diffusion-based samplers such as iDEM
\citep{akhound2024idem} rely on replay buffers and score-based
exploration noise to cover multimodal targets.
As with GFlowNets, however, these methods assume access to the
unnormalized target $\exp(\beta f_\theta)\,\mu$ during training; closer
to bandit feedback are RL-based fine-tuning methods for diffusion and
language models such as DDPO \citep{black2024training} and DPOK
\citep{fan2023dpok}, which optimize a \emph{learned} reward via policy
gradients but without regret guarantees.
Our framework is complementary: it formalizes the online-learning
problem underlying these procedures and shows that, under bandit
feedback and a KL-regularized objective, the regret for matching a
Gibbs target admits a second-order upper bound in the
confidence widths, with $\beta$ smoothly interpolating between
distribution matching and hard optimization.

\section{Proof of Theorem~\ref{thm:ucb-ts}}
\label{apx:ucb-ts}

\begin{proof}
\noindent \textbf{Regret decomposition for UCB:}
Fix $t \geq 1$. 
Add and subtract $U_t(A_t^\star)$:
\begin{align*}
f_{\theta}(A_t^\star)-f_{\theta}(A_t)
&=
\big(f_{\theta}(A_t^\star)-U_t(A_t^\star)\big)
+
\big(U_t(A_t^\star)-U_t(A_t)\big)
+
\big(U_t(A_t)-f_{\theta}(A_t)\big) \\
&\le
\big(f_{\theta}(A_t^\star)-U_t(A_t^\star)\big)
+
\big(U_t(A_t)-f_{\theta}(A_t)\big),
\end{align*}
because $U_t(A_t)\ge U_t(A_t^\star)$.
On the event $f_\theta \in \mathcal F_t$, we have
\begin{align*}
f_{\theta}(A_t^\star) \leq U_t(A_t^\star)
\quad\text{ and }\quad
f_{\theta}(A_t) \geq L_t(A_t),
\end{align*}
so
\begin{align*}
f_{\theta}(A_t^\star)-f_{\theta}(A_t)
\le
U_t(A_t)-L_t(A_t)
=
w_{\mathcal F_t}(A_t).
\end{align*}
If the confidence condition fails, the regret is at most $C$. Summing over $t$ and taking expectations of the failure indicator yields the result.

\noindent \textbf{Regret decomposition for TS:}
For each $t$, define
\begin{align*}
A_t^\star \in \arg\max_{a\in\mathcal A_t} f_{\theta}(a).
\end{align*}

Condition on $(H_t,\mathcal A_t)$. Since $\tilde\theta_t \sim \rho(\cdot \mid H_t,\mathcal A_t)$ and $\theta \mid (H_t,\mathcal A_t) \sim \rho(\cdot \mid H_t,\mathcal A_t)$, the random variables $\tilde\theta_t$ and $\theta$ have the same conditional distribution given $(H_t,\mathcal A_t)$. Therefore, because $A_t$ is a measurable function of $\tilde\theta_t$ and $A_t^\star$ is a measurable function of $\theta$, we have the conditional identity
\begin{align*}
\mathbb E\!\left[ U_t(A_t) \mid H_t,\mathcal A_t \right]
=
\mathbb E\!\left[ U_t(A_t^\star) \mid H_t,\mathcal A_t \right].
\end{align*}
Using this, we obtain
\begin{align*}
\mathbb E\!\left[f_\theta(A_t^\star)-f_\theta(A_t)\mid H_t,\mathcal A_t\right]
=
\mathbb E\!\left[U_t(A_t)-f_\theta(A_t)\mid H_t,\mathcal A_t\right]
+
\mathbb E\!\left[f_\theta(A_t^\star)-U_t(A_t^\star)\mid H_t,\mathcal A_t\right].
\end{align*}
Summing over $t$ and taking expectations yields
\begin{align*}
\mathrm{BayesRegret}(T)
&=
\mathbb E\!\left[\textstyle \sum_{t=1}^T \big(U_t(A_t)-f_\theta(A_t)\big)\right]
+
\mathbb E\!\left[\textstyle \sum_{t=1}^T \big(f_\theta(A_t^\star)-U_t(A_t^\star)\big)\right].
\end{align*}

Let $\mathcal E_t:=\{f_\theta\in\mathcal F_t\}$. On $\mathcal E_t$, the second term is nonpositive since
\begin{align*}
f_\theta(A_t^\star) \le U_t(A_t^\star),
\end{align*}
and the first term satisfies
\begin{align*}
U_t(A_t)-f_\theta(A_t)
\le
U_t(A_t)-L_t(A_t)
=
w_{\mathcal F_t}(A_t).
\end{align*}
On $\mathcal E_t^c$, each of
$U_t(A_t)-f_\theta(A_t)$ and
$f_\theta(A_t^\star)-U_t(A_t^\star)$ is at most $C$.
Therefore,
\begin{align*}
\mathrm{BayesRegret}(T)
&\le
\mathbb E\!\left[\textstyle \sum_{t=1}^T w_{\mathcal F_t}(A_t)\right]
+
2C\sum_{t=1}^T\mathbb P(\mathcal E_t^c) \\
&\le
\mathbb E\!\left[\textstyle \sum_{t=1}^T w_{\mathcal F_t}(A_t)\right]
+
2CT\mathbb P(\mathcal E^c).
\qedhere
\end{align*}
\end{proof}

\section{Bandit Optimization as a Limit of Tilted Sampling}
\label{app:rl-limit-sampling}

The following result is classical and may be seen as an application of Varadhan's lemma. (See, e.g., \cite{dembo09_large_deviat_techn_applic,ellis06_entrop_large_deviat_statis_mechan})
Here we include a proof for completeness.

\begin{lemma}\label{lem:boltzmann-limit-max}
Assume $\mathcal{A}$ is a metric space with a Borel probability measure $\mu$.
Let $f: \mathcal{A} \to \mathbb{R}$ be bounded and continuous and define
\[
M := \sup_{a \in \mathcal{A}} f(a), \qquad \mathcal{A}^* := \op{argmax}_{a \in \mathcal{A}} f(a) = \{a \in \mathcal{A} : f(a)=M\}.
\]
Then the following are true.
\begin{enumerate}
\item[(i)] If $\mathcal{A}^*$ is nonempty and
$\mathcal{A}^* \subseteq \op{supp}(\mu)$, then
\[
\lim_{\beta \to \infty} \mathbb{E}_{a \sim S_\beta(f)}[f(a)] = M.
\]
Moreover, any weak limit of $S_\beta(f)$, as $\beta \to \infty$, has zero measure on $\mathcal{A} \setminus \mathcal{A}^*$.
\item[(ii)] If $\mu(\mathcal{A}^*)>0$, then the measures $S_\beta(f)$ converge setwise (and hence weakly) as $\beta \to \infty$ to the probability measure $\mu^*$ given by
\[
\mu^*(E) := \frac{\mu(E \cap \mathcal{A}^*)}{\mu(\mathcal{A}^*)}.
\]
In particular, $\mu^*(\mathcal{A} \setminus \mathcal{A}^*) = 0$ and
$\lim_{\beta \to \infty} \mathbb{E}_{a \sim S_\beta(f)}[f(a)] = M$.
\end{enumerate}
\end{lemma}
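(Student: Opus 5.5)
The plan is to work directly with the density of $S_\beta(f)$ with respect to $\mu$, namely $e^{\beta(f(a)-M)}/Z'_\beta$ with $Z'_\beta := \int e^{\beta(f-M)}\,d\mu \in (0,1]$. The basic tool is a splitting of $\mathcal{A}$ into a near-optimal set $G_\delta := \{a : f(a) > M-\delta\}$ and its complement, followed by a comparison of exponential weights on the two pieces.

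For (i), I fix $\delta>0$ and pick $a^*\in\mathcal{A}^*$. By continuity, $G_{\delta/2}$ is an open set containing $a^*$. Since $a^*\in\operatorname{supp}(\mu)$, we get $\mu(G_{\delta/2}) =: m_\delta > 0$, and hence $Z'_\beta \ge m_\delta e^{-\beta\delta/2}$. On the complement $G_\delta^c$ the integrand is at most $e^{-\beta\delta}$, so
\[
S_\beta(f)(G_\delta^c) \le \frac{e^{-\beta\delta}}{m_\delta e^{-\beta\delta/2}} = m_\delta^{-1} e^{-\beta\delta/2} \to 0 .
\]
Boundedness of $f$, say $|f|\le K$, then gives
\[
\mathbb{E}_{S_\beta(f)}[f] \ge (M-\delta)\bigl(1 - S_\beta(f)(G_\delta^c)\bigr) - K\,S_\beta(f)(G_\delta^c).
\]
Trivially $\mathbb{E}_{S_\beta(f)}[f] \le M$, so letting $\beta\to\infty$ and then $\delta\to 0$ yields $\mathbb{E}_{S_\beta(f)}[f]\to M$.

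For the weak-limit claim, let $\nu$ be a weak limit along some $\beta_k\to\infty$. The set $F_\delta := \{f \le M-\delta\}$ is closed, and $F_\delta \supseteq G_\delta^c$. The portmanteau theorem gives $\nu(F_\delta) \le \limsup_k S_{\beta_k}(f)(F_\delta)$, and the same estimate as above shows this limsup is $0$. Taking the union over $\delta = 1/n$ gives $\nu(\{f<M\}) = 0$, i.e.\ $\nu(\mathcal{A}\setminus\mathcal{A}^*)=0$.

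For (ii), write $p := \mu(\mathcal{A}^*) > 0$. Then $Z'_\beta = p + \int_{\{f<M\}} e^{\beta(f-M)}\,d\mu$, and the integral tends to $0$ by dominated convergence, since the integrand is bounded by $1$ and converges pointwise to $0$ on $\{f<M\}$. Hence $Z'_\beta\to p$. For any measurable set $E$,
\[
S_\beta(f)(E) = \frac{\mu(E\cap\mathcal{A}^*) + \int_{E\cap\{f<M\}} e^{\beta(f-M)}\,d\mu}{Z'_\beta},
\]
and the same dominated convergence argument sends this to $\mu(E\cap\mathcal{A}^*)/p = \mu^*(E)$. This is setwise convergence. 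Weak convergence follows by approximating bounded continuous functions uniformly by simple functions, or alternatively by applying dominated convergence directly to $\int h\,dS_\beta(f)$ for bounded $h$. Applying the latter with $h=f$ gives the expectation limit $M$, and $\mu^*(\mathcal{A}\setminus\mathcal{A}^*)=0$ holds by construction.

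The main subtlety is in (i). We need a positive-measure neighborhood of a maximizer, which is exactly where continuity and the support assumption are used, and we need to handle the two scales $\delta/2$ versus $\delta$ carefully so that the ratio decays. The remaining steps are routine applications of dominated convergence and the portmanteau theorem.
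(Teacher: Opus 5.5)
Your proof follows the paper's almost step for step. You use the same comparison of the scales $\delta/2$ and $\delta$ to get $S_\beta(f)(\{f\le M-\delta\})\le m_\delta^{-1}e^{-\beta\delta/2}$, and the same lower bound on the expectation. In (ii) you apply dominated convergence directly, whereas the paper splits $\{f<M\}$ into $\{f\le M-\varepsilon\}$ and $\{M-\varepsilon<f<M\}$ and uses continuity from above; that is the same argument done by hand. There is, however, one incorrect step, in the weak-limit claim of (i).

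You apply the portmanteau theorem to the closed set $F_\delta=\{f\le M-\delta\}$ to get $\nu(F_\delta)\le\limsup_k S_{\beta_k}(f)(F_\delta)$. For closed sets, portmanteau gives the opposite inequality, $\limsup_k \nu_k(F)\le \nu(F)$. The inequality you wrote is false in general for closed sets: take $\nu_k=\delta_{1/k}\Rightarrow\delta_0$ and $F=\{0\}$. So from $S_{\beta_k}(f)(F_\delta)\to 0$, the closed-set version tells you nothing about $\nu(F_\delta)$, since mass can pile up on a closed set in the limit.

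The repair is to pass to an open set. The set $B_\delta:=\{f<M-\delta\}$ is open by continuity and contained in $F_\delta$, so the open-set half of portmanteau gives
\[
\nu(B_\delta)\le\liminf_{k} S_{\beta_k}(f)(B_\delta)\le\limsup_{k} S_{\beta_k}(f)(F_\delta)=0 .
\]
Then $\mathcal{A}\setminus\mathcal{A}^*=\{f<M\}=\bigcup_n B_{1/n}$, and countable subadditivity yields $\nu(\mathcal{A}\setminus\mathcal{A}^*)=0$. This is exactly the paper's argument with its open sets $B_\varepsilon$.

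With that fix your proof is complete. Your direct dominated-convergence treatment of $\int h\,dS_\beta(f)$ in (ii) is slightly cleaner than the paper's appeal to portmanteau for the expectation, because it works for every bounded measurable $h$, not only continuous ones.
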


Note that if $\mathcal{A}^*=\{a^*\}$ is a singleton, then the only probability measure $\nu$ such that $\nu(\mathcal{A} \setminus \mathcal{A}^*) = 0$ is the Dirac measure at $a^*$.

\begin{remark}[Zero-measure maximizers]
If $\mu(\mathcal A^*)=0$, then setwise convergence in part (ii) cannot hold, since each $S_\beta(f)$ is absolutely continuous with respect to $\mu$ and hence assigns zero mass to $\mathcal A^*$ for every $\beta<\infty$, whereas any weak limit is supported on $\mathcal A^*$ by part (i). Thus one can only expect weak convergence, and even this may fail to be unique without further assumptions. \\
In general, $S_\beta(f)$ concentrates on $\mathcal A^*$ as $\beta\to\infty$, but the limiting distribution (if it exists) depends on the local behavior of $f$ and $\mu$ near $\mathcal A^*$. For example, if $\mathcal A\subset\mathbb R^d$, $\mu$ has a continuous density, $f\in C^2$, and $\mathcal A^*=\{a_1,\dots,a_k\}$ consists of finitely many nondegenerate maximizers, then Laplace’s method yields weak convergence to a discrete measure supported on $\mathcal A^*$ with weights determined by second-order asymptotics of $f$ and $\mu$ at the maximizers. More generally, analogous descriptions hold when $\mathcal A^*$ is a smooth manifold under suitable nondegeneracy conditions. \\
These phenomena are classical and can be understood via Laplace asymptotics or, more abstractly, through the Laplace principle in large deviations theory; see, e.g., \cite{dembo09_large_deviat_techn_applic,ellis06_entrop_large_deviat_statis_mechan}.
\end{remark}

\begin{proof}
We first prove an auxiliary claim.
For $\varepsilon>0$, define
\[
A_\varepsilon := \{a\in\mathcal{A} : f(a) \le M-\varepsilon\}.
\]
If $\mu(A_{\varepsilon/2}^c) > 0$, then we have
\[
\lim_{\beta\to\infty} S_\beta(f)(A_\varepsilon) = 0.
\]
 
\noindent
\emph{Proof of the claim.}

Define the function $g_\varepsilon(a) := f(a) - M + \varepsilon$.
Then for all $a\in\mathcal{A}$, we have
$e^{\beta f(a)} = e^{\beta(M-\varepsilon)} e^{\beta g_\varepsilon(a)}$, 
and the factor $e^{\beta(M-\varepsilon)}$ cancels in the ratio defining $S_\beta(f)(A_\varepsilon)$, yielding
\[
S_\beta(f)(A_\varepsilon)
= \frac{\int_{A_\varepsilon} e^{\beta g_\varepsilon(a)}\,d\mu(a)}
       {\int_{\mathcal{A}} e^{\beta g_\varepsilon(a)}\,d\mu(a)}.
\]

On $A_\varepsilon$, we have $f(a)\le M-\varepsilon$, hence $g_\varepsilon(a)\le 0$ and so $e^{\beta g_\varepsilon(a)} \le 1$ for all $a\in A_\varepsilon$, $\beta>0$.
Therefore,
\[
\int_{A_\varepsilon} e^{\beta g_\varepsilon(a)}\,d\mu(a) \le \mu(A_\varepsilon).
\]

On the other hand, we have
\[
A_{\varepsilon/2}^c = \mathcal{A}\setminus A_{\varepsilon/2} = \{a\in\mathcal{A}: f(a) > M-\varepsilon/2\} \subseteq \{a\in\mathcal{A}: f(a) > M-\varepsilon\} = A_{\varepsilon}^c,
\]
and over the set $A_{\varepsilon/2}^c$, we have $g_\varepsilon(a) = f(a) - M + \varepsilon> \varepsilon/2$.
Hence for all $a \in A_{\varepsilon/2}^c$ and all $\beta>0$, we have
$e^{\beta g_\varepsilon(a)} \ge e^{\beta \varepsilon/2}$,
so
\[
\int_{A_{\varepsilon/2}^c} e^{\beta g_\varepsilon(a)}\,d\mu(a)
\ge e^{\beta \varepsilon/2}\mu(A_{\varepsilon/2}^c).
\]
Therefore
\[
\int_{\mathcal{A}} e^{\beta g_\varepsilon(a)}\,d\mu(a)
\ge \int_{A_{\varepsilon/2}^c} e^{\beta g_\varepsilon(a)}\,d\mu(a)
\ge e^{\beta \varepsilon/2} \mu(A_{\varepsilon/2}^c).
\]
Combining the bounds on numerator and denominator, we obtain
\[
S_\beta(f)(A_\varepsilon)
\le \frac{\mu(A_\varepsilon)}{e^{\beta \varepsilon/2} \mu(A_{\varepsilon/2}^c)}
= \frac{\mu(A_\varepsilon)}{\mu(A_{\varepsilon/2}^c)}\, e^{-\beta \varepsilon/2}
\xrightarrow[\beta\to\infty]{} 0.
\]

\noindent
\emph{Proof of (i).}
The function $f$ is continuous, the set $A_\varepsilon^c \supseteq \mathcal{A}^*$ is open.
Since $\mathcal{A}^* \subseteq \op{supp}(\mu)$, by definition we have $\mu(U) > 0$ for any open sets $U$ that intersects $\mathcal{A}^*$.
In particular, we have $\mu(A_{\varepsilon/2}^c) > 0$.
Hence we may use the above claim to see that
\[
\lim_{\beta\to\infty} S_\beta(f)(A_\varepsilon) = 0,
\]
for all $\varepsilon > 0$.

First we prove the claim regarding the expectation.
Let $m:=\inf_{a\in\mathcal A}f(a)$, which is finite since $f$ is bounded.
For every $\varepsilon>0$, we have
\[
\mathbb{E}_{a\sim S_\beta(f)}[f(a)]
\geq
(M-\varepsilon)S_\beta(f)(A_\varepsilon^c)
+
mS_\beta(f)(A_\varepsilon).
\]
Since $S_\beta(f)(A_\varepsilon)\to0$, it follows that
\[
\liminf_{\beta\to\infty}
\mathbb{E}_{a\sim S_\beta(f)}[f(a)]
\geq M-\varepsilon.
\]
Letting $\varepsilon\downarrow0$ and using $f(a)\leq M$ gives
\[
  \lim_{\beta \to \infty} \mathbb{E}_{a \sim S_\beta(f)}[f(a)] = M.
\]

Now suppose $(\beta_n)$ is any sequence with $\beta_n\to\infty$, and suppose $S_{\beta_n}(f)$ converges weakly to some probability measure $\nu$ on $\mathcal{A}$. 
Since the set $B_\varepsilon := \{a\in\mathcal{A} : f(a) < M-\varepsilon\} \subseteq A_\varepsilon$ is open, by the Portmanteau theorem we have
\[
\nu(B_\varepsilon) 
\leq \limsup_{n\to\infty} S_{\beta_n}(f)(B_\varepsilon)
\leq \limsup_{n\to\infty} S_{\beta_n}(f)(A_\varepsilon)
= 0.
\]
Thus
\begin{align*}
\nu(\mathcal{A}\setminus \mathcal{A}^*)
&= \nu \left( \bigcup_{k=1}^\infty \{a\in\mathcal{A} : f(a) < M-1/k\} \right) 
= \nu \left( \bigcup_{k=1}^\infty B_{1/k} \right) 
\leq \sum_{k=1}^\infty \nu(B_{1/k})
= 0.
\end{align*}
That is, any weak limit $\nu$ of $S_\beta(f)$ is supported on $\mathcal{A}^*$.

\medskip

\noindent
\emph{Proof of (ii).}
Similar to before, we write $f(a) = M + (f(a)-M)$ and factor out $e^{\beta M}$ to see that
\[
S_\beta(f)(E)
= \frac{\int_E e^{\beta(f(a)-M)}\,d\mu(a)}
       {\int_{\mathcal{A}} e^{\beta(f(b)-M)}\,d\mu(b)}.
\]
Define
\[
D_\beta 
:= \int_{\mathcal{A}} e^{\beta(f(b)-M)}\,d\mu(b)
= \int_{\mathcal{A}^*} e^{\beta(f(b)-M)}\,d\mu(b)
        + \int_{\mathcal{A}\setminus\mathcal{A}^*} e^{\beta(f(b)-M)}\,d\mu(b).
\]
On $\mathcal{A}^*$ we have $f(b)=M$, so $f(b)-M=0$ and $e^{\beta(f(b)-M)}=1$. Thus
\[
\int_{\mathcal{A}^*} e^{\beta(f(b)-M)}\,d\mu(b) = \mu(\mathcal{A}^*).
\]
For $\varepsilon>0$ and define
\[
C_\varepsilon := (\mathcal{A}\setminus\mathcal{A}^*)\setminus A_\varepsilon
               = \{b\in\mathcal{A}\setminus\mathcal{A}^* : f(b)>M-\varepsilon\}.
\]
Then \(\mathcal{A}\setminus\mathcal{A}^* = A_\varepsilon \cup C_\varepsilon\) and \(A_\varepsilon \cap C_\varepsilon = \emptyset\).
On $A_\varepsilon$, we have $f(b)-M\le -\varepsilon$, so
\[
0 \le e^{\beta(f(b)-M)} \le e^{-\beta\varepsilon},
\]
and hence
\[
0 \le \int_{A_\varepsilon} e^{\beta(f(b)-M)}\,d\mu(b)
   \le e^{-\beta\varepsilon}\mu(A_\varepsilon)
   \xrightarrow[\beta\to\infty]{} 0.
\]

On $C_\varepsilon$, we have $f(b)-M<0$, hence $0<e^{\beta(f(b)-M)}<1$, so
\[
0 \le \int_{C_\varepsilon} e^{\beta(f(b)-M)}\,d\mu(b)
   \le \mu(C_\varepsilon).
\]
As $\varepsilon\downarrow 0$, the sets $C_\varepsilon$ form a decreasing sequence with $\cap_{\varepsilon>0} C_\varepsilon = \emptyset$.
Since $\mu$ is finite, continuity from above implies that
$\mu(C_\varepsilon) \to 0$ as $\varepsilon \to 0$.
Therefore
\begin{align*}
\limsup_{\beta\to\infty} \int_{\mathcal{A}\setminus\mathcal{A}^*} e^{\beta(f(b)-M)}\,d\mu(b)
&\le \limsup_{\varepsilon \to 0} \left(\limsup_{\beta\to\infty} \int_{A_\varepsilon} e^{\beta(f(b)-M)}\,d\mu(b) + \mu(C_\varepsilon) \right) \\
&= \limsup_{\varepsilon \to 0} \mu(C_\varepsilon)
= 0.
\end{align*}
Thus
\[
\lim_{\beta\to\infty} D_\beta 
= \mu(\mathcal{A}^*) + \lim_{\beta\to\infty} \int_{\mathcal{A}\setminus\mathcal{A}^*} e^{\beta(f(b)-M)}\,d\mu(b) 
= \mu(\mathcal{A}^*).
\]

Now fix any measurable $E\subseteq\mathcal{A}$. We have
\[
\int_E e^{\beta(f(a)-M)}\,d\mu(a)
= \int_{E\cap \mathcal{A}^*} e^{\beta(f(a)-M)}\,d\mu(a)
 + \int_{E\setminus \mathcal{A}^*} e^{\beta(f(a)-M)}\,d\mu(a).
\]
On $E\cap\mathcal{A}^*$, $f(a)=M$, so the first integral is $\mu(E\cap\mathcal{A}^*)$. On $E\setminus\mathcal{A}^*$ we have $f(a)<M$, and by the same argument as above, the second integral tends to $0$ as $\beta\to\infty$. Hence
\[
\lim_{\beta\to\infty} \int_E e^{\beta(f(a)-M)}\,d\mu(a)
= \mu(E\cap\mathcal{A}^*).
\]

Combining this with $\lim_{\beta\to\infty} D_\beta = \mu(\mathcal{A}^*)$, we obtain
\[
\lim_{\beta\to\infty} S_\beta(f)(E)
= \lim_{\beta\to\infty} \frac{\int_E e^{\beta(f(a)-M)}\,d\mu(a)}{D_\beta}
= \frac{\mu(E\cap\mathcal{A}^*)}{\mu(\mathcal{A}^*)}
=: \mu^*(E).
\]
This is convergence for each measurable $E$, i.e.\ setwise convergence. 
In particular, it implies weak convergence of the measures $S_\beta(f)$ to $\mu^*$ as $\beta\to\infty$.
Using Portmanteau theorem, we have
\begin{align*}
\lim_{\beta \to \infty} \mathbb{E}_{a \sim S_\beta(f)}[f(a)] 
&= \mathbb{E}_{a \sim \mu^*}[f(a)] 
= M.
\qedhere
\end{align*} 
\end{proof}

\section{Frequentist bounds on KL-divergence}

\begin{lemma}\label{lem:jensen-gap}
Let $(\C{X}, \nu)$ be a measure space and let $f : \mathcal{X} \to \mathbb{R}$ be a random variable.
\begin{enumerate}
\item[(i)] If $f$ is non-negative, then
\begin{align*}
\B{E}[f] + \log(\B{E}[\exp(-f)]) \leq \B{E}[f].
\end{align*}
\item[(ii)] If $f \geq -M$ for some $M \geq 0$, then
\begin{align*}
\B{E}[f] + \log(\B{E}[\exp(-f)]) \leq \tfrac{1}{2}\exp(M)\B{E}[f^2].
\end{align*}
\end{enumerate}
\end{lemma}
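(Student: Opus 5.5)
For part (i), the plan is to use only that $f \ge 0$ implies $\exp(-f) \le 1$ pointwise. Hence $\mathbb{E}[\exp(-f)] \le 1$ and $\log \mathbb{E}[\exp(-f)] \le 0$, which gives the claim immediately.

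Integrability needs one remark. If $\mathbb{E}[f] = +\infty$, the inequality holds trivially in the extended reals. The second term is always finite from above, since $\log \mathbb{E}[e^{-f}] \le 0$. The only degenerate case is $\mathbb{E}[e^{-f}]=0$, where the left side is $-\infty$ and the inequality holds trivially.

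For part (ii), the plan is a pointwise inequality followed by integration. The Jensen gap is controlled by the elementary bound $\log x \le x - 1$, applied with $x = \mathbb{E}[e^{-f}]$:
\[
\mathbb{E}[f] + \log \mathbb{E}[e^{-f}] \le \mathbb{E}[f] + \mathbb{E}[e^{-f}] - 1 = \mathbb{E}\big[e^{-f} - 1 + f\big].
\]
It then suffices to prove the pointwise bound
\[
\phi(x) := e^{-x} - 1 + x \le \tfrac12 e^{M} x^2 \qquad \text{for all } x \ge -M,
\]
and take expectations.

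For the pointwise bound I would use Taylor's theorem with Lagrange remainder: $\phi(0)=\phi'(0)=0$ and $\phi''(\xi)=e^{-\xi}$, so $\phi(x) = \tfrac12 e^{-\xi}x^2$ for some $\xi$ between $0$ and $x$. I then split into two cases.
\begin{itemize}
\item If $x \ge 0$, then $\xi \ge 0$, so $e^{-\xi} \le 1 \le e^{M}$.
\item If $-M \le x < 0$, then $\xi \in (x,0)$, so $-\xi < -x \le M$ and $e^{-\xi} \le e^{M}$.
\end{itemize}
In both cases $\phi(x) \le \tfrac12 e^M x^2$.

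Finiteness again needs care. If $\mathbb{E}[f^2]=\infty$ there is nothing to prove. Otherwise $f \in L^1$ (the space is presumably a probability space, since expectations are taken), and $e^{-f} \le e^{M}$ is integrable, so all manipulations are legitimate.

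The only real obstacle is the case analysis in the remainder bound, which shows why the factor $e^{M}$ is needed exactly on the negative side. I would also briefly note that the lemma implicitly assumes $\nu$ is a probability measure, so that "random variable" and $\mathbb{E}$ make sense.
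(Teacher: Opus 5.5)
Your proposal is correct and follows the paper's proof. Part (i) uses $e^{-f}\le 1$, and part (ii) combines the pointwise bound $e^{-x}\le 1-x+\tfrac12 e^{M}x^2$ on $[-M,\infty)$ with $\log u\le u-1$. The only cosmetic difference is on $x\ge 0$: you use the second-order Lagrange remainder with $e^{-\xi}\le 1$, whereas the paper uses a third-order expansion to discard the cubic term; both arguments work.
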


\begin{proof}
\textbf{(i):}
The claim follows from the fact that $f \geq 0$ and therefore
\begin{align*}
\log(\B{E}[\exp(-f)])
\leq \log(\B{E}[\exp(0)]) = 0.
\end{align*}

\textbf{(ii):}
For $x \in \B{R}$, Taylor's theorem with the Lagrange remainder gives
\[
\exp(-x) \;=\; 1 - x + \tfrac{1}{2} \exp(-\xi_1)x^2,
\quad
\exp(-x) \;=\; 1 - x + \tfrac{x^2}{2} - \tfrac{1}{6} \exp(-\xi_2)x^3,
\]
For some $\xi_1, \xi_2$ between $0$ and $x$.
Hence, for all $x \in [-M , 0]$, we have $0 \leq -\xi_1 \leq M$ and
\[
\exp(-x) 
\leq 1 - x + \frac{\exp(-\xi_1)x^2}{2}
\leq 1 - x + \frac{\exp(M)x^2}{2}.
\]
On the other hand, for all $x \geq 0$, we have $-x^3 \leq 0$ and
\[
\exp(-x) \;=\; 1 - x + \frac{x^2}{2} - \frac{\exp(-\xi_2)x^3}{6}
\leq 1 - x + \frac{x^2}{2}.
\]
Hence the point-wise bound $\exp(-x) \le 1 - x + \tfrac{1}{2}\exp(M)x^2$ holds for all $x \geq -M$. Applying this with $x=f(s)$ for $s \in \C{X}$ and taking expectations yields
\[
\mathbb{E}[\exp(-f)] \;\le\; 1 - \mathbb{E}[f] + \tfrac{1}{2}\exp(M)\,\mathbb{E}[f^2].
\]
Since logarithm is increasing and $\log(u) \leq u - 1$ for all $u>0$, we obtain
\[
\log(\mathbb{E}[\exp(-f)])
\;\le\;
\mathbb{E}[\exp(-f)] - 1
\;\le\;
-\,\mathbb{E}[f] + \tfrac12\,\exp(M) \mathbb{E}[f^2],
\]
which completes the proof.
\end{proof}

\begin{remark}
The constant $1/2$ is optimal among bounds of the form $\exp(-x) \le 1 - x + c x^2$ valid for all $x \ge 0$, since
\[
\sup_{x>0}\frac{\exp(-x) - 1 + x}{x^2} = \frac{1}{2}.
\]
\end{remark}

\begin{lemma}\label{lem:reverse-jensen-gap}
Let $(\C{X},\nu)$ be a probability space and let $f:\C{X}\to\B{R}$ be a measurable function.
\begin{enumerate}
\item[(i)] If $f$ is nonnegative, then
\begin{align*}
-\frac{\B{E}[f\exp(-f)]}{\B{E}[\exp(-f)]}
-\log\B{E}[\exp(-f)]
\leq \B{E}[f].
\end{align*}
\item[(ii)] If $f\geq-M$ for some $M\geq0$, then
\begin{align*}
-\frac{\B{E}[f\exp(-f)]}{\B{E}[\exp(-f)]}
-\log\B{E}[\exp(-f)]
\leq \tfrac{1}{2}\exp(M)\B{E}[f^2].
\end{align*}
\end{enumerate}
\end{lemma}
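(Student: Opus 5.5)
The approach is to work with the tilted probability measure $q(dx)=\exp(-f(x))\,\nu(dx)/Z$, where $Z:=\B{E}[\exp(-f)]$. The left-hand side of both parts is then
\[
-\B{E}_q[f]-\log Z \;=\; D_{\mathrm{KL}}(q\,\|\,\nu)\;\geq 0,
\]
so only upper bounds are needed. This mirrors Lemma~\ref{lem:jensen-gap}, whose left-hand side is $D_{\mathrm{KL}}(\nu\|q)$, and I expect the proof to run in parallel.

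\textbf{Part (i).} Since $f\ge 0$, we have $-\B{E}_q[f]\le 0$. By Jensen's inequality $Z=\B{E}[\exp(-f)]\ge \exp(-\B{E}[f])$, so $-\log Z\le \B{E}[f]$. Adding the two bounds gives (i) immediately.

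\textbf{Part (ii).} I would use the cumulant function $\varphi(s):=\log\B{E}[\exp(-sf)]$ on $[0,1]$. It satisfies $\varphi(0)=0$, $\varphi'(s)=-\B{E}_{q_s}[f]$ and $\varphi''(s)=\mathrm{Var}_{q_s}(f)$, where $q_s\propto \exp(-sf)\nu$. The quantity to bound is $\varphi'(1)-\varphi(1)+\varphi(0)$, and a double integration gives
\[
-\B{E}_q[f]-\log Z=\int_0^1 u\,\varphi''(u)\,du,
\]
compared with $\int_0^1(1-u)\varphi''(u)\,du$ for Lemma~\ref{lem:jensen-gap}. Writing $\varphi''(u)\le \B{E}_{q_u}[f^2]$, the task reduces to comparing $q_u$-moments with $\nu$-moments.

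A direct pointwise route is also available. Using $-Z\log Z\le 1-Z$, the quantity times $Z$ is at most $\B{E}[k(f)]$ with $k(x)=1-e^{-x}-xe^{-x}$. Since $k'(x)=xe^{-x}$ and $k(0)=0$, one gets $k(x)\le x^2/2$ for $x\ge 0$ and $k(x)\le \tfrac12 e^{M}x^2$ for $x\in[-M,0]$.

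\textbf{Main obstacle.} The pointwise route yields the bound $\tfrac12 e^M\B{E}[f^2]/Z$, which carries an unwanted factor $1/Z$. This factor is harmless where $f$ is negative, since there the relevant weights exceed $1$, but not where $f$ is large and positive. Removing it is the step I expect to be hardest. I would first try to split $f$ into its positive and negative parts and treat the region $\{f\ge 0\}$ via the argument of part (i), where the bound is only linear in $f$ but never involves $Z$. I would use the Taylor/$k$-estimate only on $\{f<0\}$, where $e^{-f}\le e^{M}$ controls the tilt. If that fails, the fallback is to bound $\int_0^1 u\,\mathrm{Var}_{q_u}(f)\,du$ by using monotonicity of $\B{E}_{q_u}[f^2\mathbf 1\{f\ge0\}]$ in $u$, which is non-increasing under further downward tilting of positive values, together with $e^{-uf}\le e^{uM}$ on $\{f<0\}$. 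The aim is to recover the constant $\tfrac12 e^M$ as $\int_0^1 u\,e^{M}\,du$ with the normalizer absorbed.
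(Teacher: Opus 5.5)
\textbf{Gap in part (ii): the key moment comparison is missing.} Part (i) is exactly the paper's argument. In part (ii) your setup also matches the paper's: the cumulant $\varphi$, the identity $-\mathbb{E}_q[f]-\log Z=\int_0^1 u\,\varphi''(u)\,du$, and the bound $\varphi''(u)\le \mathbb{E}_{q_u}[f^2]$. The gap is the step you flag yourself. You never prove a bound of the form $\mathbb{E}_{q_u}[f^2]\le e^{uM}\,\mathbb{E}[f^2]$, and neither of your routes can give it.

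\emph{The split route.} Reusing the part-(i) argument on $\{f\ge 0\}$ only gives a bound linear in $f$. That is far too weak for small $f$, where the left-hand side is about $\tfrac12\mathrm{Var}(f)$. Moreover, the left-hand side does not split additively over $\{f\ge0\}$ and $\{f<0\}$, because $Z$ couples the two regions.

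\emph{The fallback.} Monotonicity of $\mathbb{E}_{q_u}[(f_+)^2]$ in $u$ is correct: its derivative is minus the $q_u$-covariance of two nondecreasing functions of $f$. But $e^{-uf}\le e^{uM}$ on $\{f<0\}$ only controls the numerator of $\mathbb{E}_{q_u}[(f_-)^2]$. Its denominator is $\mathbb{E}[e^{-uf}]$, the very normalizer you were trying to remove. No termwise bound can work. Take $f=-M$ with probability $p$ and $f=K$ otherwise. For fixed $u\in(0,1]$, $\mathbb{E}_{q_u}[(f_-)^2]\to M^2$ as $K\to\infty$, while $\mathbb{E}[(f_-)^2]=pM^2$ is arbitrarily small. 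The tilted mass on the negative region has to be paid for by the positive region's contribution to $\mathbb{E}[f^2]$, so the two pieces must be handled jointly.

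\textbf{What the paper uses instead.} The missing idea is a differential inequality for the \emph{full} second moment. One has $\frac{d}{du}\mathbb{E}_{q_u}[f^2]=-\mathrm{Cov}_{q_u}(f^2,f)$. For any random variable $F\ge -M$ with $m_1=\mathbb{E}F$ and $m_2=\mathbb{E}F^2$, one has $\mathrm{Cov}(F^2,F)\ge -M m_2$, equivalently $\mathbb{E}[F^2(F+M)]\ge m_1 m_2$.
\begin{itemize}
\item If $m_1\le 0$, this is immediate, since the left side is nonnegative.
\item If $m_1>0$, Cauchy--Schwarz gives $\mathbb{E}[F^2(F+M)]\ge (m_2+Mm_1)^2/(m_1+M)$. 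This is at least $m_1m_2$ because $m_2\ge m_1^2$.
\end{itemize}
Applying this under each $q_u$ and using Gr\"onwall gives $\mathbb{E}_{q_u}[f^2]\le e^{uM}\mathbb{E}[f^2]$. Hence the left-hand side is at most $\mathbb{E}[f^2]\int_0^1 u\,e^{uM}\,du\le \tfrac12 e^{M}\mathbb{E}[f^2]$. Differentiating under the expectation also needs justification. The paper first treats bounded $f$ and then passes to $f\wedge n$ by dominated convergence.
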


\begin{proof}
\textbf{(i):}
If $\B{E}[f]=\infty$, the claim is immediate. Otherwise, since $f\geq0$,
we have
\[
-\frac{\B{E}[f\exp(-f)]}{\B{E}[\exp(-f)]}
\leq 0.
\]
Moreover, Jensen's inequality gives
\[
\log\B{E}[\exp(-f)]
\geq
\B{E}[\log\exp(-f)]
=-\B{E}[f].
\]
Combining these inequalities proves the first claim.

\textbf{(ii):}
We first prove the claim when $f$ is bounded. Define
\[
\phi(s):=\log\B{E}[\exp(-sf)],
\qquad 0\leq s\leq1,
\]
and let $\nu_s$ be the probability measure with density
\[
\frac{d\nu_s}{d\nu}
=
\frac{\exp(-sf)}{\B{E}[\exp(-sf)]}.
\]
Write $\B{E}_s$ for expectation under $\nu_s$. Boundedness of $f$ justifies differentiation under the expectation, giving
\[
\phi'(s)=-\B{E}_s[f],
\qquad
\phi''(s)=\operatorname{Var}_s(f).
\]
Since $\phi(0)=0$, Taylor's theorem with integral remainder, expanded about
$s=1$ and evaluated at $s=0$, gives
\[
0
= \phi(0)
= \phi(1)-\phi'(1)+\int_0^1 s\phi''(s)\,ds,
\]
and hence
\[
\phi'(1)-\phi(1)
=
\int_0^1 s\phi''(s)\,ds.
\]
Therefore, the expression in the second claim equals
\begin{align*}
-\frac{\B{E}[f\exp(-f)]}{\B{E}[\exp(-f)]}
-\log\B{E}[\exp(-f)]
=\phi'(1)-\phi(1)
=\int_0^1 s\phi''(s)\,ds
&=\int_0^1 s\operatorname{Var}_s(f)\,ds.
\end{align*}
Thus
\begin{align}
\label{eq:lem:reverse-jensen-gap:1}
-\frac{\B{E}[f\exp(-f)]}{\B{E}[\exp(-f)]}
-\log\B{E}[\exp(-f)]
&= \int_0^1 s\operatorname{Var}_s(f)\,ds
\le \int_0^1 s\B{E}_s[f^2]\,ds.
\end{align}

Next we claim that
\[
\operatorname{Cov}_s(f^2,f)
\geq
-M\B{E}_s[f^2].
\]
To see this, fix $s$ and let $F$ have the distribution of $f$ under $\nu_s$. Write
\[
\mu:=\B{E}_s[F],
\qquad
q:=\B{E}_s[F^2].
\]
If $\mu\leq0$, then $F^2(F+M)\geq0$ implies
\[
\operatorname{Cov}_s(F^2,F)+Mq
=
\B{E}_s[F^3]-\B{E}_s[F^2]\B{E}_s[F]+M\B{E}_s[F^2]
=
\B{E}_s[F^2(F+M)]-\mu q
\geq0.
\]
If $\mu>0$, the Cauchy--Schwarz inequality gives
\[
\B{E}_s[F^2(F+M)]
\geq
\frac{\B{E}_s[F(F+M)]^2}{\B{E}_s[F+M]}
=
\frac{(q+M\mu)^2}{\mu+M}.
\]
Moreover, $q\geq\mu^2$ by Jensen's inequality, and
\begin{align*}
(q+M\mu)^2-\mu q(\mu+M)
&=
q^2+2M\mu q+M^2\mu^2-\mu^2q-M\mu q
\\
&=
q(q-\mu^2)+M\mu q+M^2\mu^2
\geq0.
\end{align*}
Thus $\B{E}_s[F^2(F+M)]\geq\mu q$, which again gives
$\operatorname{Cov}_s(F^2,F)+Mq\geq0$ and proves the claim.

Differentiating the second moment under $\nu_s$ now gives
\begin{align*}
\frac{d}{ds}\B{E}_s[f^2]
&= \frac{d}{ds} \frac{\B{E}[f^2\exp(-sf)]}{\B{E}[\exp(-sf)]} \\
&= -\frac{\B{E}[f^3\exp(-sf)]}{\B{E}[\exp(-sf)]}
+
\frac{\B{E}[f^2\exp(-sf)]\B{E}[f\exp(-sf)]}
{\B{E}[\exp(-sf)]^2}
\\
&=
-\B{E}_s[f^3]+\B{E}_s[f^2]\B{E}_s[f]
\\
&=
-\operatorname{Cov}_s(f^2,f)
\leq
M\B{E}_s[f^2].
\end{align*}
Set $q(s):=\B{E}_s[f^2]$. Since $q'(s)\leq Mq(s)$ and
$q(0)=\B{E}[f^2]$, the integrating factor satisfies
\[
\frac{d}{ds}\bigl(\exp(-Ms)q(s)\bigr)
=
\exp(-Ms)\bigl(q'(s)-Mq(s)\bigr)
\leq0.
\]
Thus $\exp(-Ms)q(s)\leq q(0)$, and therefore
\[
\B{E}_s[f^2]
\leq
\exp(sM)\B{E}[f^2].
\]
Consequently, using Equation~\ref{eq:lem:reverse-jensen-gap:1}, we have
\begin{align*}
-\frac{\B{E}[f\exp(-f)]}{\B{E}[\exp(-f)]}
-\log\B{E}[\exp(-f)]
&\leq\int_0^1s\B{E}_s[f^2]\,ds \\
&\leq\B{E}[f^2]\int_0^1s\exp(sM)\,ds \\
&\leq\tfrac{1}{2}\exp(M)\B{E}[f^2],
\end{align*}
which proves the second claim when $f$ is bounded.

For a general measurable $f\geq-M$, the left-hand side is finite because
$\exp(-f)$ and $f\exp(-f)$ are bounded, and $\B{E}[\exp(-f)]>0$.
If $\B{E}[f^2]=\infty$, the claim is therefore immediate. Otherwise, let
$f_n:=f\wedge n$. Each $f_n$ is
bounded and satisfies $f_n\geq-M$, so the bounded case gives
\[
-\frac{\B{E}[f_n\exp(-f_n)]}{\B{E}[\exp(-f_n)]}
-\log\B{E}[\exp(-f_n)]
\leq \tfrac{1}{2}\exp(M)\B{E}[f_n^2].
\]
Since $f_n\to f$, the functions $\exp(-f_n)$ and
$f_n\exp(-f_n)$ are uniformly bounded, while $f_n^2\leq f^2$.
Dominated convergence therefore gives
\[
\B{E}[\exp(-f_n)]\to\B{E}[\exp(-f)],
\qquad
\B{E}[f_n\exp(-f_n)]\to\B{E}[f\exp(-f)],
\qquad
\B{E}[f_n^2]\to\B{E}[f^2].
\]
Since the limiting denominator is positive, passing to the limit proves the
second claim in general.
\end{proof}

\begin{lemma}\label{lem:frequentist-kl-bound}
Let $(\C{X}, \nu)$ be a probability space, $f_1, f_2 : \C{X} \to \B{R}$ be measurable and let $\pi_1 = S_\beta(f_1)$ and $\pi_2 = S_\beta(f_2)$ be the corresponding Gibbs distributions with parameter $\beta > 0$.
\begin{enumerate}
\item[(i)] If $f_1 \geq f_2$, then 
\begin{align*}
\max\{ D_{\mathrm{KL}}(\pi_1 \| \pi_2), D_{\mathrm{KL}}(\pi_2 \| \pi_1) \}
\leq \beta \B{E}_{\pi_1} \left[ f_1 - f_2 \right].
\end{align*}
\item[(ii)] If there is $M \geq 0$ such that $f_1 - f_2 \geq -M$, then 
\begin{align*}
\max\{ D_{\mathrm{KL}}(\pi_1 \| \pi_2), D_{\mathrm{KL}}(\pi_2 \| \pi_1) \}
\leq \tfrac{1}{2} \beta^2 \exp(\beta M) \B{E}_{\pi_1} \left[ (f_1 - f_2)^2 \right].
\end{align*}
\item[(iii)] If there is $M \geq 0$ such that $|f_1-f_2|\le M$, then
\[
\max\{ D_{\mathrm{KL}}(\pi_1 \| \pi_2), D_{\mathrm{KL}}(\pi_2 \| \pi_1) \}
\leq 2\beta M.
\]
\end{enumerate}
\end{lemma}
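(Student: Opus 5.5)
Set $h := \beta(f_1-f_2)$. Both KL divergences have closed forms in terms of $h$ and expectations under $\pi_1$, and each closed form is exactly the left-hand side of Lemma~\ref{lem:jensen-gap} or Lemma~\ref{lem:reverse-jensen-gap}, taken under the probability measure $\pi_1$.

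\emph{Step 1 (density ratio).} Write $d\pi_2/d\pi_1 = \exp(-h)\,Z_\beta(f_1)/Z_\beta(f_2)$. Since
\[
\frac{Z_\beta(f_2)}{Z_\beta(f_1)}=\frac{\int e^{\beta f_1}e^{-h}\,d\nu}{\int e^{\beta f_1}\,d\nu}=\B{E}_{\pi_1}[e^{-h}],
\]
we get $d\pi_2/d\pi_1 = e^{-h}/\B{E}_{\pi_1}[e^{-h}]$. In words, $\pi_2$ is the exponential tilt of $\pi_1$ by $-h$.

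\emph{Step 2 (the two divergences).} Taking expectations of the log-ratio gives
\[
D_{\mathrm{KL}}(\pi_1\|\pi_2)=\B{E}_{\pi_1}[h]+\log\B{E}_{\pi_1}[e^{-h}].
\]
Similarly,
\[
D_{\mathrm{KL}}(\pi_2\|\pi_1)=\B{E}_{\pi_2}\!\left[-h-\log\B{E}_{\pi_1}[e^{-h}]\right]=-\frac{\B{E}_{\pi_1}[he^{-h}]}{\B{E}_{\pi_1}[e^{-h}]}-\log\B{E}_{\pi_1}[e^{-h}].
\]
Here I assume $\B{E}_{\pi_1}|h|<\infty$, or else handle the infinite case through the conventions built into the lemmas.

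\emph{Step 3 (parts (i) and (ii)).} For (i), $f_1\ge f_2$ gives $h\ge 0$. Lemma~\ref{lem:jensen-gap}(i) and Lemma~\ref{lem:reverse-jensen-gap}(i), applied on $(\C X,\pi_1)$, bound both expressions by $\B{E}_{\pi_1}[h]=\beta\B{E}_{\pi_1}[f_1-f_2]$.

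For (ii), $f_1-f_2\ge -M$ gives $h\ge -\beta M$. Part (ii) of the same two lemmas, with constant $\beta M$, bounds both expressions by
\[
\tfrac12 e^{\beta M}\B{E}_{\pi_1}[h^2]=\tfrac12\beta^2e^{\beta M}\B{E}_{\pi_1}[(f_1-f_2)^2].
\]

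\emph{Step 4 (part (iii)).} Here $|h|\le\beta M$, so the density ratio satisfies $e^{-h}/\B{E}[e^{-h}]\le e^{\beta M}/e^{-\beta M}=e^{2\beta M}$. Its logarithm is therefore at most $2\beta M$ pointwise, and the same bound holds for the inverse ratio. Each KL divergence is the expectation of such a log-ratio, so each is at most $2\beta M$.

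\emph{Main obstacle.} The real difficulty, bounding the reverse-direction expression, is already handled by Lemma~\ref{lem:reverse-jensen-gap}. What remains delicate is bookkeeping: we must check that $\pi_1$ is a probability measure on which those lemmas apply (they require a probability space), and take care with integrability when $h$ is unbounded in (i) and (ii).
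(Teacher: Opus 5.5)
Your proposal is correct and follows the paper's proof essentially step for step. It uses the same closed forms for both directed divergences under $\pi_1$. It then applies Lemmas~\ref{lem:jensen-gap} and~\ref{lem:reverse-jensen-gap} to $\beta(f_1-f_2)$, with constant $\beta M$ in part (ii). Part (iii) comes from the same pointwise $2\beta M$ bound on the log density ratio. Your notes on working over the probability space $(\mathcal{X},\pi_1)$ and on integrability cover bookkeeping that the paper leaves implicit.
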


\begin{proof}
Let $g = f_1 - f_2$.
We have
\begin{align*}
\log\left( \frac{d \pi_1}{d \pi_2}(s) \right)
&= \log\left( \frac{d S_\beta(f_1)}{d S_\beta(f_2)} (s) \right) \\
&= \log\left( \frac{\exp(\beta f_1(s))}{\exp(\beta f_2(s))} \right)
+ \log\left( \frac{\int \exp(\beta f_2(s')) d \nu(s')}{\int \exp(\beta f_1(s')) d \nu(s')} \right) \\
&= \beta (f_1(s) - f_2(s))
+ \log\left( \frac{\int \exp(\beta (f_2(s') - f_1(s'))) \exp(\beta f_1(s')) d \nu(s')}{\int \exp(\beta f_1(s')) d \nu(s')} \right) \\
&= \beta g(s)
+ \log\left( \int \exp(- \beta g(s')) S_\beta(f_1)(s') d \nu(s') \right) \\
&= \beta g(s) + \log\left( \B{E}_{\pi_1} \left[ \exp(- \beta g) \right] \right).
\end{align*}
Therefore
\begin{align}\label{eq:lem:frequentist-kl-bound:1}
D_{\mathrm{KL}}(\pi_1 \| \pi_2)
&= \B{E}_{\pi_1}\left[ \log\left( \frac{d \pi_1}{d \pi_2} (s) \right) \right]
= \B{E}_{\pi_1}\left[ \beta g(s) \right] + \log\left( \B{E}_{\pi_1} \left[ \exp(- \beta g) \right] \right).
\end{align}
For the opposite direction, similarly,
\begin{align*}
\log\left(\frac{d\pi_2}{d\pi_1}(s)\right)
= - \log\left(\frac{d\pi_1}{d\pi_2}(s)\right) 
&= -\beta g(s)
- \log\left(\B{E}_{\pi_1}[\exp(-\beta g)]\right).
\end{align*}
Therefore,
\begin{align*}
D_{\mathrm{KL}}(\pi_2\|\pi_1)
&=
\B{E}_{\pi_2}\left[
\log\left(\frac{d\pi_2}{d\pi_1}(s)\right)
\right]
=
-\beta\B{E}_{\pi_2}[g]
-\log\left(\B{E}_{\pi_1}[\exp(-\beta g)]\right).
\end{align*}
Using the density of $\pi_2$ with respect to $\pi_1$, we also have
\begin{align*}
\B{E}_{\pi_2}[g]
&=
\B{E}_{\pi_1}\left[
g\frac{d\pi_2}{d\pi_1}
\right]
=
\frac{\B{E}_{\pi_1}[g\exp(-\beta g)]}
{\B{E}_{\pi_1}[\exp(-\beta g)]}.
\end{align*}
Substituting this identity yields
\begin{align}\label{eq:lem:frequentist-kl-bound:2}
D_{\mathrm{KL}}(\pi_2\|\pi_1)
&=
-\frac{\B{E}_{\pi_1}[\beta g\exp(-\beta g)]}
{\B{E}_{\pi_1}[\exp(-\beta g)]}
-\log\left(\B{E}_{\pi_1}[\exp(-\beta g)]\right).
\end{align}
Now (i) and (ii) follow directly from Equations~\ref{eq:lem:frequentist-kl-bound:1} and~\ref{eq:lem:frequentist-kl-bound:2} and Lemmas~\ref{lem:jensen-gap} and~\ref{lem:reverse-jensen-gap}.
For (iii), the assumption $|g|\leq M$ gives
\[
\exp(-\beta M)
\leq
\B{E}_{\pi_1}[\exp(-\beta g)]
\leq
\exp(\beta M).
\]
Together with the two log-density-ratio identities above, this implies
\[
\left|\log\left(\frac{d\pi_1}{d\pi_2}(s)\right)\right|
=
\left|\log\left(\frac{d\pi_2}{d\pi_1}(s)\right)\right|
\leq 2\beta M.
\]
Taking the corresponding expectations proves (iii).
\end{proof}

\section{Bayesian bounds on KL-divergence}

\begin{lemma}\label{lem:ts-kl-bound-1}
Let $(\C{X}, \nu)$ be a measure space and let $\C{G}$ be a class of real valued bounded measurable functions over $\C{X}$ and assume $\varrho$ is a probability measure over $\C{G}$.
Then
\begin{align*}
\B{E}_{f_1, f_2 \sim \varrho} \left[ 
D_{\mathrm{KL}}(S_{\beta}(f_1) \| S_{\beta}(f_2))
\right]
&= \beta \B{E}_{f_1, f_2 \sim \varrho} \left[ 
\B{E}_{s \sim S_{\beta}(f_1)} \left[ 
f_1(s) - f_2(s)
\right]
\right] \\
&\leq 
\beta \B{E}_{f \sim \varrho} \left[
  \B{E}_{s \sim S_{\beta}(f)} \left[ w(s) \right]
\right],
\end{align*}
where $w(s) := \sup_{f \in \C{G}} f(s) - \inf_{f \in \C{G}} f(s)$.
\end{lemma}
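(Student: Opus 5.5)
The plan is to start from the exact identity in Equation~\ref{eq:lem:frequentist-kl-bound:1} from the proof of Lemma~\ref{lem:frequentist-kl-bound}. For any pair $f_1,f_2$, with $g=f_1-f_2$,
\[
D_{\mathrm{KL}}(S_\beta(f_1)\|S_\beta(f_2))=\beta\,\B{E}_{S_\beta(f_1)}[g]+\log\B{E}_{S_\beta(f_1)}[\exp(-\beta g)].
\]
Rewriting the log term as a ratio of partition functions gives $\log \B{E}_{S_\beta(f_1)}[\exp(-\beta g)] = \log Z_\beta(f_2)-\log Z_\beta(f_1)$. Boundedness of the functions in $\C{G}$ ensures all partition functions are finite and positive, and that $\log Z_\beta(f)$ is bounded for each $f$.

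Next, take expectations over independent $f_1,f_2\sim\varrho$. The term $\B{E}[\log Z_\beta(f_2)-\log Z_\beta(f_1)]$ vanishes by exchangeability: $f_1$ and $f_2$ are i.i.d., so the two expectations coincide. This yields the claimed equality,
\[
\B{E}[D_{\mathrm{KL}}] = \beta\,\B{E}_{f_1,f_2}\,\B{E}_{s\sim S_\beta(f_1)}[f_1(s)-f_2(s)].
\]
The main technical point is integrability: we need $\B{E}_\varrho|\log Z_\beta(f)|<\infty$ so that the cancellation is legitimate and not of the form $\infty-\infty$. If the sup norms over $\C{G}$ are not uniformly bounded, I would truncate or restrict attention to the case where $w$ is finite and $\varrho$-integrable. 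Otherwise the right-hand side is infinite and the inequality is trivial. Measurability of $f\mapsto Z_\beta(f)$ and of the joint integrand is assumed, as elsewhere in the paper.

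For the inequality, I bound the inner difference pointwise: for $f_1,f_2\in\C{G}$,
\[
f_1(s)-f_2(s)\le \sup_{f\in\C{G}} f(s)-\inf_{f\in\C{G}} f(s)=w(s).
\]
Hence $\B{E}_{s\sim S_\beta(f_1)}[f_1(s)-f_2(s)]\le \B{E}_{s\sim S_\beta(f_1)}[w(s)]$. This bound no longer depends on $f_2$, so integrating out $f_2$ and renaming $f_1=f$ gives the stated bound $\beta\,\B{E}_{f\sim\varrho}\B{E}_{s\sim S_\beta(f)}[w(s)]$.

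I expect the only real obstacle to be the integrability justification in the exchangeability step; the rest is algebra.
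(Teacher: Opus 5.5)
Your proposal is correct and matches the paper's proof: write the KL as $\beta\,\B{E}_{S_\beta(f_1)}[f_1-f_2]+\log Z_\beta(f_2)-\log Z_\beta(f_1)$, cancel the log-partition terms because $f_1,f_2$ are i.i.d.\ under $\varrho$, and bound $f_1-f_2\le w$ pointwise. Your integrability caveat is a refinement the paper omits. It is automatic in the application, where all functions lie in $[0,C]$, and it in fact suffices that the difference $\log Z_\beta(f_2)-\log Z_\beta(f_1)$ be integrable, since its law is symmetric.
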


\begin{proof}
We have
\begin{align*}
D_{\mathrm{KL}}(S_{\beta}(f_1) \| S_{\beta}(f_2))
&= \B{E}_{S_{\beta}(f_1)} \left[ 
\log\left( \frac{d S_{\beta}(f_1)}{d S_{\beta}(f_2)} (s) \right)
\right] \\
&= \B{E}_{S_{\beta}(f_1)} \left[ 
\beta(f_1(s) - f_2(s)) - \log Z_\beta(f_1) + \log Z_\beta(f_2)
\right].
\end{align*}
Since $f_2$ and $f_1$ are both sampled according to the same distribution, namely $\varrho$, we have 
\begin{align*}
\B{E}_{f_2, f_1 \sim \varrho} \left[ - \log Z_\beta(f_1) + \log Z_\beta(f_2) \right] = 0.
\end{align*}
Therefore
\begin{align*}
\B{E}_{f_2, f_1 \sim \varrho} \left[ 
D_{\mathrm{KL}}(S_{\beta}(f_1) \| S_{\beta}(f_2))
\right]
&= \B{E}_{f_2, f_1 \sim \varrho} \left[ 
\B{E}_{S_{\beta}(f_1)} \left[ 
\beta(f_1(s) - f_2(s)) - \log Z_\beta(f_1) + \log Z_\beta(f_2)
\right]
\right] \\
&= \beta \B{E}_{f_1, f_2 \sim \varrho} \left[ 
\B{E}_{s \sim S_{\beta}(f_1)} \left[ 
f_1(s) - f_2(s)
\right]
\right] \\
&\leq \beta \B{E}_{f \sim \varrho} \left[ 
\B{E}_{s \sim S_{\beta}(f)} \left[ 
w(s)
\right]
\right],
\end{align*}
where the inequality follows from the fact that $f_1(s) - f_2(s) \leq w(s)$.
\end{proof}

\begin{lemma}\label{lem:ts-kl-bound-2}
Let $(\C{X}, \nu)$ be a measure space and let $\C{G}$ be a class of real valued bounded measurable functions over $\C{X}$.
Define the function $w(s) := \sup_{f \in \C{G}} f(s) - \inf_{f \in \C{G}} f(s)$.
Then, for any $f_1, f_2 \in \C{G}$, we have
\begin{align*}
D_{\mathrm{KL}}(\pi_1 \| \pi_2)
+ D_{\mathrm{KL}}(\pi_2 \| \pi_1)
&\leq \beta^2 \left( \B{E}_{s \sim \pi_1} \left[ (w(s))^2 \right] 
+ \B{E}_{s \sim \pi_2} \left[ (w(s))^2 \right] 
\right).
\end{align*}
where $\pi_1 := S_\beta(f_1)$ and $\pi_2 := S_\beta(f_2)$.
Moreover, if $\varrho$ is a probability measure over $\C{G}$, then we have
\begin{align*}
\mathbb{E}_{f_1,f_2 \sim \varrho} D_{\mathrm{KL}}(S_\beta(f_1)\|S_\beta(f_2))
\le
\beta^2
\B{E}_{f \sim \varrho}
\left[
  \B{E}_{s \sim S_\beta(f)}
  \left[w(s)^2\right]
\right].
\end{align*}
\end{lemma}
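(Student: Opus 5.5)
The plan is to compute the symmetrized KL divergence exactly. Then bound it by a covariance of $f_1-f_2$ under the two Gibbs measures, and control that covariance pointwise by $w$.

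\textbf{Step 1: symmetrized KL as a covariance-type quantity.} Write $g=f_1-f_2$. As in the proof of Lemma~\ref{lem:frequentist-kl-bound}, $\log\frac{d\pi_1}{d\pi_2}=\beta g+c$ for a constant $c$. Summing the two directed divergences makes the normalizing constants cancel, giving
\[
D_{\mathrm{KL}}(\pi_1\|\pi_2)+D_{\mathrm{KL}}(\pi_2\|\pi_1)=\beta\bigl(\B{E}_{\pi_1}[g]-\B{E}_{\pi_2}[g]\bigr).
\]

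\textbf{Step 2: interpolation.} Let $\pi_s=S_\beta(f_2+sg)$ for $s\in[0,1]$. All functions are bounded, so we may differentiate under the integral, which gives $\frac{d}{ds}\B{E}_{\pi_s}[g]=\beta\operatorname{Var}_{\pi_s}(g)$. Hence the symmetrized KL equals $\beta^2\int_0^1\operatorname{Var}_{\pi_s}(g)\,ds$. Pointwise $|g(s)|\le w(s)$, because both $f_1,f_2\in\C{G}$. Therefore $\operatorname{Var}_{\pi_s}(g)\le \B{E}_{\pi_s}[w^2]$.

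\textbf{Step 3: the obstacle.} It remains to bound $\int_0^1\B{E}_{\pi_s}[w^2]\,ds$ by $\B{E}_{\pi_1}[w^2]+\B{E}_{\pi_2}[w^2]$ without any exponential factor. The density of $\pi_s$ is proportional to $e^{\beta f_2}\,(e^{\beta g})^{s}$, so it is log-linear in $s$, and I would like a convexity statement. Write $\B{E}_{\pi_s}[h]=N(s)/Z(s)$ with $N(s)=\int h e^{\beta f_2+s\beta g}\,d\nu$. By H\"older, $Z(s)\le Z(0)^{1-s}Z(1)^{s}$ does not give the direction I need. I would instead avoid interpolation: first try a direct split. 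Write
\[
\B{E}_{\pi_1}[g]-\B{E}_{\pi_2}[g]=\B{E}_{\pi_1}[g-m]-\B{E}_{\pi_2}[g-m]
\]
for a suitable constant $m$, and bound each term by $\beta\,\B{E}[w^2]$-type quantities via Lemma~\ref{lem:jensen-gap} style Taylor arguments. If the direct split fails, I would fall back to a cleaner route. Compare each directed KL to a reference Gibbs measure, and use that $\beta(\B{E}_{\pi_1}g-\B{E}_{\pi_2}g)\le\frac12\beta^2\bigl(\B{E}_{\pi_1}+\B{E}_{\pi_2}\bigr)[\cdot]$ via the inequality $(e^{x}-1)x\le \frac12x^2(e^{x}+1)$. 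This inequality holds for all real $x$, since $x\tanh(x/2)\le x^2/2$. The idea is to express the difference as $\iint$ over the product $\pi_1\otimes\pi_2$ of $(g(a)-g(b))$-terms: the symmetrized KL equals
\[
\beta\,\B{E}_{\pi_1\otimes\pi_2}\bigl[g(a)-g(b)\bigr]=\frac{\beta}{Z}\iint (g(a)-g(b))\,e^{\beta f_1(a)+\beta f_2(b)}
\]
with $Z=Z_\beta(f_1)Z_\beta(f_2)$. Symmetrizing in $(a,b)$ gives the integrand $\tfrac12(g(a)-g(b))(e^{\beta(g(a)-g(b))}-1)e^{\beta f_2(a)+\beta f_2(b)}$. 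Applying the $\tanh$ inequality with $x=\beta(g(a)-g(b))$ then bounds this by $\tfrac14\beta^2(g(a)-g(b))^2(\ldots)$. Then $(g(a)-g(b))^2\le 2g(a)^2+2g(b)^2\le 2w(a)^2+2w(b)^2$, and un-symmetrizing yields the required $\beta^2(\B{E}_{\pi_1}[w^2]+\B{E}_{\pi_2}[w^2])$. I expect this product-measure symmetrization to be the real proof, and checking its constants to be the main work.

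\textbf{Step 4: the Bayesian statement.} For $f_1,f_2\sim\varrho$ i.i.d., exchangeability gives
\[
\B{E}\,D_{\mathrm{KL}}(\pi_1\|\pi_2)=\tfrac12\B{E}\bigl[D_{\mathrm{KL}}(\pi_1\|\pi_2)+D_{\mathrm{KL}}(\pi_2\|\pi_1)\bigr].
\]
Applying the first part bounds this by $\tfrac12\beta^2\cdot 2\,\B{E}_{f\sim\varrho}\B{E}_{S_\beta(f)}[w^2]$, which is the second claim.
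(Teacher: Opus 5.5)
Your final argument (the product-measure symmetrization at the end of Step~3, followed by the exchangeability step) is correct and gives exactly the stated constant, but one line needs repair. Swapping $(a,b)$ gives the integrand
\[
\tfrac{1}{2}\bigl(g(a)-g(b)\bigr)\bigl(e^{\beta g(a)}-e^{\beta g(b)}\bigr)e^{\beta f_2(a)+\beta f_2(b)}
=\tfrac{1}{2}\bigl(g(a)-g(b)\bigr)\bigl(e^{x}-1\bigr)e^{\beta f_2(a)+\beta f_1(b)},
\qquad x=\beta\bigl(g(a)-g(b)\bigr).
\]
So the weight attached to $e^{x}-1$ is $e^{\beta f_2(a)+\beta f_1(b)}$, not $e^{\beta f_2(a)+\beta f_2(b)}$. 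Your version is off by the factor $e^{-\beta g(b)}$, is not invariant under the swap, and would not un-symmetrize into $\pi_1$- and $\pi_2$-expectations. With the correct weight, $(e^{x}+1)e^{\beta f_2(a)+\beta f_1(b)}=e^{\beta f_1(a)+\beta f_2(b)}+e^{\beta f_2(a)+\beta f_1(b)}$. Hence $x(e^{x}-1)\le\tfrac{1}{2}x^2(e^{x}+1)$ gives
\[
D_{\mathrm{KL}}(\pi_1\|\pi_2)+D_{\mathrm{KL}}(\pi_2\|\pi_1)
\le\tfrac{\beta^2}{2}\,\mathbb{E}_{A\sim\pi_1,\,B\sim\pi_2}\bigl[(g(A)-g(B))^2\bigr].
\]
The bound $(g(A)-g(B))^2\le 2w(A)^2+2w(B)^2$ then finishes with $\beta^2(\mathbb{E}_{\pi_1}[w^2]+\mathbb{E}_{\pi_2}[w^2])$.

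You were right to abandon the interpolation detour. $\int_0^1\mathbb{E}_{\pi_s}[w^2]\,ds$ can exceed $\mathbb{E}_{\pi_1}[w^2]+\mathbb{E}_{\pi_2}[w^2]$ by an arbitrarily large factor. For example, take three points where $\beta g$ equals $0,L,2L$, choose base weights so the middle point dominates only for intermediate $s$, and make $w$ very large there. So $\operatorname{Var}_{\pi_s}(g)\le\mathbb{E}_{\pi_s}[w^2]$ discards too much. In a write-up, drop that detour and the unspecified ``direct split.''

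The paper instead stays on the single measure $\pi_1$. It takes $g=f_2-f_1$ (the sign is immaterial for the quadratic quantities below), $m_i=\int g\,d\pi_i$, and the log-partition constant $a=\tfrac{1}{\beta}\log\mathbb{E}_{\pi_1}[e^{\beta g}]$. It then:
\begin{itemize}
\item writes the symmetrized divergence as $\int(e^{u}-1)u\,d\pi_1$ with $u=\beta(g-a)$;
\item applies the same elementary inequality to get $\tfrac{\beta^2}{2}F(a)$, where $F(c)=\int(g-c)^2\,d\pi_1+\int(g-c)^2\,d\pi_2$;
\item controls the centering: convexity of the cumulant generating function gives $a\in[m_1,m_2]$, and convexity of $F$ gives $F(a)\le F(m_1)=F(m_2)$.
\end{itemize}
Now $F(m_1)=\operatorname{Var}_{\pi_1}(g)+\operatorname{Var}_{\pi_2}(g)+(m_1-m_2)^2$ is precisely your $\mathbb{E}_{\pi_1\otimes\pi_2}[(g(A)-g(B))^2]$. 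The paper's final step $-2m_1m_2\le m_1^2+m_2^2$ is your $(g(A)-g(B))^2\le 2g(A)^2+2g(B)^2$ in expectation. So the two proofs meet at the same intermediate bound. Your symmetrization cancels the normalizing constants automatically and replaces both convexity arguments with a change of variables. The paper's route keeps the slightly sharper $F(a)$ before relaxing. Your Step~4 matches the paper's exchangeability argument.
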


\begin{proof}
Fix $f_1,f_2 \in \C{G}$ and define
\[
g := f_2 - f_1
,\quad a := \frac{1}{\beta} \log \B{E}_{\pi_1}[ \exp(\beta g)]
,\quad u := \beta(g-a).
\]
Then, as we saw in the proof of Lemma~\ref{lem:frequentist-kl-bound}, we have
\begin{align*}
\log\left( \frac{d \pi_1}{d \pi_2} (s) \right)
&= -\beta g(s) + \log\left( \B{E}_{\pi_1} \left[ \exp(\beta g) \right] \right)
= -\beta g(s) + \beta a = -u.
\end{align*}
We compute
\[
D_{\mathrm{KL}}(\pi_1\|\pi_2) = \mathbb E_{\pi_1} \left[ \log\frac{d\pi_1}{d\pi_2} \right] = -\int u\, d\pi_1,
\qquad
D_{\mathrm{KL}}(\pi_2\|\pi_1) = \mathbb E_{\pi_2} \left[  \log\frac{d\pi_2}{d\pi_1} \right] = \int u\, d\pi_2.
\]
Hence
\[
D_{\mathrm{KL}}(\pi_1\|\pi_2) + D_{\mathrm{KL}}(\pi_2\|\pi_1)
=
\int (e^u - 1) u \, d\pi_1.
\]
Using the inequality
\[
(e^u - 1)u \le \frac{e^u + 1}{2} u^2 \quad \text{for all } u \in \mathbb{R},
\]
we obtain
\[
D_{\mathrm{KL}}(\pi_1\|\pi_2) + D_{\mathrm{KL}}(\pi_2\|\pi_1)
\le
\frac{1}{2} \int (e^u + 1) u^2 \, d\pi_1.
\]
Since $d\pi_2 = e^u d\pi_1$, this becomes
\[
\frac{1}{2} \left( \int u^2 \, d\pi_1 + \int u^2 \, d\pi_2 \right)
=
\frac{\beta^2}{2}
\left(
\int (g-a)^2 \, d\pi_1 + \int (g-a)^2 \, d\pi_2
\right).
\]

Let
\[
m_1 := \int g\, d\pi_1, \qquad m_2 := \int g\, d\pi_2.
\]
Define
\[
\psi(t) := \log \int e^{t g}\, d\pi_1.
\]
Then $\psi$ is convex, $\psi'(0) = m_1$, $\psi'(\beta) = m_2$, and
\[
a = \frac{\psi(\beta) - \psi(0)}{\beta}.
\]
Thus
\[
m_1 \le a \le m_2.
\]

Define
\[
F(c) := \int (g-c)^2 \, d\pi_1 + \int (g-c)^2 \, d\pi_2.
\]
Since $F$ is convex in $c$ and $a \in [m_1, m_2]$,
\[
F(a) \le \max\{F(m_1), F(m_2)\}.
\]
A direct computation gives
\[
F(m_1) = \int g^2 \, d\pi_1 + \int g^2 \, d\pi_2 - 2 m_1 m_2,
\]
and similarly for $F(m_2)$. Hence
\[
F(a) \le \int g^2 \, d\pi_1 + \int g^2 \, d\pi_2 - 2 m_1 m_2.
\]
Using
\[
-2 m_1 m_2 \le m_1^2 + m_2^2
\le \int g^2 \, d\pi_1 + \int g^2 \, d\pi_2,
\]
we obtain
\[
\int (g-a)^2 \, d\pi_1 + \int (g-a)^2 \, d\pi_2
\le
2\left( \int g^2 \, d\pi_1 + \int g^2 \, d\pi_2 \right).
\]

Therefore
\[
D_{\mathrm{KL}}(\pi_1\|\pi_2) + D_{\mathrm{KL}}(\pi_2\|\pi_1)
\le
\beta^2
\left(
\int g^2 \, d\pi_1 + \int g^2 \, d\pi_2
\right).
\]
Substituting $g = f_2 - f_1$,
\begin{align*}
D_{\mathrm{KL}}(\pi_1 \| \pi_2) + D_{\mathrm{KL}}(\pi_2 \| \pi_1)
&\le
\beta^2
\left(
\int (f_1-f_2)^2 \, d\pi_1
+
\int (f_1-f_2)^2 \, d\pi_2
\right) \\
&\le
\beta^2
\left(
\int w^2 \, d\pi_1
+
\int w^2 \, d\pi_2
\right).
\end{align*}

Now take expectation over independent $f_1,f_2 \sim \varrho$. Since $w$ is fixed by $\C{G}$ and both functions have the same law,
\[
\mathbb E_{f_1,f_2}\!\left[\int w^2\,dS_\beta(f_1)\right]
=
\mathbb E_{f_1,f_2}\!\left[\int w^2\,dS_\beta(f_2)\right]
=
\mathbb E_{f\sim\varrho}\!\left[\mathbb E_{s\sim S_\beta(f)}[w(s)^2]\right].
\]
Both width terms therefore reduce to the same expectation under the mixture obtained by drawing $f\sim\varrho$ and then sampling from $S_\beta(f)$. Exchangeability also gives
\[
\mathbb{E}_{f_1,f_2} D_{\mathrm{KL}}(S_\beta(f_1)\|S_\beta(f_2))
=
\mathbb{E}_{f_1,f_2} D_{\mathrm{KL}}(S_\beta(f_2)\|S_\beta(f_1)).
\]
Thus
\[
2\,\mathbb{E}_{f_1,f_2} D_{\mathrm{KL}}(S_\beta(f_1)\|S_\beta(f_2))
\le
2\beta^2 \, \mathbb{E}_{f_1,f_2} \int w^2 \, dS_\beta(f_1)
=
2\beta^2
\B{E}_{f \sim \varrho} \left[
  \B{E}_{s \sim S_{\beta}(f)} \left[ w(s)^2 \right]
\right],
\]
and dividing by $2$ yields the stated bound under the Gibbs mixture, with coefficient $\beta^2$.
\end{proof}

\section{Proof of Theorem~\ref{thm:regret-decom-swc}}
\label{apx:regret-decom-swc}

\begin{proof}
Let $\mathcal E_t = \{f_\theta \in \mathcal F_t\}$. Since
$0\leq L_t\leq g_t\leq U_t\leq C$ and $0\leq f_\theta\leq C$, we have
$|g_t-f_\theta|\leq C$. Thus Lemma~\ref{lem:frequentist-kl-bound}(iii)
bounds either directed instantaneous KL regret by $2C$.

On $\mathcal E_t$, we have $g_t \geq L_t \geq 0$ and $f_\theta \leq U_t \leq C$. Therefore $g_t - f_\theta \geq -C$ and we may use Lemma~\ref{lem:frequentist-kl-bound}(ii) to see that
\begin{align*}
\frac{1}{\beta}
\max\!\left\{
D_{\mathrm{KL}}\big(S_{\beta,t}(g_t)\|S_{\beta,t}(f_\theta)\big),
D_{\mathrm{KL}}\big(S_{\beta,t}(f_\theta)\|S_{\beta,t}(g_t)\big)
\right\}
\leq 
\tfrac{1}{2} \beta \exp(\beta C)
\mathbb E\!\left[
w_{\mathcal F_t}(A_t)^2
\mid H_t,\mathcal A_t
\right].
\end{align*}
Taking expectation and summing over $t=1,\dots,T$ completes the proof.
\end{proof}

\section{Proof of Theorem~\ref{thm:regret-decom-ucs}}
\label{apx:regret-decom-ucs}

\begin{proof}
Let $\mathcal E_t = \{f_\theta \in \mathcal F_t\}$. On $\mathcal E_t$,
optimism and the confidence bounds give
$0 \leq g_t-f_\theta = U_t-f_\theta \leq w_{\mathcal F_t}$ on $\mathcal A_t$.
Applying Lemma~\ref{lem:frequentist-kl-bound} with $f_1=U_t$ and
$f_2=f_\theta$, part~(i) yields the linear-width bound. The nonnegative gap
allows $M=0$ in part~(ii), so $\exp(\beta M)=1$ and the squared-width bound
follows. Thus, on $\mathcal E_t$,
\begin{align*}
\tfrac{1}{\beta}
\max\!\left\{
D_{\mathrm{KL}}\big(S_{\beta,t}(g_t)\|S_{\beta,t}(f_\theta)\big),
D_{\mathrm{KL}}\big(S_{\beta,t}(f_\theta)\|S_{\beta,t}(g_t)\big)
\right\}
&\leq
\mathbb E\!\left[
w_{\mathcal F_t}(A_t) \mid H_t,\mathcal A_t
\right],\\
\tfrac{1}{\beta}
\max\!\left\{
D_{\mathrm{KL}}\big(S_{\beta,t}(g_t)\|S_{\beta,t}(f_\theta)\big),
D_{\mathrm{KL}}\big(S_{\beta,t}(f_\theta)\|S_{\beta,t}(g_t)\big)
\right\}
&\leq
\tfrac{1}{2}\beta
\mathbb E\!\left[
(w_{\mathcal F_t}(A_t))^2 \mid H_t,\mathcal A_t
\right].
\end{align*}
Since $0\leq U_t,f_\theta\leq C$, on $\mathcal E_t^c$ either directed KL
regret is at most $2C$ by Lemma~\ref{lem:frequentist-kl-bound}(iii).
Taking expectations over all randomness and summing over
$t=1,\dots,T$ completes the proof.
\end{proof}

\section{Proof of Theorem~\ref{thm:regret-decom-ts2}}
\label{apx:regret-decom-ts2}

\begin{proof}
Fix $t \geq 1$.
Let $\rho_t^1$ denote the posterior $\rho(\cdot \mid H_t,\mathcal A_t)$.
Then, since TS$^2$ samples from the posterior, conditioned on $(H_t,\mathcal A_t)$ the distribution of $\tilde\theta_t$ is also $\rho_t^1$.
Let $\rho_t^2$ denote $\rho_t^1$ conditioned on $f_\theta \in \mathcal F_t$.
Since $\theta$ and $\tilde\theta_t$ are conditionally i.i.d.\ given $(H_t,\mathcal A_t)$, $\rho_t^2$ is also the conditional law of $\tilde\theta_t$ given $g_t \in \mathcal F_t$.
Let $\mathcal E_t$ and $\mathcal E'_t$ denote the events $f_\theta \in \mathcal F_t$ and $g_t \in \mathcal F_t$, respectively, and define
$\mathcal G_t := \mathcal E_t \cap \mathcal E'_t$.
Given $(H_t,\mathcal A_t,\mathcal G_t)$, the parameters remain independent with common law $\rho_t^2$: each confidence restriction conditions one of the two independent draws. We work on histories with $\mathbb P(\mathcal G_t\mid H_t,\mathcal A_t)>0$; when this conditional probability is zero, the good-event contribution vanishes.
Therefore, using Lemma~\ref{lem:ts-kl-bound-1}, we have
\begin{align*}
\tfrac{1}{\beta}
\mathbb{E}_{\theta, \tilde\theta_t \sim \rho_t^2}
D_{\mathrm{KL}}\big(S_{\beta,t}(g_t)\|S_{\beta,t}(f_\theta)\big)
&\leq 
\mathbb E_{\tilde\theta_t \sim \rho_t^2} \left[
  \mathbb E_{A \sim S_{\beta,t}(g_t)} \left[ w_{\mathcal F_t}(A) \right]
\right] 
=
\mathbb E\!\left[ w_{\mathcal F_t}(A_t) \,\middle\vert\, H_t, \mathcal A_t, \mathcal G_t \right]
\end{align*}
Applying the posterior-averaged part of Lemma~\ref{lem:ts-kl-bound-2} to the function law induced by $\rho_t^2$ gives the squared-width bound. The Gibbs mixture in the lemma samples $\tilde\theta_t\sim\rho_t^2$ and then $A\sim S_{\beta,t}(g_t)$, which is the action law of TS$^2$ conditional on $(H_t,\mathcal A_t,\mathcal G_t)$. Hence
\begin{align*}
\tfrac{1}{\beta}
\mathbb{E}_{\theta, \tilde\theta_t \sim \rho_t^2}
D_{\mathrm{KL}}\big(S_{\beta,t}(g_t)\|S_{\beta,t}(f_\theta)\big)
&\leq 
\beta \mathbb E_{\tilde\theta_t \sim \rho_t^2} \left[
  \mathbb E_{A \sim S_{\beta,t}(g_t)} \left[ (w_{\mathcal F_t}(A))^2 \right]
\right] \\
&=
\beta \mathbb E\!\left[ (w_{\mathcal F_t}(A_t))^2 \,\middle\vert\, H_t, \mathcal A_t, \mathcal G_t \right].
\end{align*}
Exchanging $\theta$ and $\tilde\theta_t$ leaves the expected KL on the left unchanged. The forward-KL bound therefore has the same constants, confidence event $\mathcal G_t$, and width expectations.

Since $g_t$ is sampled from the posterior and $f_\theta$ has the same conditional law given $(H_t,\mathcal A_t)$, we have
\begin{align*}
\mathbb P(\mathcal E_t^c \mid H_t,\mathcal A_t)
=
\mathbb P((\mathcal E'_t)^c \mid H_t,\mathcal A_t).
\end{align*}
Therefore,
\begin{align*}
\mathbb P(\mathcal G_t^c \mid H_t,\mathcal A_t)
&\le
\mathbb P(\mathcal E_t^c \mid H_t,\mathcal A_t)
+
\mathbb P((\mathcal E'_t)^c \mid H_t,\mathcal A_t) 
=
2\,\mathbb P(\mathcal E_t^c \mid H_t,\mathcal A_t).
\end{align*}
Taking expectations yields
\begin{align*}
\mathbb P(\mathcal G_t^c)
\le
2\,\mathbb P(\mathcal E_t^c)
\le
2\,\mathbb P(\mathcal E^c),
\end{align*}
since $\mathcal E^c = \bigcup_{s=1}^T \mathcal E_s^c$ and hence $\mathcal E_t^c \subseteq \mathcal E^c$.

The following argument applies to either KL direction; we display the reverse
direction. On $\mathcal G_t$, the per-round regret is bounded by the
conditional expectation above. Since $g_t$ and $f_\theta$ take values in
$[0,C]$, on $\mathcal G_t^c$ either directed instantaneous KL regret is at most $2C$ by
Lemma~\ref{lem:frequentist-kl-bound}(iii). Therefore,
\begin{align*}
\tfrac{1}{\beta}
\mathbb E\!\left[
D_{\mathrm{KL}}\big(S_{\beta,t}(g_t)\|S_{\beta,t}(f_\theta)\big)
\mid H_t,\mathcal A_t
\right]
&\le
\mathbb E\!\left[
w_{\mathcal F_t}(A_t)\mathbf 1_{\mathcal G_t}
\mid H_t,\mathcal A_t
\right]
+
2C\,\mathbb P(\mathcal G_t^c\mid H_t,\mathcal A_t) \\
&\le
\mathbb E\!\left[
w_{\mathcal F_t}(A_t)
\mid H_t,\mathcal A_t
\right]
+
2C\,\mathbb P(\mathcal G_t^c\mid H_t,\mathcal A_t).
\end{align*}
Similarly,
\begin{align*}
\tfrac{1}{\beta}
\mathbb E\!\left[
D_{\mathrm{KL}}\big(S_{\beta,t}(g_t)\|S_{\beta,t}(f_\theta)\big)
\mid H_t,\mathcal A_t
\right]
&\le
\beta\,
\mathbb E\!\left[
w_{\mathcal F_t}(A_t)^2\mathbf 1_{\mathcal G_t}
\mid H_t,\mathcal A_t
\right]
+
2C\,\mathbb P(\mathcal G_t^c\mid H_t,\mathcal A_t) \\
&\le
\beta\,
\mathbb E\!\left[
w_{\mathcal F_t}(A_t)^2
\mid H_t,\mathcal A_t
\right]
+
2C\,\mathbb P(\mathcal G_t^c\mid H_t,\mathcal A_t).
\end{align*}
Taking expectations over all randomness and summing over $t=1,\dots,T$ completes the proof.
\end{proof}

\section{Bounding width process with squared Eluder coefficient}
\label{apx:width-eluder}

\begin{proof}[Proof of Proposition~\ref{prop:width-eluder}]
The geometry of these confidence sets directly constrains pairwise disagreements.
Indeed, for any $f,g\in\mathcal F_t$,
\[
\|f-g\|_{2,E_t}
\le
\|f-\hat f_t^{\mathrm{LS}}\|_{2,E_t}
+
\|g-\hat f_t^{\mathrm{LS}}\|_{2,E_t}
\le
2\sqrt{\gamma_t},
\]
and therefore
\[
\sum_{s<t}(f(A_s)-g(A_s))^2
\le
4\gamma_t.
\]
Hence,
\[
|f(a)-g(a)|
=
\frac{|f(a)-g(a)|}
{\sqrt{\epsilon+\sum_{s<t}(f(A_s)-g(A_s))^2}}
\sqrt{\epsilon+\sum_{s<t}(f(A_s)-g(A_s))^2}
\]
\[
\le
\sqrt{\epsilon+4\gamma_t}
\frac{|f(a)-g(a)|}
{\sqrt{\epsilon+\sum_{s<t}(f(A_s)-g(A_s))^2}}.
\]
Taking the supremum over $f,g\in\mathcal F_t$ on the left-hand side and over $f,g\in\mathcal F$ on the right-hand side yields
\[
w_{\mathcal F_t}(a)
\le
\sqrt{\epsilon+4\gamma_t} \sigma_t(a)
,
\]
as claimed.
To prove the second claim, we note that $\gamma_t$ is increasing and, for every realized action sequence $A_1,\dots,A_T$, the definition directly implies that
\[
\sum_{t=1}^T \sigma_t(A_t)^2
\le
\operatorname{EC}^{\operatorname{sq}}(\epsilon,\mathcal F,T),
\]
Hence, by Cauchy–Schwarz inequality,
\[
\sum_{t=1}^T \sigma_t(A_t)
\le
\sqrt{\left(\sum_{t=1}^T \sigma_t(A_t)^2\right) T}
\le
\sqrt{\operatorname{EC}^{\operatorname{sq}}(\epsilon,\mathcal F,T) T}.
\qedhere
\]
\end{proof}

\section{Eluder dimension vs. squared eluder coefficient}
\label{apx:eluder}

\begin{lemma}
\label{lem:ecsq-eluder}
Assume $\mathcal F \subseteq \{f:\mathcal A\to[0,C]\}$.
Fix $\epsilon > 0$, $\theta \in (0, 1/2]$, and define
\[
    M
    :=
    \lceil \theta \log_2(T) \rceil
    ,
    \quad
    \alpha_k := 2^{-k}C,
\]
for $k=0,\dots,M+1$.
Then, for $\epsilon > 0$ and $T \geq \max\{ 32, \epsilon C^{-2} \}$, we have
\[
    \operatorname{EC}^{\operatorname{sq}}(\epsilon,\mathcal F,T)
    \le
    \frac{C^2}{\epsilon} T^{1-2\theta}
    +
    \sum_{k=0}^{M}
    \left(
    36 \log_2(T)
    +
    \frac{4\alpha_{k+1}^2}{\epsilon}
    \right)
    \dim_E(\mathcal F,\alpha_{k+1}).
\]
\end{lemma}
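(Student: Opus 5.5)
We fix an arbitrary action sequence $a_1,\dots,a_T$ and pairs $(f_t,g_t)$ in $\mathcal F$, and write $\Delta_t := |f_t(a_t)-g_t(a_t)|\in[0,C]$ and $S_t := \sum_{s<t}(f_t(a_s)-g_t(a_s))^2$. The aim is to bound $\sum_t \Delta_t^2/(\epsilon+S_t)$ by splitting the rounds dyadically according to the size of $\Delta_t$. This is a squared-ratio version of the peeling argument in Russo--Van Roy's width-sum lemma.

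\textbf{Small scale.} Rounds with $\Delta_t \le \alpha_{M+1} = 2^{-M-1}C \le C T^{-\theta}$ each contribute at most $\Delta_t^2/\epsilon \le C^2 T^{-2\theta}/\epsilon$. Summed over at most $T$ rounds, they give the first term $\frac{C^2}{\epsilon}T^{1-2\theta}$.

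\textbf{Dyadic levels.} The remaining rounds fall into levels $k=0,\dots,M$ with $\Delta_t\in(\alpha_{k+1},\alpha_k]$. For each level $k$ we show that the level contributes at most $(36\log_2 T + 4\alpha_{k+1}^2/\epsilon)\,d_k$, where $d_k:=\dim_E(\mathcal F,\alpha_{k+1})$. The proof at a fixed level has three steps.

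\begin{enumerate}
\item \emph{Counting small-$S_t$ rounds.} We recall the standard counting fact from \citet{russo13_eluder_dimen_sampl_compl_optim_explor}, Proposition 3. If $\Delta_t>\alpha$ and $a_t$ is $\alpha$-dependent on $m$ disjoint subsequences of the past, then $S_t > m\alpha^2$. It follows that among level-$k$ rounds, the number $N_j$ of rounds with $S_t \le j\alpha_{k+1}^2$ is at most $(j+1)d_k$, up to the usual constant.
\item \emph{Splitting the contribution.} Each level-$k$ round contributes $\Delta_t^2/(\epsilon+S_t)\le \alpha_k^2/(\epsilon+S_t) = 4\alpha_{k+1}^2/(\epsilon+S_t)$. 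Rounds with $S_t<\alpha_{k+1}^2$ are bounded using $\epsilon+S_t\ge\epsilon$. There are at most $d_k$ of them (the $j=0$ count), which gives the term $4\alpha_{k+1}^2 d_k/\epsilon$.
\item \emph{Summing over shells.} The other rounds are placed in shells $S_t\in(j\alpha_{k+1}^2,(j+1)\alpha_{k+1}^2]$ for $j\ge1$. Each contributes at most $4/j$. By the counting bound, the $i$-th smallest $S_t$ among these rounds is at least roughly $(i/d_k-1)\alpha_{k+1}^2$. A summation-by-parts then gives $\sum 4/j$ over at most $T$ terms of order $4d_k\sum_{j\le T}1/j \lesssim d_k\log T$. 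Tracking constants, with the $(j+1)$ versus $j$ slack and the conversion of the natural log to $\log_2$, should yield the factor $36\log_2 T$. The assumption $T\ge 32$ is used to absorb additive constants into the logarithm.
\end{enumerate}

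Summing the level bounds over $k=0,\dots,M$ and adding the small-scale term gives the claim. The condition $T\ge\epsilon C^{-2}$ is presumably used only to ensure that the harmonic sum range is at most $T$, or that the small-scale threshold is meaningful.

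The main obstacle is the counting step with the correct multiplicity. The eluder property only forces $S_t$ to be large relative to $\alpha$ when the function pair is fixed, but here $(f_t,g_t)$ varies with $t$. The plan is to apply the disjoint-subsequence argument separately for each round $t$ with that round's pair. This is legitimate because dependence is a statement about all pairs, and the pigeonhole construction of disjoint subsequences depends only on the actions. The risk lies in making the constant $36$ come out exactly. If it does not, the constant would be loosened, but the structure of the bound is unaffected.
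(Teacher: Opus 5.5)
Your proposal is correct, and its outer structure matches the paper's proof: the small-scale term, the levels $\alpha_{k+1}<\Delta_t\le\alpha_k$, and the low-denominator rounds counted by $d_k$ with weight $4\alpha_{k+1}^2/\epsilon$. The difference is in how you sum the remaining rounds of a level.

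\textbf{The paper's route.} It peels the denominator dyadically, $2^\ell\alpha_{k+1}^2\le\epsilon+S_t<2^{\ell+1}\alpha_{k+1}^2$. It bounds each shell by $(2^{\ell+1}+1)d_k$ rounds via a direct greedy partition into $\alpha_{k+1}$-independent blocks: each new point joins a block on which its own pair's past disagreement is at most $\alpha_{k+1}^2$, and such a block exists because the total is below $2^{\ell+1}\alpha_{k+1}^2$. So each shell contributes at most $12d_k$. It then uses $S_t\le TC^2$, $T\ge\epsilon C^{-2}$, $T\ge 32$ and $M<\tfrac12\log_2T+1$ to show that only about $3\log_2 T$ shells can be nonempty, which gives $36=12\cdot 3$.

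\textbf{Your route.} You use unit-width shells, i.e.\ order statistics of $S_t$, together with the Russo--Van Roy disjoint-subsequence pigeonhole. This gives $\#\{t \text{ in level } k: S_t\le K\alpha_{k+1}^2\}\le Kd_k$ for every integer $K\ge1$. Hence the $i$-th smallest $S_t$ exceeds $K\alpha_{k+1}^2$ once $i>Kd_k$. The first $2d_k$ non-low rounds contribute at most $4$ each, and each later block of $d_k$ rounds contributes at most $4d_k/K$. Summing over at most $T$ rounds, the non-low rounds contribute at most $(8+4\ln T)d_k$. Since $8+4\ln T\le 36\log_2T$ for all $T\ge2$, the constant you were worried about comes out with room to spare.

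\textbf{What each buys.} Your route is scale-free: the harmonic sum is indexed by rounds rather than by the magnitude of $S_t$. So it needs neither $T\ge32$ nor $T\ge\epsilon C^{-2}$; in the paper these hypotheses are used to count the dyadic shells, not for the purposes you guessed. The paper's route is more elementary, since each shell is handled by a one-line partition argument, at the price of that shell count.

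One cosmetic fix: your shells $(j\alpha_{k+1}^2,(j+1)\alpha_{k+1}^2]$ together with $\{S_t<\alpha_{k+1}^2\}$ miss the case $S_t=\alpha_{k+1}^2$. Use half-open shells $[j\alpha_{k+1}^2,(j+1)\alpha_{k+1}^2)$ instead.
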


\begin{proof}
Fix an arbitrary action sequence $a_1,\dots,a_T$ and arbitrary function pairs
$(f_t,g_t)_{t=1}^T$ with $f_t,g_t\in\mathcal F$.

Define
\[
    \alpha_{\min} := C T^{-\theta} \in (0, C)
    ,\quad
    \Delta_t(a):=f_t(a)-g_t(a)
    ,\quad
    P_t:=\sum_{s<t}\Delta_t(a_s)^2.
\]
We need to bound
\[
    S
    :=
    \sum_{t=1}^T
    \frac{\Delta_t(a_t)^2}{\epsilon+P_t}.
\]

First consider rounds with small current disagreement:
\[
    \mathcal S_0
    :=
    \{t:|\Delta_t(a_t)|\le \alpha_{\min}\}.
\]
For these rounds,
\[
    \frac{\Delta_t(a_t)^2}{\epsilon+P_t}
    \le
    \frac{\alpha_{\min}^2}{\epsilon}.
\]
Therefore,
\[
    \sum_{t\in\mathcal S_0}
    \frac{\Delta_t(a_t)^2}{\epsilon+P_t}
    \le 
    \frac{T \alpha_{\min}^2}{\epsilon}
    = \frac{C^2}{\epsilon} T^{1-2\theta}.
\]

It remains to control the rounds with
$|\Delta_t(a_t)|>\alpha_{\min}$.
For $k=0,\dots,M$, define
\[
    \mathcal S_k
    :=
    \left\{
        t:
        \alpha_{k+1}
        <
        |\Delta_t(a_t)|
        \le
        \alpha_k
    \right\}.
\]
The sets $\mathcal S_k$ cover all remaining rounds.

Fix $k$. We further decompose $\mathcal S_k$ according to the size of the past disagreement.
Define the low-denominator set
\[
    \mathcal S_{k,-}
    :=
    \left\{
        t\in\mathcal S_k:
        \epsilon+P_t<\alpha_{k+1}^2
    \right\}.
\]
For $\ell\ge0$, define
\[
    \mathcal S_{k,\ell}
    :=
    \left\{
        t\in\mathcal S_k:
        2^\ell \alpha_{k+1}^2
        \le
        \epsilon+P_t
        <
        2^{\ell+1}\alpha_{k+1}^2
    \right\}.
\]

We first control $\mathcal S_{k,-}$.  Let
$t_1<\cdots<t_m$ be its elements.  For every $j\leq m$,
\[
    \sum_{i<j}
    \Delta_{t_j}(a_{t_i})^2
    \leq
    P_{t_j}
    <
    \alpha_{k+1}^2,
\]
while
\[
    |\Delta_{t_j}(a_{t_j})|
    >
    \alpha_{k+1}.
\]
Thus $a_{t_1},\dots,a_{t_m}$ is an
$\alpha_{k+1}$-eluder-independent sequence, and hence
\[
    |\mathcal S_{k,-}|
    \leq
    \dim_E(\mathcal F,\alpha_{k+1}).
\]
Moreover, for $t\in\mathcal S_{k,-}$,
\[
    \frac{\Delta_t(a_t)^2}{\epsilon+P_t}
    \leq
    \frac{\alpha_k^2}{\epsilon}
    =
    \frac{4\alpha_{k+1}^2}{\epsilon}.
\]
Therefore,
\[
    \sum_{t\in\mathcal S_{k,-}}
    \frac{\Delta_t(a_t)^2}{\epsilon+P_t}
    \leq
    \frac{4\alpha_{k+1}^2}{\epsilon}
    \dim_E(\mathcal F,\alpha_{k+1}).
\]

Since $f_t,g_t\in[0,C]$, we have $|\Delta_t(a)|\le C$ and hence
\[
    P_t\le TC^2.
\]
Therefore $\mathcal S_{k,\ell}$ is empty unless
\[
2^\ell \alpha_{k+1}^2 \leq \epsilon + TC^2,
\]
which implies that
\[
2 TC^2
\ge
\epsilon + TC^2
\ge 
2^\ell \alpha_{k+1}^2
= 2^{\ell - 2(k+1)} C^2
\ge 2^{\ell - 2(M+1)} C^2.
\]
Hence, using $T \geq 32$ and $M < \theta \log_2(T) + 1 \leq \tfrac{1}{2} \log_2(T) + 1$, we see that
\[
2^{\ell}
\le
2^{2 M + 3} T
\le
2^{\log_2(T) + 5} T
= 
32 T^{2}
\le 
T^{3}.
\]
Thus, for each $k$, no more than
\[
    L_k
    :=
    3 \log_2(T)
\]
values of $\ell$ can occur.

We now bound the size of $\mathcal S_{k,\ell}$ using only the ordinary eluder dimension.
Let
\[
    t_1<t_2<\cdots<t_m
\]
be the elements of $\mathcal S_{k,\ell}$.

For each $j\le m$, since $t_1,\dots,t_{j-1}<t_j$, we have
\[
    \sum_{i<j}\Delta_{t_j}(a_{t_i})^2
    \le
    \sum_{s<t_j}\Delta_{t_j}(a_s)^2
    =
    P_{t_j}
    <
    2^{\ell+1}\alpha_{k+1}^2.
\]
Also, since $t_j\in\mathcal S_k$,
\[
    |\Delta_{t_j}(a_{t_j})|>\alpha_{k+1}.
\]

We claim that the sequence
$a_{t_1},\dots,a_{t_m}$ can be partitioned into at most
\[
    q_\ell := 2^{\ell+1}+1
\]
subsequences, each of which is $\alpha_{k+1}$-eluder independent.

To see this, assign the points greedily to $q_\ell$ blocks. Suppose we are assigning $a_{t_j}$. For each block $B$, let
\[
    R_B(j)
    :=
    \sum_{i<j:\, a_{t_i}\in B}
    \Delta_{t_j}(a_{t_i})^2.
\]
If every block satisfied $R_B(j)>\alpha_{k+1}^2$, then summing over all blocks would give
\[
    \sum_{i<j}\Delta_{t_j}(a_{t_i})^2
    >
    q_\ell \alpha_{k+1}^2
    >
    2^{\ell+1}\alpha_{k+1}^2,
\]
contradicting the bound above. Hence there exists a block $B$ with
\[
    R_B(j)\le \alpha_{k+1}^2.
\]
Assign $a_{t_j}$ to such a block.

Now consider any one block. When a point $a_{t_j}$ is assigned to this block, we have
\[
    \sum_{i<j:\,a_{t_i}\in B}
    (f_{t_j}(a_{t_i})-g_{t_j}(a_{t_i}))^2
    \le
    \alpha_{k+1}^2,
\]
while
\[
    |f_{t_j}(a_{t_j})-g_{t_j}(a_{t_j})|
    =
    |\Delta_{t_j}(a_{t_j})|
    >
    \alpha_{k+1}.
\]
Therefore, each point in the block is $\alpha_{k+1}$-independent of its predecessors in that block. Thus the length of each block is at most
\[
    \dim_E(\mathcal F,\alpha_{k+1}).
\]
Since there are at most $q_\ell$ blocks,
\[
    |\mathcal S_{k,\ell}|
    \le
    q_\ell \dim_E(\mathcal F,\alpha_{k+1})
    \le
    (2^{\ell+1}+1) \dim_E(\mathcal F,\alpha_{k+1})
\]

For $t\in\mathcal S_{k,\ell}$,
\[
    \Delta_t(a_t)^2
    \le
    \alpha_k^2
    =
    4\alpha_{k+1}^2,
\]
and
\[
    \epsilon+P_t
    \ge
    2^\ell \alpha_{k+1}^2.
\]
Hence
\[
    \frac{\Delta_t(a_t)^2}{\epsilon+P_t}
    \le
    \frac{4}{2^\ell}.
\]
Therefore,
\[
    \sum_{t\in\mathcal S_{k,\ell}}
    \frac{\Delta_t(a_t)^2}{\epsilon+P_t}
    \le
    \frac{4}{2^\ell}
    |\mathcal S_{k,\ell}|
    \le
    12 \,
    \dim_E(\mathcal F,\alpha_{k+1}).
\]

The sets $\mathcal S_{k,-}$ and
$\mathcal S_{k,\ell}$, $\ell=0,\dots,\lfloor L_k\rfloor$,
partition $\mathcal S_k$.  Combining the low-denominator bound with the
sum over the dyadic levels gives
\begin{align*}
    \sum_{t\in\mathcal S_k}
    \frac{\Delta_t(a_t)^2}{\epsilon+P_t}
    &\le
    \left(
    36 \log_2(T)
    +
    \frac{4\alpha_{k+1}^2}{\epsilon}
    \right)
    \dim_E(\mathcal F,\alpha_{k+1}).
\end{align*}

Finally, summing over $k=0,\dots,M$ and adding the contribution of $\mathcal S_0$ yields
\[
    S
    \le
    \frac{C^2}{\epsilon} T^{1-2\theta}
    +
    \sum_{k=0}^{M}
    \left(
    36 \log_2(T)
    +
    \frac{4\alpha_{k+1}^2}{\epsilon}
    \right)
    \dim_E(\mathcal F,\alpha_{k+1}).
\]

Since the action sequence and function-pair sequence were arbitrary, the same bound holds for
$\operatorname{EC}^{\operatorname{sq}}(\epsilon,\mathcal F,T)$.

Since $k\geq0$, we have $\alpha_{k+1}\leq C/2$, and therefore
\[
    \frac{4\alpha_{k+1}^2}{\epsilon}
    \leq
    \frac{C^2}{\epsilon}.
\]
Moreover, if $\alpha_{k+1}^2\leq\epsilon$, then
$\mathcal S_{k,-}=\emptyset$, so the corresponding low-denominator term
vanishes entirely.
\end{proof}

\begin{corollary}
\label{cor:ecsq-vs-eluder-extended}
Suppose that for every $\varepsilon>0$,
\[
\dim_E(\mathcal F,\varepsilon)
\le
c(T)\,\varepsilon^{-a}\log_2(1/\varepsilon)^b
\]
for some constants $a,b\ge0$ and some non-negative function $c$.
Then, for all $\epsilon > 0$ and $T \ge \max\{32, 16 C^{-2}, \epsilon C^{-2}\}$, the following bounds hold.
\begin{enumerate}
\item[(i)] If $a>0$, then
\begin{align*}
\operatorname{EC}^{\operatorname{sq}}(\epsilon,\mathcal F,T)
&\le
\frac{C^2}{\epsilon}T^{a/(a+2)} \\
&\quad+
c(T)
\frac{2^{3a}}{2^a-1}
C^{-a}
T^{a/(a+2)}
\log_2(T)^b
\left(
36\log_2(T)
+
\frac{C^2}{\epsilon}
\right) \\
&= \tilde{O}\left(
\left(
\frac{1}{\epsilon}
+
c(T)\left(1+\frac{1}{\epsilon}\right)
\right)
T^{a/(a+2)}
\right).
\end{align*}

\item[(ii)]  If $a=0$, then
\begin{align*}
\operatorname{EC}^{\operatorname{sq}}(\epsilon,\mathcal F,T)
&\le
\frac{C^2}{\epsilon}
+
36 c(T) \log_2(T)^{b+2}
+
\frac{C^2}{\epsilon}
c(T)\log_2(T)^{b+1} \\
&= \tilde{O}\left(
\frac{1}{\epsilon}
+
c(T)\left(1+\frac{1}{\epsilon}\right)
\right).
\end{align*}
\end{enumerate}
\end{corollary}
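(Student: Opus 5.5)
The plan is to apply Lemma~\ref{lem:ecsq-eluder} with the exponent $\theta$ chosen to balance the truncation term $\frac{C^2}{\epsilon}T^{1-2\theta}$ against the dyadic sum. Write the dyadic sum as $\Sigma$. Since $\alpha_{k+1}\le C/2$, the proof of the lemma gives $\frac{4\alpha_{k+1}^2}{\epsilon}\le \frac{C^2}{\epsilon}$, so
\[
\Sigma\le\Bigl(36\log_2 T+\tfrac{C^2}{\epsilon}\Bigr)\sum_{k=0}^{M}\dim_E(\mathcal F,\alpha_{k+1}).
\]
Everything then reduces to bounding $\sum_{k=0}^M \dim_E(\mathcal F,\alpha_{k+1})$ using the assumed polynomial-logarithmic growth.

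Substituting $\varepsilon=\alpha_{k+1}=2^{-(k+1)}C$ into the hypothesis gives
\[
\dim_E(\mathcal F,\alpha_{k+1})\le c(T)\,C^{-a}2^{a(k+1)}\log_2(2^{k+1}/C)^b .
\]
For the logarithmic factor, the aim is $\log_2(2^{k+1}/C)\le \log_2 T$ for all $k\le M$. We have $2^{k+1}\le 2^{M+1}\le 4T^{\theta}\le 4\sqrt T$. The condition $T\ge 16C^{-2}$ gives $1/C\le \sqrt T/4$, so $2^{k+1}/C\le T$; this is exactly the role of that assumption. If $2^{k+1}/C<1$ the logarithm is negative, and I would note that the hypothesis should then be read with $\log$ clipped at zero, or otherwise handled as a harmless edge case.

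\emph{Case (i), $a>0$.} The geometric sum is
\[
\sum_{k=0}^M 2^{a(k+1)}\le \frac{2^{a(M+2)}}{2^a-1}.
\]
Since $M\le\theta\log_2T+1$, this gives $2^{a(M+2)}\le 2^{3a}T^{a\theta}$. Now choose $\theta=1/(a+2)$, which lies in $(0,1/2]$ because $a>0$. Then $1-2\theta=a/(a+2)=a\theta$, so both the truncation term and $\Sigma$ scale as $T^{a/(a+2)}$. Collecting constants yields exactly
\[
c(T)\frac{2^{3a}}{2^a-1}C^{-a}T^{a/(a+2)}\log_2(T)^b\Bigl(36\log_2T+\tfrac{C^2}{\epsilon}\Bigr),
\]
and the $\tilde O$ form follows.

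\emph{Case (ii), $a=0$.} The dimension terms are at most $c(T)\log_2(T)^b$ each, and there are $M+1$ of them. Taking $\theta=1/2$ makes the truncation term $C^2/\epsilon$. It remains to show $M+1\le\log_2 T$ when $T\ge32$: indeed $M\le \tfrac12\log_2T+1$, so $M+1\le\tfrac12\log_2T+2\le\log_2T$. The bound then matches (ii) up to the stated powers of $\log_2 T$.

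The remaining hypothesis of the lemma, $T\ge\max\{32,\epsilon C^{-2}\}$, is assumed directly in the corollary. The main obstacle is the bookkeeping in the exponent and logarithmic factors, specifically keeping the constant at $2^{3a}/(2^a-1)$ and the log bound at $\log_2 T$. I would verify these carefully rather than worry about any conceptual step.
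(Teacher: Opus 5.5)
Your proposal is correct and follows the paper's proof essentially step for step: the same choices $\theta=1/(a+2)$ and $\theta=1/2$ in Lemma~\ref{lem:ecsq-eluder}, the bound $4\alpha_{k+1}^2/\epsilon\le C^2/\epsilon$, the use of $T\ge 16C^{-2}$ to obtain $\log_2(2^{k+1}/C)\le\log_2 T$, the geometric sum giving the constant $2^{3a}/(2^a-1)$, and $M+1\le\log_2 T$ for case (ii). Your caveat that the logarithm becomes negative when $\alpha_{k+1}>1$ (possible only if $C>2$) is an edge case the paper's proof also passes over silently, and reading the hypothesis with a clipped logarithm is a reasonable way to close it.
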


\begin{proof}
By Lemma~\ref{lem:ecsq-eluder}, for any $0 < \theta \leq 1/2$, with
\[
M
= \lceil \theta\log_2(T)\rceil
\leq \left\lceil \frac{1}{2}\log_2(T) \right\rceil
< \frac{1}{2}\log_2(T) + 1
,
\qquad
\alpha_k=2^{-k}C,
\]
we have
\[
\operatorname{EC}^{\operatorname{sq}}(\epsilon,\mathcal F,T)
\le
\frac{C^2}{\epsilon}T^{1-2\theta}
+
\sum_{k=0}^{M}
\left(
36\log_2(T)
+
\frac{4\alpha_{k+1}^2}{\epsilon}
\right)
\dim_E(\mathcal F,\alpha_{k+1}).
\]

First suppose $a>0$ and choose
$\theta=\frac{1}{a+2}$.
Then
$1-2\theta=\frac{a}{a+2}$,
so the first term becomes
\[
\frac{C^2}{\epsilon}T^{a/(a+2)}.
\]

Next, since
\[
\alpha_{k+1}=2^{-(k+1)}C,
\]
the assumed Eluder-dimension growth gives
\[
\dim_E(\mathcal F,\alpha_{k+1})
\le
c(T) \,2^{a(k+1)}C^{-a}
\left(k+1+\log_2(1/C)\right)^b.
\]

We have $T \geq 16 C^{-2}$.
Therefore, for $k\le M$,
\[
k+1+\log_2(1/C)
\le
M+1+\log_2(1/C)
<
\frac{1}{2}\log_2(T) + 2 + \log_2(1/C)
\le
\log_2(T).
\]
Hence
\[
\sum_{k=0}^{M}
\dim_E(\mathcal F,\alpha_{k+1})
\le
c(T) C^{-a}
\log_2(T)^b
\sum_{k=0}^{M}2^{a(k+1)}.
\]

The geometric sum satisfies
\[
\sum_{k=0}^{M}2^{a(k+1)}
=
2^a\frac{2^{a(M+1)}-1}{2^a-1}
\le
\frac{2^a}{2^a-1}2^{a(M+1)}.
\]

Since
\[
M=\left\lceil \frac{1}{a+2}\log_2(T)\right\rceil,
\]
we have
\[
M+1
\le
\frac{1}{a+2}\log_2(T)+2,
\]
and therefore
\[
2^{a(M+1)}
\le
2^{2a}T^{a/(a+2)}.
\]

Combining these inequalities,
\[
\sum_{k=0}^{M}
\dim_E(\mathcal F,\alpha_{k+1})
\le
c(T) 
\frac{2^{3a}}{2^a-1}
C^{-a}
T^{a/(a+2)}
\log_2(T)^b.
\]
Since $4\alpha_{k+1}^2\leq C^2$, we also have
\[
4\sum_{k=0}^{M}
\alpha_{k+1}^2
\dim_E(\mathcal F,\alpha_{k+1})
\leq
C^2
\sum_{k=0}^{M}
\dim_E(\mathcal F,\alpha_{k+1}).
\]
Substituting this into Lemma~\ref{lem:ecsq-eluder} proves the $a>0$ bound.

Now suppose $a=0$ and choose
$\theta=\frac12$.
We have
\[
\dim_E(\mathcal F,\alpha_{k+1})
\le
c(T) 
\left(k+1+\log_2(1/C)\right)^b.
\]
As before,
\[
k+1+\log_2(1/C)
<
\log_2(T).
\]
Therefore,
\[
\sum_{k=0}^{M}
\dim_E(\mathcal F,\alpha_{k+1})
\le
c(T) (M+1)
\log_2(T)^b
<
c(T) 
\log_2(T)^{b+1}.
\]
Again using $4\alpha_{k+1}^2\leq C^2$ gives
\[
4\sum_{k=0}^{M}
\alpha_{k+1}^2
\dim_E(\mathcal F,\alpha_{k+1})
<
C^2 c(T)
\log_2(T)^{b+1}.
\]
Substituting this into Lemma~\ref{lem:ecsq-eluder} gives
\[
\operatorname{EC}^{\operatorname{sq}}(\epsilon,\mathcal F,T)
\le
\frac{C^2}{\epsilon}
+
36 c(T) 
\log_2(T)^{b+2}
+
\frac{C^2}{\epsilon}
c(T)\log_2(T)^{b+1}.
\]
This proves the claim.
\end{proof}

\clearpage

\section{Extended Comparison and Algorithms}

\begin{table}[h!]
\centering
\small
\renewcommand{\arraystretch}{1.15}
\begin{tabular}{@{}lll@{}}
\toprule
 & \textbf{Bandit Optimization} & \textbf{Tilted Sampling} \\
\midrule
Target              & $\arg\max_{a \in \mathcal{A}_t} f_\theta(a)$
                    & $S_{\beta,t}(f_\theta) \propto \exp(\beta f_\theta)\,\mu_t$ \\
Learner's choice    & action $A_t \in \mathcal{A}_t$
                    & distribution $A_t \sim S_{\beta,t}(g_t)$ \\
Performance metric  & $f_\theta(A_t^\star) - f_\theta(A_t)$
                    & $\tfrac{1}{\beta} D_{\mathrm{KL}}(S_{\beta,t}(g_t)\,\|\,S_{\beta,t}(f_\theta))$ \\
Geometry            & non-smooth (hard $\arg\max$)
                    & smooth (KL / Fisher) \\
Uncertainty order   & first-order $w_{\mathcal{F}_t}(A_t)$
                    & second-order $w_{\mathcal{F}_t}(A_t)^2$ \\
Width process       & $\sum_{t=1}^{T} w_{\mathcal{F}_t}(A_t)$
                    & $\sum_{t=1}^{T} w_{\mathcal{F}_t}(A_t)^2$ \\
Typical scaling     & $\widetilde{\mathcal{O}}(\sqrt{T})$
                    & $\widetilde{\mathcal{O}}(\log T)$ \quad (fixed $\beta$) \\
Optimistic algo.    & UCB
                    & UCS \\
Bayesian algo.      & TS
                    & TS$^2$ \\
Limit as $\beta \to \infty$
                    & ---
                    & recovers UCB and TS \\
\bottomrule
\end{tabular}
\vspace{1mm}
\caption{\textbf{Optimization vs.\ sampling.}
KL geometry smooths the objective, converting first-order uncertainty in
classical bandit optimization into second-order regret in tilted sampling.
The Gibbs operator $S_{\beta,t}$ interpolates between the two regimes:
as $\beta\to\infty$ the sampling targets concentrate on maximizers and
tilted sampling recovers bandit optimization under the regularity conditions discussed in Appendix~\ref{app:rl-limit-sampling}.}
\label{tbl:comparison-2}
\end{table}

\begin{algorithm2e}[h!]
\DontPrintSemicolon
\SetKwInOut{Input}{Input}
\SetKwInOut{Output}{Output}
\caption{Upper Confidence Sampling (UCS)}
\label{alg:ucs}
\Input{horizon $T$; inverse temperature $\beta>0$; base measure $\mu$;
       confidence-set rule $\mathcal{F}_{(\cdot)}$.}
Initialize history $H_1 \gets \emptyset$\;
\For{$t = 1, 2, \dots, T$}{
   Observe available action set $\mathcal{A}_t$\;
   Form conditioned base measure $\mu_t(\cdot) \gets \mu(\cdot \mid \mathcal{A}_t)$\;
   Construct confidence set $\mathcal{F}_t \gets \mathcal{F}_t(H_t,\mathcal{A}_t)$\;
   Compute upper envelope
   $\;U_t(a) \gets \sup_{f\in\mathcal{F}_t} f(a)\;$ for $a\in\mathcal{A}_t$\;
   Sample action
   $A_t \sim S_{\beta,t}(U_t)
   \;\propto\; \exp\!\bigl(\beta\, U_t(a)\bigr)\,\mu_t(da)$\;
   Observe reward $Y_t = f_\theta(A_t) + \varepsilon_t$\;
   Update history $H_{t+1} \gets H_t \cup \{(\mathcal{A}_t, A_t, Y_t)\}$\;
}
\end{algorithm2e}

\begin{algorithm2e}[h!]
\DontPrintSemicolon
\SetKwInOut{Input}{Input}
\SetKwInOut{Output}{Output}
\caption{Thompson Sampling for Sampling (TS$^2$)}
\label{alg:ts2}
\Input{horizon $T$; inverse temperature $\beta>0$; base measure $\mu$;
       prior $\rho$ on $\theta\in\Omega$.}
Initialize history $H_1 \gets \emptyset$\;
\For{$t = 1, 2, \dots, T$}{
   Observe available action set $\mathcal{A}_t$\;
   Form conditioned base measure $\mu_t(\cdot) \gets \mu(\cdot \mid \mathcal{A}_t)$\;
   Sample parameter from posterior
   $\;\tilde\theta_t \sim \rho(\,\cdot \mid H_t, \mathcal{A}_t)\;$\;
   Set $g_t \gets f_{\tilde\theta_t}$\;
   Sample action
   $A_t \sim S_{\beta,t}(f_{\tilde\theta_t})
   \;\propto\; \exp\!\bigl(\beta\, f_{\tilde\theta_t}(a)\bigr)\,\mu_t(da)$\;
   Observe reward $Y_t = f_\theta(A_t) + \varepsilon_t$\;
   Update history $H_{t+1} \gets H_t \cup \{(\mathcal{A}_t, A_t, Y_t)\}$\;
}
\end{algorithm2e}
 
\end{document}